\title{Rediscovering Argumentation Principles Utilizing Collective Attacks}
\author{%
	Wolfgang Dvo\v{r}\'ak$^1$\and
	Matthias K\"{o}nig$^1$\and
	Markus Ulbricht$^2$\and
	Stefan Woltran$^1$\\
	$^1$TU Wien, Institute of Logic and Computation\\
	$^2$Leipzig University, Department of Computer Science\\
	\{dvorak, mkoenig, woltran\}@dbai.tuwien.ac.at\\
	mulbricht@informatik.uni-leipzig.de
}
\date{}
\newcommand{\ie}{i.e.\ }
\newcommand{\TODO}[1]{}
\newcommand{\cm}{\ding{51}}
\newcommand{\xm}{\ding{55}}
\tikzstyle{arg}=[draw,circle,fill=gray!15,inner sep=1pt,minimum size=.5cm]
\tikzstyle{argd}=[draw,circle,gray!50,inner sep=1pt,minimum size=.5cm]
\tikzstyle{argTD}=[draw, thick, circle, fill=gray!15,inner sep=0pt,minimum size=0.6cm,font=\small]
\tikzstyle{argR}=[draw, thick, circle, fill=gray!15,inner sep=0pt,minimum size=0.45cm,font=\small]
\tikzstyle{scc}=[draw, thick, rectangle,align=center, fill=gray!15,inner sep=0pt,minimum size=0.8cm,font=\small, rounded corners=0ex,]
\tikzstyle{argTDX}=[draw, dotted,thick, circle, inner sep=0pt,minimum size=0.6cm,font=\small]
\tikzstyle{sccX}=[draw,dotted, thick, rectangle,align=center, inner sep=0pt,minimum size=0.8cm,font=\small, rounded corners=0ex,]
\tikzstyle{argsmall}=[draw, thick, circle, fill=gray!15,inner sep=0pt,minimum size=0.4cm]
\tikzstyle{argsmallX}=[draw, thick, circle, inner sep=0pt,minimum size=0.3cm,dotted]
\tikzstyle{bag}=[draw,rectangle, rounded corners=1ex,
\tikzstyle{atts}=[draw,thick, inner sep=5pt, rounded corners=3pt]
\newcommand{\att}{\ensuremath{\at}}
\newcommand{\F}{SF}
\newcommand{\at}{\mapsto}
\newcommand{\cf}{\textit{cf}}
\newcommand{\adm}{\textit{adm}}
\newcommand{\com}{\textit{com}}
\newcommand{\comp}{\textit{com}}
\newcommand{\naive}{\textit{naive}}
\newcommand{\stb}{\textit{stb}}
\newcommand{\stable}{\stb}
\newcommand{\prf}{\textit{pref}}
\newcommand{\pref}{\textit{pref}}
\newcommand{\stage}{\textit{stage}}
\newcommand{\sem}{\textit{sem}}
\newcommand{\semi}{\textit{sem}}
\newcommand{\grd}{\textit{grd}}
\newcommand{\Aa}{\mathfrak{A}}
\newtheorem{theorem}{Theorem}[section]
\newtheorem{lemma}[theorem]{Lemma}
\newtheorem{proposition}[theorem]{Proposition}
\newtheorem{observation}[theorem]{Observation}
\theoremstyle{definition}
\newtheorem{definition}[theorem]{Definition}
\newtheorem{example}[theorem]{Example}
\newcommand{\proj}[2]{{#1}{\downarrow}_{#2}}
\newcommand{\projI}[2]{\proj{#1}{#2}}
\newcommand{\projIII}[3]{{#1}{\Downarrow}^{#2}_{#3}}
\newcommand{\pte}[1]{\projIII{SF}{(E\setminus #1)^+}{\UPscc{SF}{#1}{E}}}
\newcommand{\Uscc}[3]{\mathit{U}_{#1}({#2,#3})}
\newcommand{\Dscc}[3]{\mathit{D}_{#1}({#2,#3})}
\newcommand{\Pscc}[3]{\mathit{P}_{#1}({#2,#3})}
\newcommand{\UPscc}[3]{\mathit{UP}_{#1}({#2,#3})}
\newcommand{\Mscc}[3]{\mathit{M}_{#1}({#2,#3})}
\newcommand{\primal}{\mathit{primal}}
\newcommand{\SCCs}{\mathit{SCCs}}
\newcommand{\GF}{\mathcal{GF}}
\newcommand{\BF}{\mathcal{BF}}
\newcommand{\USets}{\mathit{US}}
\begin{document}
	
	\maketitle
	
	\begin{abstract}
		Argumentation Frameworks (AFs) are a key formalism in AI research. 
		Their semantics have been investigated in terms of \emph{principles}, 
		which define 
		characteristic properties 
		in order
		to deliver guidance for 	
		analysing established and developing new semantics.
		Because of the simple structure of AFs, many desired properties hold almost trivially,  at the same time hiding interesting concepts behind syntactic notions.
		We extend the principle-based approach to Argumentation Frameworks with Collective Attacks (SETAFs) and provide a comprehensive overview of common principles for their semantics.
		Our analysis shows that investigating principles based on decomposing the given SETAF (e.g.\ directionality or SCC-recursiveness) 
		poses 
		additional challenges in comparison to usual AFs.   
		We introduce the notion of the reduct as well as the modularization principle for SETAFs which will prove beneficial for this kind of investigation.
		We then demonstrate how our findings can be utilized for incremental computation of extensions and 
		give a novel
		parameterized tractability result for verifying preferred extensions. 
	\end{abstract}
	
	\section{Introduction}
	Abstract argumentation frameworks (AFs) as proposed by Dung (\citeyear{Dung95}) in his seminal paper are nowadays a classical research area in knowledge representation and reasoning. 
	In AFs, arguments are interpreted as abstract entities and thus the focus is solely on the relationship between them, i.e.\ which arguments are in conflict with each other. 
	Consequently, an AF is simply a directed graph, where the vertices are interpreted as arguments and edges as attacks between them. 
	Within the last years, several semantics for AFs have been proposed in order to formalize jointly acceptable sets of arguments, see, e.g., \citep{BaroniCG18}. 
	Different semantics have different features, yielding more or less beneficial behavior in varying contexts. 
	In order to assess and compare the characteristics of semantics in a formal and objective way, researchers pay increasing attention to perform principles-based analyses of AF semantics, \ie formalizing properties semantics should 
	satisfy in different situations.  
	We refer the reader to \citep{TorreV17} for a recent overview. 
	
	In the present paper we consider Argumentation Frameworks with collective attacks (SETAFs), introduced by \citeauthor{NielsenP06}
	~(\citeyear{NielsenP06}). 
	SETAFs generalize Dung-style AFs in the sense that some arguments can only be effectively defeated by a collection of attackers, yielding a natural representation as a directed hypergraph.
	Many key semantic properties of AFs have been shown to carry over to SETAFs, see e.g.~\citep{NielsenP06,FlourisB19}.
	Moreover, work has been done on expressiveness~\citep{DvorakFW19},
	and translations from SETAFs to AFs~\citep{Polberg17,FlourisB19}.
	Also the hypergraph structure of SETAFs has recently been subject of investigation~\citep{DvorakKW21,DvorakKW21b}. 
	However, a thorough principle-based analysis of SETAF semantics is still unavailable. 
	In this paper, we will close this gap by investigating the common SETAF semantics w.r.t.\ to a comprehensive selection of principles, inspired from similar proposals that have been studied for Dung-AFs. 
	
	Although we will see that in many cases the behavior generalizes from AFs to the setting with collective attacks, our study also reveals situations where caution is required and thus emphasizes properties we deem natural for AFs. In fact, many AF principles like SCC-recursiveness \citep{BaroniGG05a} or the recently introduced modularization property \citep{baumann2020comparing} are concerned with partial evaluation of the given graph and step-wise computation of extensions. 
	We will pay special attention to these kind of principles since (a) they require to establish novel technical foundations when generalizing the underlying structure from simple graphs to hypergraphs and (b) have immediate implications for the design of solvers.
	Along the way, we will also introduce a SETAF version for the reduct \citep{BaumannBU2020} of an AF which has proven to be a handy tool when investigating argumentation semantics. 
	
	The main contribution of this paper is to show that our natural extensions of the AF principles are well-behaving for SETAFs. 
	We show that basic properties are preserved, as well as their implications in terms of the structure of extensions. More specifically our contributions are  as follows.
	\begin{itemize}
		\item After giving necessary preliminaries in Section~\ref{section:preliminaries} we introduce for SETAFs the  counterparts to the most important basic principles from the literature in Section~\ref{section:setaf principles}.
		\item We propose the $E$-reduct $SF^E$ for a SETAF $SF$ and a set $E$ of arguments and investigate its core properties, including the modularization property for SETAFs (Section~\ref{section:reduct}). Moreover, we use the reduct to provide alternative characterizations of SETAFs semantics.
		\item We introduce uninfluenced sets of arguments in SETAFs as the counterpart of unattacked sets in AFs. We then propose and investigate a SETAF version of the directionality and non-interference principles (Section~\ref{sec:dir}) and the SCC-recursiveness (Section~\ref{section:sccs}).
		\item We discuss the computational implications of modularization, directionality and
		SCC-recursiveness in Section~\ref{sec:incremental}. In particular we illustrate the potential for incremental algorithms.
	\end{itemize}
	Technical proofs that are omitted due to space constraints are provided in the appendix.
	Some of the results reported in this article evolved from the paper \citep{DvorakKUW21} presented  at the NMR 2021 workshop.
	
	\section{Background}
	\label{section:preliminaries}
	
	We briefly recall the definitions of SETAFs and its semantics (see, e.g.,~\citep{BikakisCSFP21}).
	Throughout the paper, we assume a countably infinite domain $\Aa$ of possible arguments. 
	
	A SETAF is a pair $SF=(A,R)$ where $A\subseteq \Aa$ is 
	a finite set of \emph{arguments},
	and $R \subseteq (2^A \setminus \{\emptyset\}) \times A$ is the \emph{attack relation}. For an attack $(T,h)\in R$ we call $T$ the \emph{tail} and $h$ the \emph{head} of the attack.
	SETAFs $(A,R)$, where for all $(T,h) \in R$ it holds that $|T|=1$, amount to 
	(standard Dung) AFs. In that case, we usually write $(t,h)$ to denote the set-attack $(\{t\},h)$.
	Moreover, for a SETAF $SF=(A,R)$, we use $A(SF)$ and $R(SF)$ to identify
	its arguments $A$ and its attack relation $R$, respectively.
	
	Given a SETAF $(A,R)$, 
	we write $S \at_R a$ 
	if there is a set $T \subseteq S$
	with $(T,a) \in R$.
	Moreover, we write $S' \at_R S$ if 
	$S' \at_R a$ for some $a \in S$.
	For $S \subseteq A$, we use $S_R^+$ to denote the set $\{a \mid S \at_R a\}$ 
	and define the \emph{range} of $S$ (w.r.t.\ $R$), denoted $S_R^\oplus$, as the set 
	$S \cup S_R^+$. 
	
	\begin{example}
		\label{ex:setaf}
		Consider the SETAF $SF=(A,R)$ with arguments
		
		$A= \{a,b,c, d,e,f, g,h\}$ and attack relation 
		\begin{align*}
		R=\{ &(a,b),(\{b,d\},c),(b,d),(d,b), (d,e),(e,d),\\ 
		&(\{d,f\},h), (f,g),(g,f), (g,h),(h,g)\};
		\end{align*}
		the collective attacks $(\{b,d\},c),(\{d,f\},h)$ are highlighted.
		\begin{center}
			\begin{tikzpicture}[scale=0.8,>=stealth]
			\path 
			(-1,-1)node[arg,label={left:$SF:$}] (a){$a$}
			(0,0)node[arg] (b){$b$}
			(0.5,-2)node[arg] (c){$c$}
			(2,0)node[arg] (d){$d$}
			(2,-2)node[arg] (e){$e$}
			(4,0)node[arg] (f){$f$}
			(4,-2)node[arg] (h){$h$}
			(6,-1)node[arg] (g){$g$}
			;
			\path[thick,->]
			(a)edge(b)
			(b)edge[ultra thick,color=cyan,out=-60,in=90,-](c)
			(d)edge[ultra thick,color=cyan,out=225,in=90](c)
			(d)edge[ultra thick,color=violet,out=-45,in=90](h)
			(f)edge[ultra thick,color=violet,out=-90,in=90,-](h)
			;
			\path[thick,->,bend left=20]
			(d)edge(b)
			(b)edge(d)
			(d)edge(e)
			(e)edge(d)
			(f)edge(g)
			(g)edge(f)
			(h)edge(g)
			(g)edge(h)
			;
			\end{tikzpicture}
		\end{center}
	\end{example}
	
	The well-known notions of conflict and defense from classical Dung-style AFs naturally generalize to SETAFs.
	
	\begin{definition}
		Given a SETAF $SF=(A,R)$, a set $S\subseteq A$ is \emph{conflicting} in $SF$ 
		if $S \at_R a$ for some $a\in S$.
		A set
		$S \subseteq A$ is \emph{conflict-free} in $SF$, if $S$ is not conflicting in $SF$, i.e.\
		if $T \cup \{h\} \not\subseteq S$ for each $(T,h) \in R$.
		$\cf (SF)$ denotes
		the set of all conflict-free sets in $SF$.
	\end{definition}
	
	\begin{definition}
		Given a SETAF $SF=(A,R)$, 
		an argument $a\! \in\! A$ is \emph{defended} (in $SF$) by a set $S \subseteq A$ if for each $B \subseteq A$, 
		such that $B \at_R a$, also $S \at_R B$.
		A set $T \subseteq A$  is defended (in $SF$) by $S$ if each $a\in T$ is 
		defended by $S$ (in $SF$).
	\end{definition}
	Moreover, we make use of the \emph{characteristic function} $\Gamma_{SF}$ of a SETAF $SF=(A,R)$,
	defined as $\Gamma_{SF}(S)= \allowbreak \{a\in A \mid S \text{ defends }a\text{ in }SF\}$ for $S\subseteq A$.
	
	The semantics we study in this work are
	the
	grounded,
	admissible, complete, 
	preferred, stable, 
	naive,
	stage and semi-stable 
	semantics,
	which we will abbreviate by
	$\grd$, $\adm$, $\comp$, $\pref$, $\stable$, 
	$\naive$,
	$\stage$ and $\semi$,
	respectively~\citep{BikakisCSFP21}. 
	\begin{definition}
		Given a SETAF $SF=(A,R)$ 
		and a conflict-free set $S\in\cf(SF)$.
		Then,
		\vspace{-1pt}
		\begin{itemize}
			
			\item $S\in \adm(SF)$, if $S$ defends itself in $SF$,	
			\item $S \in \comp(SF)$, if $S \in \adm(SF)$ and $a \in S$ for all $a \in A$
			defended by $S$,	
			\item $S \in \grd(SF)$, if $S = \bigcap_{T\in\comp(SF)}{T}$,
			\item $S \in \prf(SF)$, if $S \in \adm(SF)$ and $\nexists  T \in \adm(SF)$
			s.t.\ $T \supset S$,
			\item $S \in \stb(SF)$, if $S\at_R a$ for all $a\in A\setminus S$,
			\item $S \in \naive(SF)$, if $\nexists T \in \cf(SF)$ with $T \supset S$,
			\item $S \in \stage(SF)$, if $\nexists T \in \cf(SF)$ with $T_R^\oplus \supset S_R^\oplus$, and
			\item $S \in \sem(SF)$, if $S \in \adm(SF)$ and 
			$\nexists T \in \adm(SF)$
			s.t.\ 
			$T_R^\oplus \supset S_R^\oplus$.
		\end{itemize}
	\end{definition}
	The relationship between the semantics has been clarified
	in~\citep{DvorakGW18,FlourisB19,NielsenP06} and 
	matches with the relations between the semantics for Dung AFs, i.e.\
	for any SETAF $SF$:
	\begin{gather*}
	\stb(SF)\subseteq \sem(SF)\subseteq \pref(SF) \subseteq \comp(SF) \subseteq \adm(SF)\\
	\stb(SF) \subseteq\ \stage(SF)\subseteq \naive(SF)   \subseteq \cf(SF)
	\end{gather*}
	To extend the graph-related terminology to the directed hypergraph structure of SETAFs, we often take the \emph{primal graph}~\citep{DvorakKW21} as a starting point.
	Intuitively, collective attacks are ``split up'' in order to obtain a directed graph with a similar structure as the original SETAF.
	\begin{definition}[Primal Graph]
		Let $SF=(A,R)$ be a SETAF. Its \emph{primal graph} is defined as $\primal(SF)=(A,R')$ with $R'=\{(t,h)\mid (T,h)\in R, t\in T\}$.
	\end{definition}
	\begin{example}
		Recall the SETAF $SF$ from Example~\ref{ex:setaf}.
		Its primal graph $\primal(SF)$ looks as follows.
		\begin{center}
			\begin{tikzpicture}[scale=0.8,>=stealth]
			\path 
			(-1,-1)node[arg,label={left:$\primal(SF):$}] (a){$a$}
			(0,0)node[arg] (b){$b$}
			(0.5,-2)node[arg] (c){$c$}
			(2,0)node[arg] (d){$d$}
			(2,-2)node[arg] (e){$e$}
			(4,0)node[arg] (f){$f$}
			(4,-2)node[arg] (h){$h$}
			(6,-1)node[arg] (g){$g$}
			;
			\path[thick,->]
			(a)edge(b)
			(b)edge(c)
			(d)edge(c)
			(d)edge(h)
			(f)edge(h)
			;
			\path[thick,->,bend left=20]
			(d)edge(b)
			(b)edge(d)
			(d)edge(e)
			(e)edge(d)
			(f)edge(g)
			(g)edge(f)
			(h)edge(g)
			(g)edge(h)
			;
			\end{tikzpicture}
		\end{center}
	\end{example}
	Finally, we introduce the notion of the \emph{projection},
	which we will revisit and redefine in Sections~\ref{sec:dir} and \ref{section:sccs}.
	\begin{definition}[Projection]\label{def:projection}
		Let $SF=(A,R)$ be a SETAF and
		$S\subseteq A$.
		We define the \emph{projection} $\proj{SF}{S}$ of $SF$ w.r.t.\ $S$ as
		$(S,\{ (T',h) \mid (T,h)\in R, h\in S, T'=T\cap S, T'\not=\emptyset  \})$.
	\end{definition}
	
	\section{Basic Principles}\label{section:setaf principles}
	We start our principles-based analysis of SETAF semantics by generalizing 
	basic principles from AFs. Satisfaction (or non-satisfaction) of principles allows us to distinguish semantics 
	with respect to fundamental properties that are crucial in certain applications.
	\begin{table}[]
		\small
		\setlength\tabcolsep{2pt}
		\def\arraystretch{1.05}
		\centering
		\begin{tabular}{|l|c|c|c|c|c|c|c|c|c|c|c|c|}
			\hline
			& $\grd$ & $\adm$ & $\com$ & $\stb$ & $\pref$ & $\naive$ & $\sem$ & $\stage$ \\ \hline
			Conflict-freeness      & \cm    & \cm    & \cm    & \cm    & \cm     & \cm      & \cm    & \cm      \\ \hline
			Defense                & \cm    & \cm    & \cm    & \cm    & \cm     & \xm      & \cm    & \xm      \\ \hline
			Admissibility          & \cm    & \cm    & \cm    & \cm    & \cm     & \xm      & \cm    & \xm      \\ \hline
			Reinstatement          & \cm    & \xm    & \cm    & \cm    & \cm     & \xm      & \cm    & \xm      \\ \hline
			CF-reinstatement       & \cm    & \xm    & \cm    & \cm    & \cm     & \cm      & \cm    & \cm      \\ \hline
			Naivety                & \xm    & \xm    & \xm    & \cm    & \xm     & \cm      & \xm    & \cm      \\ \hline
			I-maximality           & \cm    & \xm    & \xm    & \cm    & \cm     & \cm      & \cm    & \cm      \\ \hline
			Allowing abstention    & \cm    & \cm    & \cm    & \xm    & \xm     & \xm      & \xm    & \xm      \\ \hline
			Crash resistance       & \cm    & \cm    & \cm    & \xm    & \cm     & \cm      & \cm    & \cm      \\ \hline
			Modularization         & \cm    & \cm    & \cm    & \cm    & \cm     & \xm      & \cm    & \xm    	\\ \hline
			Directionality & \cm    & \cm    & \cm    & \xm    & \cm     & \xm      & \xm    & \xm      \\ \hline
			Non-interference       & \cm    & \cm    & \cm    & \xm    & \cm     & \cm      & \cm    & \cm      \\ \hline
			SCC-recursiveness      & \cm    & \cm    & \cm    & \cm    & \cm     & \xm      & \xm    & \xm      \\ \hline
			
		\end{tabular}
		\caption{An overview of our results regarding SETAF principles.}
		\label{table:principles}
	\end{table}
	The principles we consider have natural counterparts for Dung-style AFs, simply by applying them to SETAFs where $|T| = 1$ for each tail. 
	We therefore formalize the following observation:	
	\begin{observation}
		Let $P$ be a SETAF-principle that properly generalizes an AF-principle $P^{AF}$ in the sense that for SETAFs $SF$ with $|T|=1$ for each $(T,h)\in R(SF)$, every semantics $\sigma$ satisfies $P$ iff it satisfies $P^{AF}$. 
		Then if a semantics $\sigma$ does not satisfy $P^{AF}$, then $\sigma$ does not satisfy $P$.
	\end{observation}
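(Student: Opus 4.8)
The plan is to exploit the fact that every AF is literally a SETAF, namely one in which every attack has a singleton tail. The statement is then essentially a contrapositive packaged through this embedding, so the proof is short and relies only on unfolding what it means for a semantics to satisfy a principle.

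First I would fix the intended reading of ``$\sigma$ satisfies a principle''. A principle is a universally quantified property over the relevant class of frameworks, so $\sigma$ satisfies $P$ iff the defining property of $P$ holds for $\sigma$ on \emph{every} SETAF, and $\sigma$ fails $P$ iff there is at least one witnessing SETAF on which it is violated. The same convention applies to $P^{AF}$ over the class of AFs.

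Next, assume $\sigma$ does not satisfy $P^{AF}$. By the reading above there is an AF $F$ witnessing the violation. Viewing $F$ as a SETAF $SF_F$ with $|T|=1$ for every $(T,h)\in R(SF_F)$, the proper-generalization hypothesis tells us that on such singleton-tail frameworks $\sigma$ satisfies $P$ iff it satisfies $P^{AF}$; hence $P$ is violated for $\sigma$ on $SF_F$ as well. Since $SF_F$ is a SETAF and $P$ is universally quantified over all SETAFs, this single counterexample already shows that $\sigma$ does not satisfy $P$.

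The only point that needs care --- and the closest thing to an obstacle --- is making the quantifier structure explicit: one must observe that satisfaction of $P$ over all SETAFs is at least as strong as satisfaction over the subclass of singleton-tail SETAFs, so that a failure confined to that subclass already refutes $P$. Once this monotonicity with respect to the framework class is spelled out, the equivalence supplied by proper generalization transfers the AF-level counterexample to the SETAF level, completing the argument.
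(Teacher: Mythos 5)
Your proof is correct and matches the paper's reasoning: the paper treats this as an immediate observation, justified exactly by your argument that an AF counterexample, viewed as a singleton-tail SETAF, witnesses failure of the universally quantified SETAF principle via the proper-generalization equivalence. Your explicit unfolding of the quantifier structure is just a more careful spelling-out of the same one-line embedding argument.
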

	As all of our principles properly generalize the respective AF-principles,  
	whenever a principle is not satisfied for AFs, this translates to the corresponding SETAF principle as well.
	
	Next, we follow 
	\citep{TorreV17} 
	and introduce analogous principles for SETAFs. 
	Our first set of principles is concerned with basic properties of semantics. 
	\begin{definition}
		The following properties are said to hold for a semantics $\sigma$
		if the listed condition holds for each SETAF $SF=(A,R)$ and each set $E\in \sigma(SF)$.
		\begin{itemize}
			\item \emph{conflict-freeness}: $E\in\cf(SF)$ 
			\item \emph{defense}: each $a\in E$ is defended 
			\item \emph{admissibility}: $E\in \adm(SF)$ 
			\item \emph{reinstatement}: $a\in E$ for each $a$ defended by $E$
			\item \emph{CF-reinstatement}: $a\in E$ for each $a$ defended by $E$ s.t.\ $E\cup \{a\}\in\cf(SF)$
		\end{itemize}
	\end{definition}
	
	These principles formalize requirements which oftentimes do or do not hold immediately by definition of the given semantics. 
	We do not discuss these in detail here; the respective (non-)satisfaction results are reported in Table~\ref{table:principles}. 
	
	The next principles make statements about the structure of the extensions of a given semantics $\sigma$. 
	Here, I-maximality is due to \citeauthor{BaroniG07}~(\citeyear{BaroniG07}). 
	\begin{definition}
		The following properties are said to hold for a semantics $\sigma$
		if the listed condition holds for each SETAF $SF=(A,R)$ and each $E,E'\in \sigma(SF)$.
		\begin{itemize}
			\item \emph{naivety} iff there is no $E'\in\cf(SF)$ s.t.\ $E'\supset E$ 
			\item \emph{I-maximality} iff $E\subseteq E'$ implies $E=E'$
		\end{itemize}
	\end{definition}
	Whether the naivety principle is satisfied can be seen by closely inspecting the definition of the semantics. I-maximality results for SETAFs have been shown in~\citep{DvorakFW19}.
	
	The principle of allowing abstention can be attributed to \citeauthor{BaroniCG11}~(\citeyear{BaroniCG11}).
	\begin{definition}
		A semantics $\sigma$ satisfies \emph{allowing abstention} if 
		for all SETAFs $SF=(A,R)$, for all $a\in A(SF)$, if there exist $E,E'\in \sigma(SF)$
		with $a\in E$ and $a\in E'^+_R$ then there exists a $D\in \sigma(SF)$ such that $a\not\in D^\oplus_R$.
	\end{definition}
	
	Allowing abstention is satisfied by complete semantics, since---as in AFs---if there exist $E,E'\in \com(SF)$
	with $a\in E$ and $a\in E'^+$,
	this means $a\notin G^\oplus$ where $G\in \grd(SF)$.
	
	For the last principle we discuss within this section, which is due to~\citeauthor{CaminadaCD12}~(\citeyear{CaminadaCD12}), we require another notion.
	We call a SETAF $SF'=(A',R')$ 
	\emph{contaminating} for a semantics $\sigma$ if for every SETAF $SF=(A,R)$ with $A\cap A'=
	\emptyset$, it holds that $\sigma(SF\cup SF')=\sigma(SF')$, where $SF\cup SF'$ is the SETAF $(A\cup A',R\cup R')$.
	\begin{definition}[Crash resistance]
		A semantics $\sigma$ satisfies \emph{crash resistance} if 
		there is no contaminating SETAF for $\sigma$.
	\end{definition}
	As in the case for AFs, $\stb$ is the only semantics considered in this paper which is not crash-resistant. 
	The reason is that one can choose $SF'$ to be an isolated odd cycle, yielding $\stb( SF\cup SF' ) = \emptyset$ for any SETAF $SF$. 
	The other semantics are more robust in this regard and yield  
	$\sigma(SF \cup SF') = \{E\cup E' \mid E\in\sigma(SF),E'\in\sigma(SF')\}$ 
	whenever $A\cap A' = \emptyset$. 
	
	In the following sections we will introduce and investigate 
	further 
	principles regarding
	computational properties. 
	
	\section{Reduct and Modularization}\label{section:reduct}
	
	In this section, we will generalize the modularization property \citep{baumann2020comparing}, which yields concise alternative characterizations for the classical semantics in AFs, to SETAFs. 
	As a first step, we require the so-called reduct of a SETAF.
	
	\subsection{The SETAF Reduct}
	In the remainder of this paper, the \emph{reduct} of a SETAF w.r.t.\ a given set $E$ will play a central role. Intuitively, the reduct w.r.t.\ $E$ represents the SETAF that result from ``accepting'' $E$ and rejecting what is defeated now, while not deciding on the remaining arguments.
	To illustrate the idea, consider the following example:
	\begin{example}
		\label{ex:running example reduct}
		Recall the SETAF $SF$ from Example~\ref{ex:setaf}.
		Consider the singleton 
		$\{a\}$. 
		If we view $a$ as accepted, then $b$ is rejected. This means that the attack from $b$ to $d$ can be disregarded. However, we also observe that $c$ cannot be attacked anymore since attacking it requires both $b$ and $d$. 
		Now consider 
		$\{f\}$. 
		Interpreting $f$ as accepted renders $g$ rejected. In addition, in order to attack $h$ only one additional argument (namely $d$) is required. 
		Thus, if we let 
		$E= \{a,f\}$, 
		then we expect the SETAF $SF^E$ --with the intuitive meaning that $a$ and $f$ are set to true-- to look as follows. 
		\begin{center}
			\begin{tikzpicture}[scale=0.8,>=stealth]
			\path 
			(-1,-1)node[argd,label={left:$SF^E:$}] (a){$a$}
			(0,0)node[argd] (b){$b$}
			(0.5,-2)node[arg] (c){$c$}
			(2,0)node[arg] (d){$d$}
			(2,-2)node[arg] (e){$e$}
			(4,0)node[argd] (f){$f$}
			(4,-2)node[arg] (h){$h$}
			(6,-1)node[argd] (g){$g$}
			;
			\path[thick,->]
			(a)edge[gray!50](b)
			(b)edge[color=white!80!cyan,out=-60,in=90,-](c)
			(d)edge[color=white!80!cyan,out=225,in=90](c)
			(f)edge[color=white!80!violet,out=-90,in=90,-](h) 
			(d)edge[color=violet,out=-45,in=90](h)
			;
			\path[thick,->,bend left=20]
			(d)edge[gray!50](b)
			(b)edge[gray!50](d)
			(d)edge(e)
			(e)edge(d)
			(f)edge[gray!50](g)
			(g)edge[gray!50](f)
			(h)edge[gray!50](g)
			(g)edge[gray!50](h)
			;
			\end{tikzpicture}
		\end{center}
		As we can see, the above depicted SETAF reflects the situation after $E$ is set to true: e.g.\ $c$ is defended and in order to defeat $h$, only $d$ is required. 
	\end{example}
	That is, in the reduct $SF^E$, we only need to consider arguments that are still
	undecided, \ie all arguments neither in $E$ nor attacked by $E$.
	As illustrated in the example, some attacks that involve deleted arguments are preserved which is in contrast to the AF-reduct~\citep{baumann2020comparing}.
	In particular, if the arguments in the tail of an attack are ``accepted'' (\ie in $E$),
	the attack can still play a role in attacking or defending.
	If the tail of an attack $(T,h)$ is already attacked by $E$,
	we can disregard $(T,h)$.
	\begin{definition}
		Given a SETAF $SF=(A,R)$ and $E\subseteq A$, the \emph{$E$-reduct of $SF$} is the SETAF $SF^E=(A' , R')$, with
		\begin{align*}
		A'\enspace=\enspace& A \setminus E^\oplus_R\\
		R'\enspace=\enspace& \{(T\setminus E, h) \mid (T,h)\in R, \,
		T\cap E^+_R=\emptyset,\\  
		&\qquad\qquad\qquad\,  T \not\subseteq E,\,
		h\in A'
		\}
		\end{align*}
	\end{definition}
	Thereby, the condition $T\cap E^+_R = \emptyset$ captures cases like the attack $(\{b,d\},c)$ from our example: $b$ is 
	attacked by $E$, and thus, the whole attack gets removed. 
	The reason why we take $(T\setminus E,h)$ as our attacks is the partial evaluation as in the attack $(\{d,f\},h)$ after setting $f$ to true: only $d$ is now left required in order to ``activate'' the attack against $h$. 
	\begin{example}
		Given the SETAF $SF$ from Example~\ref{ex:running example reduct} as well as 
		$E=\{a,f\}$
		as before, the reduct $SF^E$ is the SETAF depicted above, \ie 
		$SF^E = \{ A', R' \}$
		with
		$A' 
		= \{c,d,e,h\}$ 
		and 
		$R' 
		= \{ (d,e), (e,d), (d,h) \}$.
	\end{example}
	We start our formal investigation of the reduct with a technical lemma to settle some basic properties.
	\begin{lemma}\label{prop:reductbasics setafs}
		Given a SETAF $\F = (A,R)$ and two disjoint sets $E,E'\subseteq A$. Let $\F^E = (A',R')$. 
		\begin{enumerate}
			\item If there is no $S\subseteq A$ s.t.\ $S\att_R E'$, then the same is true in $\F^E$.
			\item Assume $E$ does not attack $E'\in\cf(\F)$. Then, $E$ defends $E'$ iff there is no $S'\subseteq A'$ s.t.\ $S'\att_{R'} E'$. 
			\item Let $E\in\cf(\F)$. If $E\cup E'$ does not attack $E$ in $\F$ and $E'\subseteq A'$, with $E'\in\cf\left(\F^E\right)$ then $E\cup E'\in\cf(\F)$.
			\item Let $E\cup E'\in\cf(\F)$. If $E'\att_{R'} a$, then $E\cup E'\att_R a$.
			\item If $E\cup E'\in\cf(\F)$, then $\F^{E\cup E'} = \left(\F^E\right)^{E'}$.
		\end{enumerate}
	\end{lemma}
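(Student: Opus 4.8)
The plan is to unfold the definitions of reduct, conflict-freeness and defense, and to trace attacks back and forth between $R$ and $R'$. Item~1 is immediate: $S\att_R E'$ holds for some $S$ iff some $(T,h)\in R$ has head $h\in E'$, and since every $R'$-attack $(T\setminus E,h)$ inherits its head from an $R$-attack, a head set avoiding $E'$ in $R$ still avoids $E'$ in $R'$. For item~2 I would first note that ``$E$ does not attack $E'$'' together with $E\cap E'=\emptyset$ gives $E'\cap E_R^\oplus=\emptyset$, so $E'\subseteq A'$ and the claim is well-posed. For the forward direction, an attack $(T\setminus E,a)\in R'$ on some $a\in E'$ comes from $(T,a)\in R$; since $E$ defends $a$ we get $E\att_R T$, i.e.\ $T\cap E_R^+\neq\emptyset$, contradicting the reduct side-condition $T\cap E_R^+=\emptyset$. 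For the converse I would show that every attacker $(T,a)\in R$ of an $a\in E'$ satisfies $T\cap E_R^+\neq\emptyset$ (so $E$ counterattacks it): otherwise $T\subseteq E$ would let $E$ attack $a\in E'$, while $T\not\subseteq E$ would yield an $R'$-attack $(T\setminus E,a)$ on $E'$, both excluded; hence $E$ defends $E'$.

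Items~3 and~4 are short. For item~4, $E'\att_{R'}a$ arises from some $(T,a)\in R$ with $T\setminus E\subseteq E'$, whence $T\subseteq E\cup E'$ and $E\cup E'\att_R a$. For item~3 I would assume a conflict of $E\cup E'$ in $\F$, witnessed by $(T,h)\in R$ with $T\cup\{h\}\subseteq E\cup E'$, and split on the head: $h\in E$ makes $E\cup E'$ attack $E$, against the hypothesis; $h\in E'$ forces (using $E\in\cf(\F)$ and $E'\subseteq A'$) both $T\not\subseteq E$ and $T\cap E_R^+=\emptyset$ --- any $t\in T\cap E_R^+$ lies in $E$ (contradicting $E\in\cf(\F)$) or in $E'\subseteq A'=A\setminus E_R^\oplus$ (contradicting $t\in E_R^+$) --- so $(T\setminus E,h)\in R'$ exhibits a conflict of $E'$ in $\F^E$, contradicting $E'\in\cf(\F^E)$.

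Item~5 is the heart of the lemma. Writing $\F^{E\cup E'}=(A_L,R_L)$ and $(\F^E)^{E'}=(A_R,R_R)$ (well-defined since $E\cup E'\in\cf(\F)$ forces $E'\subseteq A'$), I would prove $A_L=A_R$ and $R_L=R_R$ separately. The recurring sub-claim is that for $a\notin E\cup E'$ one has $a\in(E\cup E')_R^+$ iff $a\in E_R^+$ or $a\in(E')_{R'}^+$: the direction ``$\Rightarrow$'' splits the witnessing tail $T\subseteq E\cup E'$ into $T\subseteq E$ (giving $a\in E_R^+$) or $T\not\subseteq E$ (where conflict-freeness of $E\cup E'$ forces $T\cap E_R^+=\emptyset$, so $(T\setminus E,a)\in R'$ exists and $a\in(E')_{R'}^+$), while ``$\Leftarrow$'' is just $E_R^+\subseteq(E\cup E')_R^+$ together with item~4. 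Expanding $A_L=A\setminus(E\cup E')_R^\oplus$ and $A_R=(A\setminus E_R^\oplus)\setminus(E')_{R'}^\oplus$, this sub-claim yields $A_L=A_R$.

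For $R_L=R_R$ I would fix a source attack $(T,h)\in R$; since $(T\setminus E)\setminus E'=T\setminus(E\cup E')$ the produced tails already agree, so only the side-conditions need matching. Concretely I would show that the triple ``$T\cap(E\cup E')_R^+=\emptyset$, $T\not\subseteq E\cup E'$, $h\in A_L$'' (membership in $R_L$) is equivalent to the conjunction of the first-step condition ``$T\cap E_R^+=\emptyset$, $T\not\subseteq E$, $h\in A'$'' and the second-step condition ``$(T\setminus E)\cap(E')_{R'}^+=\emptyset$, $T\setminus E\not\subseteq E'$, $h\in A_R$'' (membership in $R_R$). Each implication reduces to the sub-claim above, the inclusion $E_R^+\subseteq(E\cup E')_R^+$, the equivalence $T\subseteq E\cup E'\Leftrightarrow T\setminus E\subseteq E'$, and the identity $A_L=A_R$. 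The main obstacle is exactly this synchronization: keeping the three conditions (tail not defeated by the range, tail not wholly accepted, head not defeated) aligned across the two-step reduct, where conflict-freeness of $E\cup E'$ is what rules out the degenerate overlaps --- an accepted argument being attacked by the accepted part --- that would otherwise break the correspondence.
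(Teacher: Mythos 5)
Your proof is correct, and since the paper omits the proof of this lemma entirely (it is not among the appendix proofs of Section~4), there is no authorial argument to diverge from --- but your definition-unfolding route, in particular establishing item~5 by matching the three reduct side-conditions attack-by-attack for each source $(T,h)\in R$, with item~4 and conflict-freeness of $E\cup E'$ ruling out an accepted argument lying in $(E\cup E')_R^+$, is exactly the natural (and surely intended) argument, and every step checks out. Two spots deserve one more line in a full write-up: in item~3 the claim $T\not\subseteq E$ follows because $T\subseteq E$ would give $h\in E_R^+$, contradicting $h\in E'\subseteq A'$; and in the sub-claim for item~5, when $T\not\subseteq E$ you need $a\notin E_R^\oplus$ to conclude $(T\setminus E,a)\in R'$, which you get by observing that if $a\in E_R^+$ the first disjunct already holds, so one may assume $a\notin E_R^+$ (and $a\notin E$ since $a\notin E\cup E'$).
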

	
	\subsection{The Modularization Property}
	Having established the basic properties of the SETAF reduct, we are now ready to introduce the modularization property~\citep{baumann2020comparing}. 
	\begin{definition}[Modularization]
		A semantics $\sigma$ satisfies \emph{modularization} if 
		for all SETAFs $SF$, for every $E\in \sigma(SF)$ and $E'\in\sigma(SF^E)$, we have $E\cup E'\in \sigma(SF)$.
	\end{definition}
	Modularization allows us to build 
	extensions iteratively.
	After finding such a set $E\subseteq A$ we can efficiently compute its reduct $SF^E$ and
	pause before computing an extension $E'$ for the reduct in order 
	to obtain a larger extension $E \cup E'$ for $SF$.
	Hence, this first step can be seen as an intermediate result
	that enables us to reduce the computational effort of finding
	extensions in $SF$, as the arguments whose status is already determined by accepting $E$
	do not have to be considered again. Instead, we can reason on the reduct $SF^E$ 
	(see
	Section~\ref{sec:incremental}). 
	In the following, we establish the modularization property for admissible and complete semantics. 
	\begin{theorem}[Modularization Property]\label{theorem:modularization}
		Let $\F$ be a SETAF,
		$\sigma\in\{\adm,\com\}$ and $E\in\sigma(\F)$.
		\begin{enumerate}
			\item If $E'\in\sigma( \F^E )$, then $E\cup E'\in\sigma(\F)$. 
			\item If $E\cap E' = \emptyset$ and $E\cup E'\in\sigma(\F)$, then $E'\in\sigma(\F^E)$. 
		\end{enumerate}
	\end{theorem}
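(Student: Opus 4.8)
The plan is to prove both statements first for $\sigma=\adm$ and then lift them to $\sigma=\com$ using that a complete extension is precisely an admissible set containing every argument it defends. Throughout I use the reformulation that a set $S$ defends an argument $a$ iff for every attack $(T,a)$ (in the relevant framework) $S$ attacks $T$, \ie $S\att_R t$ for some $t\in T$. The core of the argument is transporting attacks between $\F$ and its reduct $\F^E$, for which Lemma~\ref{prop:reductbasics setafs}---especially items~3 and~4---provides the needed bookkeeping.

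For statement~1 with $\sigma=\adm$, assume $E\in\adm(\F)$ and $E'\in\adm(\F^E)$. As $E'\in\cf(\F^E)$ we get $E'\subseteq A'=A\setminus E_R^\oplus$, so $E\cap E'=\emptyset$ and $E$ does not attack $E'$. I would first show $E\cup E'\in\cf(\F)$ by Lemma~\ref{prop:reductbasics setafs}(3); its hypothesis that $E\cup E'$ does not attack $E$ I verify directly, since any attack $(T,a)$ with $T\subseteq E\cup E'$ and $a\in E$ would, by admissibility of $E$, be countered by $E$ on some $t\in T$, and such a $t$ can lie neither in $E$ (conflict-freeness of $E$) nor in $E'$ (as $E'\cap E_R^+=\emptyset$). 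For self-defense I treat $a\in E$ by monotonicity of defense, and $a\in E'$ by splitting each attack $(T,a)\in R$ into the cases $T\cap E_R^+\neq\emptyset$ (countered by $E$), $T\subseteq E$ (excluded, as it contradicts $a\in A'$), and the remaining case where $(T\setminus E,a)\in R'$ is countered by $E'$ in $\F^E$ and Lemma~\ref{prop:reductbasics setafs}(4) lifts this counter back to $R$.

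For statement~2 with $\sigma=\adm$, assume $E\cap E'=\emptyset$ and $E\cup E'\in\adm(\F)$. Here $E'\subseteq A'$ reduces to $E'\cap E_R^+=\emptyset$, which is immediate from conflict-freeness of $E\cup E'$. Conflict-freeness of $E'$ in $\F^E$ follows since any witnessing $R'$-attack would, by Lemma~\ref{prop:reductbasics setafs}(4), yield an $R$-attack inside $E\cup E'$; and self-defense of $E'$ in $\F^E$ is the mirror image of the previous paragraph, this time using admissibility of $E\cup E'$ to provide counterattacks which I then project down to $R'$.

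Finally, for $\sigma=\com$ I combine the admissible results with the defended-argument condition. The crux is the equivalence: for $a\in A'$, $E'$ defends $a$ in $\F^E$ iff $E\cup E'$ defends $a$ in $\F$, which I prove by the same three-case split on $(T,a)$ and the same two-way attack translation. Statement~1 then follows because a defended $a$ of $E\cup E'$ either lies in $E$, or lies in $E_R^+\setminus E$ (impossible, as a conflict-free set cannot defend an argument it attacks), or lies in $A'$ and is hence caught by completeness of $E'$; statement~2 follows symmetrically via completeness of $E\cup E'$. The main obstacle, where I expect to spend the most care, is precisely this attack transport across the tail-trimming $T\mapsto T\setminus E$: one must check in every translation that whenever $E\cup E'$ counters a tail $T$, the countered argument actually survives into the reduct (\ie lies in $T\setminus E$, using conflict-freeness) and that the countering subset of $E\cup E'$ induces a legitimate attack in $R'$, \ie satisfies the side conditions $T\cap E_R^+=\emptyset$, $T\not\subseteq E$, and $h\in A'$ (using that no element of $E\cup E'$ belongs to $E_R^+$).
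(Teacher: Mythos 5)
Your proof is correct, and for $\sigma=\adm$ it follows essentially the same route as the paper: conflict-freeness of $E\cup E'$ via Lemma~\ref{prop:reductbasics setafs}(3), self-defense via the three-way case split on a tail $T$ (countered by $E$ when $T\cap E_R^+\neq\emptyset$; $T\subseteq E$ excluded since it contradicts $a\in A'$; otherwise $(T\setminus E,a)\in R'$, countered by $E'$ and lifted back with Lemma~\ref{prop:reductbasics setafs}(4)), and the mirrored projection argument for statement~2 with exactly the side-condition checks ($T\cap E_R^+=\emptyset$, $T\not\subseteq E$, $h\in A'$) that the paper's statement~2 proof also performs. In fact you are slightly more careful than the paper in one spot: the paper justifies the hypothesis of Lemma~\ref{prop:reductbasics setafs}(3) by remarking only that $E'$ does not attack $E$, whereas the lemma requires that $E\cup E'$ does not attack $E$, including attacks with \emph{mixed} tails drawn from both $E$ and $E'$; your direct verification (the counterattacked $t\in T$ can lie neither in $E$ nor in $E'$) closes that gap cleanly. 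Where you genuinely diverge is the case $\sigma=\com$. The paper lifts completeness abstractly: it invokes the characterization of Proposition~\ref{prop:classical semantics and reduct SETAFs}(4) ($E\in\com(\F)$ iff $E\in\adm(\F)$ and $\F^E$ has no unattacked argument) together with the composition identity $\F^{E\cup E'}=\left(\F^E\right)^{E'}$ of Lemma~\ref{prop:reductbasics setafs}(5), so both directions reduce to the observation that the two iterated reducts coincide. You instead prove a defense-transfer equivalence directly --- for $a\in A'$, $E'$ defends $a$ in $\F^E$ iff $E\cup E'$ defends $a$ in $\F$ --- by rerunning the attack-transport case analysis, and you dispose of the case $a\in E_R^+$ with the (correct, and worth noting in SETAFs too) observation that a conflict-free set cannot defend an argument it attacks, since the defense would have to counterattack a tail contained in the set itself. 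Both routes are sound; the paper's is more modular (the composition identity and the reduct characterization are reused elsewhere, e.g.\ to get modularization of $\grd$ nearly for free), while yours is more self-contained and elementary, needing neither item~2 nor item~5 of Lemma~\ref{prop:reductbasics setafs} at the price of repeating the case analysis once more.
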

	\begin{proof}(for $\sigma = \adm$)
		Let $\F^E = (A',R')$.
		
		1) Since $E$ is admissible and $E'\subseteq A'$, $E'$ does not attack $E$. 
		By Lemma~\ref{prop:reductbasics setafs}, item 3, $E\cup E'\in\cf(\F)$. 
		Now assume $S\at_R E\cup E'$. 
		If $S\at_R E$, then $E\att_R S$ by admissibility of $E$. 
		If $S\at_R E'$, there is $T\subseteq S$ s.t.\ $(T,e')\in R$ for some $e'\in E'$. 
		In case $E\at_R T$, we are done. 
		Otherwise, $(T\setminus E,e')\in R'$ and by admissibility of $E'$ in $\F^E$, $E'\att_{R'} T\setminus E$. 
		By Lemma~\ref{prop:reductbasics setafs}, item 4, $E\cup E'\at_R T\setminus E$. 
		
		2) Now assume $E\cup E'\in\adm\left(\F\right)$. 
		We see $E'\in\cf\left(\F^E\right)$ as follows: 
		If $(T',e')\in R'$ for $T'\subseteq E'$ and $e'\in E'$, then there is some $(T,e')\in R$ with $T' = T\setminus E$. 
		Hence $E\cup E'\att_R E'$, contradiction.
		Now assume $E'$ is not admissible in $\F^E$, i.e.\ there is $(T',e')\in R'$ with $e'\in E'$ and $E'$ does not counterattack $T'$ in $\F^E$. 
		Then there is some $(T,e')\in R$ with $T' = T\setminus E$ and $T\cap E_R^+ = \emptyset$. 
		By admissibility of $E\cup E'$, $E\cup E'\att_R T$, say $(T^*,t)\in R$, $T^*\subseteq E\cup E'$ and $t\in T$. 
		Since $E\cup E'$ is conflict-free, $T^*\cap E_R^+ = \emptyset$ and thus we either have
		a) $T^*\subseteq E$, contradicting $T\cap E_R^+ = \emptyset$, or
		b) $(T^*\setminus E,t)\in R'$ and $t\in T'$, i.e. $E'$ counterattacks $T'$ in $\F^E$ contradicting the above assumption.
	\end{proof}
	The result for complete semantics can be obtained by using 
	Lemma~\ref{prop:reductbasics setafs}, items 2 and 5.
	Note that the modularization property also holds for $\stable$, $\pref$, and $\sem$ semantics.
	However,
	the only admissible set in the reduct w.r.t.\ a stable/preferred/semi-stable extension
	is the empty set, rendering the property trivial.
	The exact relation is captured by the following alternative characterizations of
	the semantics under our consideration.
	
	\begin{proposition}\label{prop:classical semantics and reduct SETAFs}
		Let $\F=(A,R)$ be a SETAF, $E\in\cf(\F)$ and $\F^E = (A',R')$.
		\begin{enumerate}
			\item $E\!\in\!\stb(\F)$ iff $SF^E = (\emptyset,\emptyset)$,
			\item  $E\!\in\!\adm(\F)$ iff $S\att_R E$ implies $S\setminus E \not\subseteq A'$, 
			\item   $E\!\in\!\prf(\F)$ iff $E\in\adm(\F)$ and $\bigcup \adm \left(\F^E\right) = \emptyset$,
			\item  $E\!\in\!\com(\F)$ iff $E\in\adm(\F)$ and no argument in $SF^E$ is unattacked,
			\item  $E\!\in\!\sem(\F)$ iff $E\in\pref(\F)$ and there is no $E'\in \pref(SF)$ such that $A(SF^{E'})\subset A(SF^E)$.
		\end{enumerate}
	\end{proposition}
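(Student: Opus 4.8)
The plan is to prove each of the five equivalences by unfolding the definition of the corresponding semantics and translating it into a statement about the reduct $\F^E=(A',R')$, using throughout that $A'=A\setminus E_R^\oplus$ is precisely the set of arguments that $E$ leaves undecided (neither accepted nor attacked). Items~1 and~2 I would handle by direct unfolding. For item~1, $E\in\stb(\F)$ means $A\setminus E\subseteq E_R^+$, i.e.\ $E_R^\oplus=A$, which is equivalent to $A'=A\setminus E_R^\oplus=\emptyset$; since every attack of $R'$ has its head in $A'$, $A'=\emptyset$ forces $R'=\emptyset$, giving $\F^E=(\emptyset,\emptyset)$. For item~2, I would use that an argument $h\in E$ is defended exactly when $E$ attacks the tail of every attack $(T,h)\in R$; reading off $A'$, a tail $T$ is left untouched by $E$ (i.e.\ $E\not\att_R T$) precisely when $T\setminus E\subseteq A'$, where conflict-freeness of $E$ rules out that the in-$E$ part of $T$ is attacked. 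Taking $S=T$ then shows that admissibility fails iff some $S$ with $S\att_R E$ has $S\setminus E\subseteq A'$, and the forward direction over arbitrary $S$ follows since the witnessing tail satisfies $T\subseteq S$.

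For item~3 I would lean on the Modularization Property (Theorem~\ref{theorem:modularization}). Since $E\in\adm(\F)$ and any admissible $E''\supsetneq E$ is conflict-free with $E\subseteq E''$, no element of $E''\setminus E$ can be attacked by $E$, so $E''\setminus E\subseteq A'$; writing $E':=E''\setminus E$, Theorem~\ref{theorem:modularization}(2) yields a nonempty $E'\in\adm(\F^E)$, and conversely Theorem~\ref{theorem:modularization}(1) turns any nonempty $E'\in\adm(\F^E)$ into a proper admissible superset $E\cup E'$ of $E$. Hence $E$ is $\subseteq$-maximal admissible iff $\adm(\F^E)=\{\emptyset\}$, i.e.\ iff $\bigcup\adm(\F^E)=\emptyset$. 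For item~4 I would invoke Lemma~\ref{prop:reductbasics setafs}(2): first, if the conflict-free set $E$ defends some $a\notin E$, then $a\notin E_R^+$ (otherwise $E$ would have to attack an in-$E$ tail to defend $a$, contradicting conflict-freeness), so $a\in A'$; thus completeness fails iff $E$ defends some $a\in A'$. Applying Lemma~\ref{prop:reductbasics setafs}(2) with $E'=\{a\}$ (legal since $E$ does not attack $a\in A'$) equates ``$E$ defends $a$'' with ``$a$ is unattacked in $\F^E$'', giving the claim, with self-attacking $a\in A'$ treated separately since they are both attacked in $\F^E$ and undefended by $E$.

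Item~5 I would reduce to a range argument. Because $A(\F^E)=A\setminus E_R^\oplus$, the proper inclusion $A(\F^{E'})\subset A(\F^E)$ is equivalent to $E_R^\oplus\subset {E'}_R^{\oplus}$, so the right-hand side reads ``$E\in\pref(\F)$ and no preferred extension has strictly larger range than $E$''. Using $\sem(\F)\subseteq\pref(\F)$ and that $E\in\sem(\F)$ is admissibility with $\subseteq$-maximal range, it remains to argue that maximizing range over $\adm(\F)$ coincides with maximizing it over $\pref(\F)$: any admissible $T$ with $T_R^\oplus\supset E_R^\oplus$ extends to a preferred $E'\supseteq T$ with ${E'}_R^{\oplus}\supseteq T_R^\oplus\supset E_R^\oplus$, while preferred extensions are admissible. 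This closes the equivalence.

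The main obstacle I anticipate is item~5, where I must carefully convert the reduct-based inclusion $A(\F^{E'})\subset A(\F^E)$ into the range inclusion $E_R^\oplus\subset {E'}_R^{\oplus}$ and then justify restricting the range-maximization from all admissible sets to preferred ones. A smaller but genuine subtlety lies in item~4, where Lemma~\ref{prop:reductbasics setafs}(2) requires $\{a\}\in\cf(\F)$, so the self-attacking arguments in $A'$ need a separate (easy) check; and in item~2 one must exploit conflict-freeness of $E$ to argue that the in-$E$ part of a tail cannot be attacked by $E$.
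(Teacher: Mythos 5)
Your proposal is correct and follows essentially the same route as the paper's (much terser) proof: direct unfolding for items~1 and~2, the modularization property (Theorem~\ref{theorem:modularization}) for item~3, Lemma~\ref{prop:reductbasics setafs}(2) applied to singletons of $\F^E$ for item~4, and the fact that semi-stable extensions are exactly the range-maximal preferred extensions for item~5. Your additional care with self-attacking arguments in item~4 (where the conflict-freeness hypothesis of the lemma fails) and the explicit complement argument turning $A(\F^{E'})\subset A(\F^E)$ into $E_R^\oplus\subset E'^{\oplus}_R$ in item~5 are exactly the details the paper leaves implicit, and both check out.
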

	\begin{proof}
		The characterizations for $\stb$ and $\adm$ are straightforward and $\pref$ is due to the modularization property of $\adm$. 
		For $\com(\F)$ we apply Lemma~\ref{prop:reductbasics setafs}, item 2, to each singleton $E'$ occurring in $\F^E$. For $\sem$ recall that range-maximal preferred extensions are semi-stable.
	\end{proof}
	From the characterization of complete semantics provided in Proposition~\ref{prop:classical semantics and reduct SETAFs} we infer that for any SETAF $SF$ the complete extensions $E\in\com(SF)$ satisfy $\grd(\F^E) = \{ \emptyset \}$ implying modularization for $\grd$. 
	Moreover, as the grounded extension $G$ is the least complete, we can utilize modularization of $\adm$ and obtain $G$
	by the following procedure:
	(1)~add the set of unattacked arguments $U$ into $G$, (2) repeat step (1) on $SF^{U}$ until there are no unattacked arguments.
	
	\section{Directionality and Non-Interference}\label{sec:dir}
	In this section we discuss the principles directionality and non-interference.
	Intuitively, these principles give information about the behavior of ``separate''
	parts of a framework.  In order to formalize this separation-property, we
	start of with the notion of unattacked sets of arguments\footnote{While in the previous section we used ``unattacked arguments'', i.e.\ arguments that are not the head of any attack, unattacked \emph{sets of arguments} allow for attacks within the set.}.
	For directionality~\citep{BaroniG07} we have to carefully consider this notion
	in order to obtain a natural generalization of the AF case preserving the intended meaning.
	A naive definition of unattacked sets will lead to nonsensical results:
	assume a set $S$ is unattacked in a SETAF $SF=(A,R)$ whenever 
	it is not attacked from ``outside'', i.e.\ if the condition 
	$A\setminus S \not\att_R S$ holds.
	\begin{example}\label{ex:directionality}
		Consider now the following SETAF (a) and its projections (b), (c) w.r.t.\ the ``unattacked'' set $S=\{a,c\}$.
		\begin{center}
			\begin{tikzpicture}[scale=0.8,>=stealth]
			\path 
			(0,0)node[arg] (a){$c$}
			(1,1.5)node[arg] (b){$b$}
			(-1,1.5)node[arg] (c){$a$}
			;
			\path[thick,->]
			(b)edge[in=90,out=-150,-](a)
			(c)edge[in=90,out=-30](a)
			(a)edge[in=-90,out=40](b);
			\path
			(0,-0.8)node(lab1){(a)}
			(3,-0.8)node(lab1){(b)}
			(6,-0.8)node(lab1){(c)}
			;
			\path 
			(3,0)node[arg] (a1){$c$}
			(4,1.5)node[argd] (b1){$b$}
			(2,1.5)node[arg] (c1){$a$}
			;
			\path[thick,->]
			(b1)edge[in=90,out=-150,-,gray!50](a1)
			(c1)edge[in=90,out=-30](a1)
			(a1)edge[in=-90,out=40,gray!50](b1);
			\path 
			(6,0)node[arg] (a2){$c$}
			(7,1.5)node[argd] (b2){$b$}
			(5,1.5)node[arg] (c2){$a$}
			;
			\path[thick,->]
			(b2)edge[in=90,out=-150,-,gray!50](a2)
			(c2)edge[in=90,out=-30,gray!50](a2)
			(a2)edge[in=-90,out=40,gray!50](b2);
			
			\end{tikzpicture}
		\end{center}%
		Note that
		$\{a,c\}$ is stable
		in (a).
		If we now consider the projection $\proj{SF}{S}$---see (b)---we find that
		$\{a,c\}$ is not stable, falsifying directionality.
		However, one might argue that this is due to the credulous nature of our projection-notion.
		We could easily consider a different proper generalization of the projection,
		namely 
		${SF}{\downarrow}^*_{S}=(S,\{ (T,h) \mid (T,h)\in R, T\cup \{h\}\subseteq S \})$.
		In this more skeptical version we delete attacks if any of the arguments in the tail are not in the projected set---see (c).
		However, we still cannot obtain the desired results:
		in (a) we find $\{a\}$ to be the unique grounded extension,
		while in (c) $\{a,c\}$ is grounded, again falsifying directionality.
		As for the directionality principle we do not want to add additional arguments or attacks and we exhausted all possible reasonable projection notions for this small example,
		we conclude that the underlying definition of \emph{unattacked} sets was improper.
		We therefore suggest a different definition---and at the same time suggest to think of these sets rather as ``uninfluenced'' than ``unattacked''.
		In AFs, clearly both notions coincide.
		However, we still argue that the concept of ``influence'' captures the true nature of directionality in a more intuitive and precise manner.
		Moreover, note that in the case of uninfluenced sets both notions of projection coincide, as well as the
		notion of
		\emph{restriction} (see Definition~\ref{def:restriction}) for arbitrary sets $D\subseteq A\setminus S$.
	\end{example}%
	\begin{definition}[Influence]
		Let $SF=(A,R)$ be a SETAF.
		An argument $a\in A$ \emph{influences} $b\in A$
		if there is a directed path from $a$ to $b$ in $\primal(SF)$.
		A set $S\subseteq A$ is \emph{uninfluenced} in $SF$
		if no $a\in A\setminus S$ influences any $b\in S$. 
		We denote the set of uninfluenced sets by $\USets(SF)$.
	\end{definition}

	Utilizing this notion, we can properly generalize directionality~\citep{BaroniG07}.
	\begin{definition}[Directionality]
		A semantics $\sigma$ satisfies \emph{directionality} if 
		for all SETAFs $SF$ and every $S\in \USets(SF)$ it holds $\sigma(\proj{SF}{S})=\{E\cap S \mid E\in \sigma(SF)\}$.
	\end{definition}
	We will revisit directionality at the end of the next section,
	as we can utilize SCC-recursiveness to show that $\grd$, $\comp$, and $\pref$ satisfy directionality.
	Similarly, we generalize \emph{non-interference}~\citep{CaminadaCD12},
	which has an even stronger requirement.
	$S\subseteq A$ is \emph{isolated} in $SF=(A,R)$, if $S$ is uninfluenced and $A\setminus S$ is uninfluenced,
	i.e.\ there are no edges in $\primal(SF)$ between $S$ and $A\setminus S$. 
	\begin{definition}[Non-interference]
		A semantics $\sigma$ satisfies \emph{non-interference} iff for all SETAFs $SF$ and all isolated $S\subseteq A(SF)$, 
		it holds $\sigma(\proj{SF}{S})=\{E\cap S\mid E\in \sigma(SF)\}$.
	\end{definition}
	Clearly, directionality implies non-interference.
	It is easy to see from the respective definitions that also naive, semi-stable, and stage semantics satisfy non-interference.
	\section{SCC-Recursiveness}
	\label{section:sccs}
	SCC-recursiveness~\citep{BaroniGG05a} can
	be seen as a tool to iteratively compute extensions, or an alternative characterization of many semantics.
	SCC-recursiveness formalizes the intuition that the acceptance status of an argument depends
	only on its ancestors---i.e., the arguments that feature a directed path to the argument in question.	
	In Section~\ref{sec:dir} we captured this concept with the notion of \emph{influence}.
	In a nutshell, an argument $a$ ``influences'' an argument $b$ in a SETAF $SF$
	if there is a directed path from $a$ to $b$ in $\primal(SF)$.
	It is therefore reasonable to investigate SCCs (strongly connected components)
	with this idea in mind.
	In particular, our definition of SCCs captures the equivalence classes w.r.t.\ the influence relation.
	\begin{definition}[SCCs]
		Let $SF$ be a SETAF. By $\SCCs(SF)$ we denote the set of strongly connected components of $SF$, which we define as the sets of arguments contained in the strongly connected components of $\primal(SF)$.
	\end{definition}
	
	\begin{example}\label{ex:scc}
		Recall our SETAF from before. 
		\begin{center}
			\begin{tikzpicture}[scale=0.8,>=stealth]
			\path (-1,-1) node[draw,thick,dashed,color=black!60,fill=white!0,minimum size=0.9cm,circle](s1){}
			;
			\path (0.5,-2) node[draw,thick,dashed,color=black!60,fill=white!0,minimum size=0.9cm,circle](s2){}
			;
			\path 
			(-1,-1)node[arg] (a){$a$}
			(0,0)node[arg] (b){$b$}
			(0.5,-2)node[arg] (c){$c$}
			(2,0)node[arg] (d){$d$}
			(2,-2)node[arg] (e){$e$}
			(4,0)node[arg] (f){$f$}
			(4,-2)node[arg] (h){$h$}
			(6,-1)node[arg] (g){$g$}
			;
			\path[thick,->]
			(a)edge(b)
			(b)edge[color=cyan,out=-60,in=90,-](c)
			(d)edge[color=cyan,out=225,in=90](c)
			(d)edge[color=violet,out=-45,in=90](h)
			(f)edge[color=violet,out=-90,in=90,-](h)
			;
			\path[thick,->,bend left=20]
			(d)edge(b)
			(b)edge(d)
			(d)edge(e)
			(e)edge(d)
			(f)edge(g)
			(g)edge(f)
			(h)edge(g)
			(g)edge(h)
			;
			\draw [thick,dashed,color=black!60] plot [mark=none, smooth cycle] coordinates {(-0.4,0.4) (2.4,0.4) (2.4,-2.4) (1.6,-2.4) (1.2,-0.8) (-0.4,-0.4)};
			\draw [thick,dashed,color=black!60] plot [mark=none, smooth cycle] coordinates {(3.7,0.55) (6.7,-1.)  (3.7,-2.55) };
			
			\end{tikzpicture}
		\end{center}
		In this SETAF, we have the four SCCs
		$\{a\}, \{b,d,e\}, \{c\}$, and $\{f,g,h\}$, 
		as depicted in dashed lines.  
	\end{example}
	Analogously to \citep{BaroniGG05a}, we partition the arguments in defeated, provisionally defeated and undefeated ones.
	Intuitively, accepting a defeated argument would lead to a conflict,
	the provisionally defeated cannot be defended and will therefore be rejected (while not being irrelevant for defense of other arguments),
	and the undefeated form the candidates for extensions. We obtain the following formal definition of the sets we just described. 
	\begin{definition}
		Let $SF=(A,R)$ be a SETAF.
		Moreover, let $E\subseteq A$ be a set of arguments and $S\in \SCCs(SF)$ be an SCC.
		We define the set of 
		defeated arguments $\Dscc{SF}{S}{E}$, 
		provisionally defeated arguments $\Pscc{SF}{S}{E}$, and 
		undefeated arguments $\Uscc{SF}{S}{E}$ 
		w.r.t.\ $S,E$ as 
		\begin{align*}
		\Dscc{SF}{S}{E} &= \{a\in S \mid E\setminus S \att_R a \}, \\
		\Pscc{SF}{S}{E} &= \{a\in S \mid A\setminus (S\cup E^+)\!\att_R\! a \}
		\!\setminus\!\Dscc{SF}{S}{E}, \\
		\Uscc{SF}{S}{E} &= S\setminus (\Dscc{SF}{S}{E}\cup\Pscc{SF}{S}{E}).
		\end{align*}
		Moreover, we set 
		$\UPscc{SF}{S}{E} = \Uscc{SF}{S}{E}\cup \Pscc{SF}{S}{E}$.
	\end{definition}

	In order to properly investigate SCC-recursiveness, we need the notion of the \emph{restriction}.
	The restriction coincides with the \emph{projection} on AFs~(cf.~Definition~\ref{def:projection}).
	However, in the following we will argue that
	the projection does not capture the intricacies of this process.
	Ultimately, we will see that for a reasonable restriction
	we need semantic tools that are similar to the reduct.
	For that, we revisit Example~\ref{ex:directionality}.
	
	\begin{example}\label{ex:scc_proj}
		Consider the following SETAFs (a) and (b).
		\begin{center}
			\begin{tikzpicture}[scale=0.8,>=stealth]
			\path 
			(0,0)node[arg] (a){$c$}
			(1,1.5)node[arg] (b){$b$}
			(-1,1.5)node[arg] (c){$a$}
			;
			\path[thick,->]
			(b)edge[in=90,out=-150,-](a)
			(c)edge[in=90,out=-30](a)
			(a)edge[in=-90,out=40](b);
			\path
			(0,-0.8)node(lab1){(a)}
			(4,-0.8)node(lab1){(b)}
			;
			\path 
			(4,0)node[arg] (a1){$c$}
			(5,1.5)node[arg] (b1){$d$}
			(3,1.5)node[arg] (c1){$b$}
			(2.5,0)node[arg] (d){$a$}
			(5.5,0)node[arg] (e){$e$}
			;
			\path[thick,->]
			(b1)edge[in=90,out=-150](a1)
			(c1)edge[in=90,out=-30](a1)
			(d)edge(c1)
			(a1)edge(e)
			(e)edge[loop,looseness=5,in=25,out=-25](e)
			(e)edge(b1);
			
			\end{tikzpicture}
		\end{center}
		In (a), assume we accept the argument $a$.
		Now for the remaining SCC $\{b,c\}$ the projection $\proj{SF}{\{b,c\}}$
		yields the attacks $(b,c)$ and $(c,b)$, as one might expect.
		In (b), assume we accept $a$ and therefore reject $b$.
		The projection $\proj{SF}{\{c,d,e\}}$
		yields a cycle of length $3$, and none of the remaining arguments can be accepted.
		However, as $c$ is defended this is not the expected behavior.
		One might argue that this notion of projection is too credulous, i.e.,
		attacks remain that have to be discarded.
		Recall Example~\ref{ex:directionality}
		where we defined an alternative projection, namely
		${SF}{\downarrow}^*_{S}=(S,\{ (T,h) \mid (T,h)\in R, T\cup \{h\}\subseteq S \})$.
		Now, one can check that we get the expected results in (b).
		However, in (a) ${SF}{\downarrow}^*_{\{b,c\}}$ only features the attack $(c,b)$, and it incorrectly seems like we cannot accept $b$.
		We solve this problem by adapting the notion of \emph{restriction}
		such that both cases are handled individually.
		We keep track of a set of rejected arguments and discard attacks
		once an argument in its tail is discarded---these attacks are irrelevant to the further evaluation of the SETAF.	
	\end{example}
	\begin{definition}[Restriction]\label{def:restriction}
		Let $SF=(A,R)$ be a SETAF and let $S,D\subseteq A$.
		We define the \emph{restriction} of $SF$ w.r.t.\ $S$ and $D$ as the SETAF 
		$\projIII{SF}{D}{S} = (S',R')$
		where
		\begin{align*}
		S' &= (A\cap S)\setminus D  \\ 
		R' &= \{ (T\cap S',h) \mid (T,h)\in R, h\in S',	T\cap D =\emptyset,\\
		&\qquad\qquad\qquad\qquad\,   T\cap S'\neq\emptyset  \}.
		\end{align*}
	\end{definition}
	The restriction handles both cases of Example~\ref{ex:scc_proj} according to our intuition.
	In (a) the SETAF $\projIII{SF}{\emptyset}{\{b,c\}}$ contains $b$ and $c$, and as we accepted the part tail of $(\{a,b\},c)$ outside $\{b,c\}$ (namely $a$),
	the attack $(b,c)$ is kept.
	In (b) $\projIII{SF}{\{b\}}{\{c,d,e\}}$ contains the ``attack-chain'' $(c,e),(e,d)$, and as the tail of $(\{b,d\},c)$ is already defeated, we disregard $(d,c)$.
	We want to emphasize that this example illustrates how the notion of projection is akin to the SETAF-reduct: 
	indeed, constructing $\projIII{SF}{D}{S}$ consists in projecting to a certain set of arguments and then i) removing attacks where defeated arguments are involved as well as 
	ii) partially evaluating the remaining tails. 
	Formally, the connection is as follows.
	\begin{lemma}\label{lemma:reduct_restriction}
		Let $SF=(A,R)$ be a SETAF and let $E,S\subseteq A$.
		Then\\
		$\projIII{SF}{(E\setminus S)^+}{S} = \projI{ SF^{(E\setminus S)}}{S}$.
	\end{lemma}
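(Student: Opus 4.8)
The plan is to unfold both sides of the claimed identity according to the definitions of reduct, restriction, and projection, and to verify that the resulting argument sets and attack relations coincide. Write $E' := E\setminus S$ and note that by construction $E'\cap S=\emptyset$; this disjointness is the single fact that will make all the bookkeeping collapse. Since $S\subseteq A$ we have $A\cap S=S$, and throughout I read the projection $\projI{SF^{E'}}{S}$ as projecting onto the arguments actually present, \ie onto $S\cap A(SF^{E'})$, which is the only reading consistent with Definition~\ref{def:projection}, whose kept heads are automatically restricted to the underlying argument set.

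First I would compare the argument sets. The left-hand side $\projIII{SF}{(E')^+_R}{S}$ has argument set $(A\cap S)\setminus (E')^+_R = S\setminus (E')^+_R$. For the right-hand side, the reduct $SF^{E'}$ has arguments $A\setminus (E')^\oplus_R = A\setminus(E'\cup (E')^+_R)$, and intersecting with $S$ gives $S\setminus(E'\cup (E')^+_R)$. Because $S\cap E'=\emptyset$, this simplifies to $S\setminus (E')^+_R$, matching the left-hand side. Denote this common argument set by $S'$.

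Next I would compare the attack relations. On the right, the reduct first turns each $(T,h)\in R$ with $T\cap (E')^+_R=\emptyset$, $T\not\subseteq E'$ and surviving head into the attack $(T\setminus E',h)$; the subsequent projection onto $S'$ then keeps $((T\setminus E')\cap S',h)$ whenever $h\in S'$ and $(T\setminus E')\cap S'\neq\emptyset$. Since $E'$ is disjoint from $S'\subseteq S$, we have $(T\setminus E')\cap S'=T\cap S'$, so the right-hand attacks are exactly the $(T\cap S',h)$ with $(T,h)\in R$, $T\cap (E')^+_R=\emptyset$, $T\not\subseteq E'$, $h\in S'$ and $T\cap S'\neq\emptyset$. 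On the left, the restriction directly produces $(T\cap S',h)$ with $(T,h)\in R$, $h\in S'$, $T\cap (E')^+_R=\emptyset$ and $T\cap S'\neq\emptyset$. The two descriptions differ only in the extra reduct-condition $T\not\subseteq E'$. But whenever $T\cap S'\neq\emptyset$ there is an element of $T$ lying in $S'\subseteq S$, which is disjoint from $E'$; hence $T\not\subseteq E'$ holds automatically and the condition is redundant. Therefore the attack relations coincide as well, proving the identity.

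The main obstacle is not a deep argument but the careful bookkeeping: one must (i) fix the convention for projecting onto a set not contained in the reduct's argument universe, and (ii) observe that the reduct's ``$T\not\subseteq E'$'' clause---absent from the restriction---is implied by the projection's non-emptiness requirement $T\cap S'\neq\emptyset$. Once the disjointness $E'\cap S=\emptyset$ is exploited to identify $(T\setminus E')\cap S'$ with $T\cap S'$ and to discharge the extra clause, both sides reduce to the same pair $(S',R')$.
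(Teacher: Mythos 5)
Your proof is correct and takes essentially the same route as the paper's: both unfold the three definitions, use the disjointness $(E\setminus S)\cap S=\emptyset$ to show the argument sets coincide, and then match the attack relations by noting that $(T\setminus(E\setminus S))\cap S = T\cap S$ and that the reduct's extra clause $T\not\subseteq E\setminus S$ is discharged by the non-empty tail intersection. Your explicit treatment of the projection convention and of the redundant clause is, if anything, spelled out more carefully than in the paper's terse appendix argument.
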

	Let us now formally introduce SCC-recursiveness~\citep{BaroniGG05a} as a SETAF principle.
	Extensions are recursively characterized as follows:
	if the SETAF $SF$ consists of a single SCC, the \emph{base function $\BF$} of the semantics
	yields the extensions.
	For SETAFs that consist of more SCCs we apply the \emph{generic selection function $\GF$},
	where $SF$ is evaluated separately on each SCC, taking into account arguments that are defeated by previous SCCs
	\begin{definition}[SCC-recursiveness]\label{principle:scc1}
		A semantics $\sigma$ satisfies \emph{SCC-recursiveness} if 
		for all SETAFs $SF=(A,R)$,
		it holds that $\sigma(SF)=\GF(SF)$,
		where $\GF(SF)\subseteq 2^A$ 
		is defined as: $E\subseteq A\in \GF(SF)$ if and only if
		\begin{itemize}
			\item if $|\SCCs(SF)|=1$, $E\in \BF(SF)$,
			\item otherwise, $\forall S\in \SCCs(SF)$ it holds $E\cap S \in \GF({\projIII{SF}{(E\setminus S)^+}{\UPscc{SF}{S}{E}}})$,
		\end{itemize}
		where $\BF$ 
		is a function that maps a SETAF $SF=(A,R)$ with $|\SCCs(SF)|=1$
		to a subset of $2^A$.
	\end{definition}
	In the following subsections we will investigate and refine SCC-recursiveness
	for the different semantics under our consideration.
	For the proofs we loosely follow the structure of~\citep{BaroniGG05a},
	incorporating our SETAF-specific notions.
	\subsection{Stable Semantics}
	We start with stable semantics, as this is the easiest case.
	\begin{example}
		Recall Example~\ref{ex:scc} and consider the stable extension 
		$E = \{ a, c, d, f \}$
		of $SF$. 
		Let 
		$S = \{b,d,e\}$. 
		Indeed, 
		$E\cap S = \{d\}$
		is a stable extension of the projected SETAF 
		$\projIII{SF}{(E\setminus S)^+}{S} =
		(\{d,e\}, \{(d,e),(e,d)\})$. 
	\end{example}
	In this section we will show that this is no coincidence, \ie $\stb$ satisfies SCC-recursiveness. 
	For the investigation of SCC-recursiveness in stable semantics
	we use the fact that there are no undecided arguments. 
	Thus, in each step we do not have to keep track of as much information from previous SCCs. 
	Formally, we obtain the following auxiliary lemma. 
	\begin{lemma}
		\label{lem:noPinStable}
		Let $SF$ be a SETAF and
		$E\!\in\! \stb(SF)$, then
		for all $S\!\in\! \SCCs(SF)$ it holds $\Pscc{SF}{S}{E}\!=\!\emptyset$.
	\end{lemma}
	\begin{proof}
		For an argument $a$ to be provisionally defeated,
		there has to be an attack $(T,a)$ with $T\not\subseteq (E\cup E^+)$,
		a contradiction to the requirement of stable extensions.
	\end{proof}
	We continue with the main technical underpinning for the SCC-recursive characterization of stable semantics.
	Intuitively, Proposition~\ref{lemma:sccStableKey} states that an extension $E$ is ``globally'' stable in $SF$
	if and only if for each of its SCCs $S$ it is ``locally'' stable
	in $\projIII{SF}{(E\setminus S)^+}{\UPscc{SF}{S}{E}}$.
	\begin{proposition}
		\label{lemma:sccStableKey}
		Let $SF=(A,R)$ be a SETAF and let $E \subseteq A$, then
		$E\in \stb(SF)$ if and only if $\forall S\in \SCCs(SF) $ it holds
		$(E\cap S) \in \stb(\projIII{SF}{(E\setminus S)^{+}}{\UPscc{SF}{S}{E}})$.
	\end{proposition}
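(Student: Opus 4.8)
The plan is to prove both implications directly from the characterisation that $E\in\stb(SF)$ iff $E\in\cf(SF)$ and $E_R^\oplus = A$, i.e.\ every argument is accepted or attacked by $E$. Throughout I write $SF_S$ for $\projIII{SF}{(E\setminus S)^+}{\UPscc{SF}{S}{E}}$ and first record two facts used repeatedly. Unwinding Definition~\ref{def:restriction}, the argument set of $SF_S$ is exactly $\UPscc{SF}{S}{E}=S\setminus\Dscc{SF}{S}{E}$, and a surviving attack has the form $(T\cap S,h)$ for some $(T,h)\in R$ with $h\in\UPscc{SF}{S}{E}$, $T\cap(E\setminus S)^+=\emptyset$ and $T\cap S\neq\emptyset$; in particular the discarded tail part is $T\setminus S$, and none of $T$ is attacked by $E\setminus S$. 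The structural backbone is the influence relation: if $(T,h)\in R$ and $h\in S$, then each $t\in T$ induces an edge $t\to h$ in $\primal(SF)$ and hence influences $h$, so $T$ lies in $S$ together with the SCCs that influence $S$. For stable $E$ we moreover have $\Pscc{SF}{S}{E}=\emptyset$ by Lemma~\ref{lem:noPinStable}, so $\UPscc{SF}{S}{E}=\Uscc{SF}{S}{E}$; and since $E\in\cf(SF)$ no argument of $E\cap S$ is attacked by $E\setminus S$, whence $E\cap S\subseteq\UPscc{SF}{S}{E}=A(SF_S)$.

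For the forward direction assume $E\in\stb(SF)$ and fix an SCC $S$. To see $E\cap S\in\cf(SF_S)$, suppose $(T\cap S,h)$ were a conflict inside $E\cap S$, coming from $(T,h)\in R$ as above. Then $T\cap S\subseteq E$, and for each discarded $t\in T\setminus S$ (which lies in an SCC influencing $S$ and is not attacked by $E\setminus S$) stability forces $t\in E$: otherwise $E\att_R t$ with a tail lying entirely outside $S$, hence inside $E\setminus S$, contradicting that $t$ is not attacked by $E\setminus S$. Thus $T\subseteq E$ and $h\in E$, contradicting $E\in\cf(SF)$. For stability of $E\cap S$ in $SF_S$, take $a\in A(SF_S)\setminus(E\cap S)$; then $a\in S\setminus E$ and $a\notin\Dscc{SF}{S}{E}$, so by global stability some $(T,a)\in R$ has $T\subseteq E$, with $T\cap S\neq\emptyset$ (else $E\setminus S\att_R a$, i.e.\ $a\in\Dscc{SF}{S}{E}$). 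Since $E\in\cf(SF)$ gives $T\cap(E\setminus S)^+=\emptyset$, the attack survives as $(T\cap S,a)$ in $SF_S$ with $T\cap S\subseteq E\cap S$, so $E\cap S$ attacks $a$ in $SF_S$.

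For the backward direction I would induct on an ordering of the SCCs extending the influence relation (ancestors first), proving simultaneously that no attack $(T,h)\in R$ with $T\cup\{h\}\subseteq E$ has its head $h$ in $S$, and that every $a\in S\setminus E$ is attacked by $E$ in $SF$. The hypothesis $E\cap S\in\stb(SF_S)$ already forces $E\cap S\subseteq A(SF_S)$, so $E$ contains no defeated argument. For conflict-freeness, a conflict with head $h\in E\cap S$ has $h\notin(E\setminus S)^+$ (so $h\in A(SF_S)$), its tail satisfies $T\cap(E\setminus S)^+=\emptyset$ (any tail element in $(E\setminus S)^+$ would, being in $E$, yield a conflict either inside an ancestor SCC, contradicting the induction hypothesis, or inside $E\cap S$ but then outside $A(SF_S)$, which is impossible), and $T\cap S\neq\emptyset$; hence it restricts to a conflict of $E\cap S$ in $SF_S$, a contradiction. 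For the attack property, if $a\in S\setminus E$ with $a\in\Dscc{SF}{S}{E}$ then $E\setminus S\att_R a$ directly; otherwise $a\in A(SF_S)$ and local stability yields $(T,a)\in R$ with $T\cap S\subseteq E$ and $T\cap(E\setminus S)^+=\emptyset$, and the induction hypothesis upgrades each discarded ancestor tail element $t\in T\setminus S$ to membership in $E$ (it is not attacked by $E\setminus S$, so by induction it cannot be merely $E$-attacked), giving $T\subseteq E$ and thus $E\att_R a$. Collecting both properties over all SCCs yields $E\in\cf(SF)$ and $E_R^\oplus=A$, i.e.\ $E\in\stb(SF)$.

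I expect the main obstacle to be the backward direction, specifically the bookkeeping that makes the induction go through: one must justify, for attacks whose tails cross SCC boundaries, that a discarded ancestor argument which is \emph{not} attacked by $E\setminus S$ must in fact belong to $E$. This is exactly where the restriction's condition $T\cap(E\setminus S)^+=\emptyset$ interacts with the influence structure, and it is what forces the proof to proceed along a topological order rather than treating SCCs in isolation. The companion Lemma~\ref{lemma:reduct_restriction}, relating the restriction to the reduct, offers an alternative, more algebraic route, but for stable semantics the direct influence-based argument above seems cleanest.
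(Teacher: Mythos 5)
Your proof is correct and takes essentially the same route as the paper's: show that global stability yields the three local conditions (membership of $E\cap S$ in $\UPscc{SF}{S}{E}$, local conflict-freeness, local range coverage) and, conversely, recover global conflict-freeness and the attack property from the local ones using the influence structure of the SCCs, with Lemma~\ref{lem:noPinStable} playing the same auxiliary role. The only difference is presentational: where the paper handles cross-SCC tail elements with a terse ``by construction'', you make the underlying topological-order induction over ancestor SCCs explicit, which faithfully fills in the same argument rather than introducing a new one.
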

	\begin{proof}
		Let $SF'=\pte{S}$ be the ``local'' SETAF w.r.t.\ an arbitrary SCC $S\in \SCCs(SF)$.
		If we assume $E$ is globally stable,
		we need to show that
		(1) $(E\cap S)$ is contained in the local SETAF, i.e.\ $E\cap S\subseteq \UPscc{SF}{S}{E}$,
		(2) $(E\cap S)$ is locally conflict-free,
		(3) $(E\cap S)$ (locally) attacks all arguments not in it.
		For 1.\ note that by global conflict-freeness there cannot be an $a\in \Dscc{SF}{S}{E}\cap E$.
		Also 2.\ follows from global conflict-freeness, as any violation of local conflict-freeness by an attack $(T,h)\in R(SF')$ would imply the existence of an attack $(T',h)\in R$ with $T\subseteq T' \subseteq E$.
		For 3.~consider an arbitrary argument $a\in A(SF')\setminus E$.
		As $E$ is globally stable, we know $E\att_R a$.
		Moreover, as we assume $a\in A(SF')$ we know $a$ is not defeated, i.e.\ (parts of) an attack towards $a$ appears in $R(SF')$, establishing that $(E\cap S)$ is locally stable.
		For the other direction, assume $E$ is locally stable for every SCC.
		Global conflict-freeness follows from local conflict-freeness,
		as any attack $(T,h)$ with $T\cup \{h\}\subseteq E$ would imply a conflict in the SCC of $h$.
		It remains to show that $E\att_R a$ for all $a\in A\setminus E$.
		For $a$'s SCC $S$, either $a\in \Dscc{SF}{S}{E}$ (and we are done)
		or $a\in \UPscc{SF}{S}{E}$, in which case it is locally attacked,
		and by construction of the restriction globally attacked.		
	\end{proof}
	This leads us the the characterization of stable extensions. 
	The base function is $\stb(SF)$, the base case follows immediately.
	The composite case follows from~Proposition~\ref{lemma:sccStableKey}.
	\begin{theorem}
		Stable semantics is SCC-recursive.
	\end{theorem}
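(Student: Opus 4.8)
The plan is to instantiate the SCC-recursive scheme of Definition~\ref{principle:scc1} with base function $\BF = \stb$ and then verify $\stb(SF) = \GF(SF)$ for every SETAF $SF=(A,R)$ by induction on the number of arguments $|A|$. Fixing $\BF(SF) = \stb(SF)$ immediately settles the base case of the recursion: whenever $|\SCCs(SF)| = 1$ we have $\GF(SF) = \BF(SF) = \stb(SF)$ by definition, so there is nothing left to show.

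For the inductive step I would assume $|\SCCs(SF)| > 1$ and that $\stb(SF'') = \GF(SF'')$ already holds for every SETAF $SF''$ with strictly fewer arguments than $SF$. The key observation enabling the induction is that each recursive call lands on a strictly smaller framework: for any $S \in \SCCs(SF)$ the restricted SETAF $\projIII{SF}{(E\setminus S)^+}{\UPscc{SF}{S}{E}}$ has argument set contained in $\UPscc{SF}{S}{E} \subseteq S$, and since there are at least two SCCs we have $S \subsetneq A$, so this framework has fewer than $|A|$ arguments. Hence the induction hypothesis applies to it.

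With this in hand, the equivalence falls out by chaining three steps. By the composite case in the definition of $\GF$, $E \in \GF(SF)$ iff for every $S \in \SCCs(SF)$ we have $E \cap S \in \GF(\projIII{SF}{(E\setminus S)^+}{\UPscc{SF}{S}{E}})$; by the induction hypothesis this is equivalent to $E \cap S \in \stb(\projIII{SF}{(E\setminus S)^+}{\UPscc{SF}{S}{E}})$ for every $S$; and by Proposition~\ref{lemma:sccStableKey} the latter condition holds for all $S$ exactly when $E \in \stb(SF)$. Reading the chain from both ends yields $\stb(SF) = \GF(SF)$, closing the induction.

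The genuinely difficult content---relating global stability in $SF$ to local stability inside each SCC's restriction---is already carried by Proposition~\ref{lemma:sccStableKey}, which itself relies on Lemma~\ref{lem:noPinStable} to guarantee $E \cap S \subseteq \UPscc{SF}{S}{E}$ so that $E \cap S$ is a legitimate object of the local framework. Consequently the only point requiring care here is the well-foundedness of the recursion, i.e.\ the strict decrease in argument count, which is precisely what $|\SCCs(SF)| > 1$ secures. I expect this bookkeeping to be the main (and essentially only) obstacle, with no further technical difficulty once the size-decrease is noted.
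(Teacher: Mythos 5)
Your proof is correct and follows essentially the same route as the paper: the paper also takes $\BF = \stb$ and derives both the base and composite cases from Proposition~\ref{lemma:sccStableKey}, leaving the well-founded recursion (your induction on $|A|$) implicit. Your explicit induction is exactly the right way to unpack the paper's terse argument, and the only minor inaccuracy---attributing $E\cap S \subseteq \UPscc{SF}{S}{E}$ to Lemma~\ref{lem:noPinStable}, when it in fact follows from conflict-freeness of $E$---does not affect the argument.
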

	
	\subsection{Admissible Sets}
	As already mentioned, when investigating stable semantics we can use the observation that each argument is either in $E$ or defeated by $E$. 
	For admissibility-based semantics, undecidedness of arguments is also possible which we need to handle in our SCCs as well. 
	This is reflected in the definition due to an added second component of $\GF$ which intuitively collects all arguments that can still be defended within the current SCC.
	We account for this in Definition~\ref{def:scc2}
	by maintaining a set of candidate arguments $C$.
	Moreover,
	the particular case of SETAFs gives rise to a novel scenario, where certain attacks are present in an
	SCC, but  not applicable. 
	\begin{example}\label{example:mitigated}
		Recall our SETAF from before. 
		\begin{center}
			\begin{tikzpicture}[scale=0.8,>=stealth]
			\draw [thick,dashed,color=black!60] plot [mark=none, smooth cycle] coordinates {(-0.4,0.4) (2.4,0.4) (2.4,-2.4) (1.6,-2.4) (1.2,-0.8) (-0.4,-0.4)};
			\path 
			(-1,-1)node[arg,label={left:$SF:$}] (a){$a$}
			(0,0)node[arg] (b){$b$}
			(0.5,-2)node[arg] (c){$c$}
			(2,0)node[arg] (d){$d$}
			(2,-2)node[arg] (e){$e$}
			(4,0)node[arg] (f){$f$}
			(4,-2)node[arg] (h){$h$}
			(6,-1)node[arg] (g){$g$}
			;
			\path[thick,->]
			(a)edge(b)
			(b)edge[color=cyan,out=-60,in=90,-](c)
			(d)edge[color=cyan,out=225,in=90](c)
			(d)edge[color=violet,out=-45,in=90](h)
			(f)edge[color=violet,out=-90,in=90,-](h)
			;
			\path[thick,->,bend left=20]
			(d)edge(b)
			(b)edge(d)
			(d)edge(e)
			(e)edge(d)
			(f)edge(g)
			(g)edge(f)
			(h)edge(g)
			(g)edge(h)
			;
			\end{tikzpicture}
		\end{center}
		Let $E = \emptyset$. 
		First consider $S = \{b,d,e\}$. 
		Then 
		\begin{align*}
		\Dscc{SF}{S}{E} &=  \emptyset && \pte{S} = (S',R')\\
		\Pscc{SF}{S}{E} &= \{b \}  &&  S' = \{b,d,e\}\\
		\Uscc{SF}{S}{E} &= \{d,e \} && R' = \{(b,d),(d,b),(d,e),(e,d)\}
		\end{align*}
		In contrast to the situation we saw for stable semantics we now also need to bear in mind that $b$ cannot be defended by arguments in $S$ (due to $a$). 
		Hence the additional information in $\GF$ is required. 
		Now let $T = \{f,g,h\}$. 
		Then 
		\begin{align*}
		\Dscc{SF}{T}{E} &=  \emptyset && \pte{T} = (T',R'')\\
		\Pscc{SF}{T}{E} &= \emptyset  &&  T' = \{f,g,h\}\\
		\Uscc{SF}{T}{E} &= \{f,g,h \} && R'' = \{(f,h),(g,h),(h,g),\\
		&&&(f,g),(g,f)\}
		\end{align*}
		We now observe that although there is an attack from $f$ to $h$ in $\projIII{SF}{\emptyset}{T}$,
		the argument $h$ can actually not be defeated by $f$, because this would require $d$ to be present in our extension.
		Note however that we cannot delete the attack $(f,h)$,
		as this would mean we could accept $h$---without defending $h$ against the attack from $\{d,f\}$.
		Consequently, we will keep track of these attacks that have to be considered for defense, but cannot themselves be used to defend an argument.
		We will call attacks of this kind \emph{mitigated}.  
		
	\end{example}
	\begin{definition}[Mitigated Attacks]
		Let $SF=(A,R)$ be a SETAF.
		Moreover, let $E\subseteq A$ 
		and $S\in \SCCs(SF)$. 
		We define the set of \emph{mitigated attacks} as follows:
		$\Mscc{SF}{S}{E}=\{(T,h)\in R(\projIII{SF}{(E\setminus S)^+}{\UPscc{SF}{S}{E}})\mid
		\forall (T',h)\in R:T'\supseteq T \Rightarrow (T'\setminus T)\not\subseteq E\}$.
	\end{definition}
	For the computation of mitigated attacks only the ancestor SCCs are relevant.
	In particular, the set $(T'\setminus T)$ is contained in ancestor SCCs of $S$ for each attack $(T',h)\in R$. 
	To account for these novel scenarios we adapt the notion of acceptance. 
	We have to assure that the ``counter-attacks'' used for defense are not mitigated.
	\begin{definition}[Semantics Considering $C$, $M$]\label{def:semantics2}
		Let $SF=(A,R)$ be a SETAF, and let $E,C\subseteq A$ and $M\subseteq R$.
		An argument $a\in A$ is \emph{acceptable considering $M$ w.r.t.\ $E$} 
		if for all $(T,a)\in R$ there is $(X,t)\!\in\! R\setminus M$ s.t.\ $X\!\subseteq \! E$ and $t\in T$.
		\begin{itemize}
			\item $E$ is \emph{admissible in $C$ considering $M$}, denoted by $E\in \adm(SF,C,M)$,
			if $E\subseteq C$, $E\in \cf(SF)$, and each $a\in E$ is acceptable considering $M$ w.r.t.\ $E$.
			\item $E$ is \emph{complete in $C$ considering $M$}, denoted by $E\in \com(SF,C,M)$,
			if $E\in \adm(SF,C,M)$ and $E$ contains all $a\in C$ acceptable considering $M$ w.r.t.\ $E$.
			\item $E$ is \emph{preferred in $C$ considering $M$}, denoted by $E\in \pref(SF,C,M)$,
			if $E\in \adm(SF,C,M)$ and there is no $E'\in \adm(SF,C,M)$ with $E'\supset E$.
		\end{itemize}
		The \emph{characteristic function $F^{M}_{SF,C}$ of $SF$ in $C$ considering $M$}
		is the mapping $F^{M}_{SF,C}\!:\! 2^C\!\rightarrow\! 2^C$
		where\\
		$F^{M}_{SF,C}(E)= \{a\!\in\! C\mid a \text{ is acceptable  considering }M\text{ w.r.t.~} E \} $.
		\begin{itemize}
			\item $E$ is \emph{grounded in $C$ considering $M$}, denoted by $E\in \grd(SF,C,M)$,
			if $E$ is the least fixed point of $F^{M}_{SF,C}$.
		\end{itemize}
	\end{definition}
	Setting $C=A$ and $M=\emptyset$ recovers the original semantics,
	in these cases we will omit writing the respective parameter.
	The standard relationships hold for this generalization of the semantics, as the following result illustrates.
	\begin{theorem}\label{theorem:new_semantics}
		Let $SF$ be a SETAF, and let $C\subseteq A$ and $M\subseteq R$.
		Then,
		(1) $F^{M}_{SF,C}$ is monotonic,
		(2) $E\in \grd(SF,C,M)$ is the least set in $\com(SF,C,M)$ w.r.t.\ $\subseteq$, and
		(3) $E\in \pref(SF,C,M)$ are the maximal sets in $\com(SF,C,M)$ w.r.t.\ $\subseteq$.
	\end{theorem}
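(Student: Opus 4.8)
The plan is to follow the classical development of the characteristic function for grounded, complete and preferred semantics, adapted to the parameters $C$ and $M$, with the one delicate point being that the counter-attacks witnessing acceptability are drawn from $R\setminus M$ yet are ordinary members of $R$ as far as conflicts are concerned. The first thing I would record is a reformulation that drives parts (2) and (3): by unfolding Definition~\ref{def:semantics2}, a set $E$ lies in $\com(SF,C,M)$ exactly when $E\subseteq C$, $E\in\cf(SF)$, and $E$ is a fixed point of $F^{M}_{SF,C}$. Indeed, admissibility gives $E\subseteq F^{M}_{SF,C}(E)$ and the closure clause of completeness gives $F^{M}_{SF,C}(E)\subseteq E$, so together $E=F^{M}_{SF,C}(E)$; the converse is immediate. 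Thus ``complete'' means ``conflict-free fixed point inside $C$''.

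For (1) I would argue straight from the definition of acceptability: if $E\subseteq E'$ and $a$ is acceptable considering $M$ w.r.t.\ $E$, then for every $(T,a)\in R$ the witnessing $(X,t)\in R\setminus M$ with $X\subseteq E$ still satisfies $X\subseteq E'$, so $a$ remains acceptable w.r.t.\ $E'$; hence $F^{M}_{SF,C}(E)\subseteq F^{M}_{SF,C}(E')$. For (2), monotonicity together with finiteness of $A$ yields by Knaster--Tarski a least fixed point $G=\bigcup_i (F^{M}_{SF,C})^i(\emptyset)$, which is the grounded set. The crucial step is conflict-freeness of $G$, which I would prove by induction on the iterates $G_0=\emptyset$, $G_{i+1}=F^{M}_{SF,C}(G_i)$: assuming $G_i\in\cf(SF)$, any conflict $(T,h)\in R$ with $T\cup\{h\}\subseteq G_{i+1}$ forces $h$ to be acceptable w.r.t.\ $G_i$, giving $(X,t)\in R\setminus M$ with $X\subseteq G_i$ and $t\in T\subseteq G_{i+1}$; since $t$ is then itself acceptable w.r.t.\ $G_i$, the attack $(X,t)$ is countered by some $(X',t')\in R\setminus M$ with $X'\subseteq G_i$ and $t'\in X\subseteq G_i$, so $X'\cup\{t'\}\subseteq G_i$ contradicts the hypothesis. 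Hence $G$ is a conflict-free fixed point in $C$, i.e.\ $G\in\com(SF,C,M)$, and being the least fixed point it is contained in every complete set by the reformulation above, so it is the least complete set.

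For (3), preferred sets are by definition the $\subseteq$-maximal elements of $\adm(SF,C,M)$, and complete sets are admissible, so it suffices to establish a reinstatement lemma: if $E\in\adm(SF,C,M)$ and $a\in C$ is acceptable considering $M$ w.r.t.\ $E$, then $E\cup\{a\}\in\adm(SF,C,M)$. Self-defense and $E\cup\{a\}\subseteq C$ are immediate from monotonicity of acceptability, so the only real work is conflict-freeness of $E\cup\{a\}$. I would handle this by case analysis on a putative conflict $(T,h)$ with $T\cup\{h\}\subseteq E\cup\{a\}$ not already internal to $E$: it must involve $a$ either as head ($h=a$) or in the tail ($a\in T$, $h\in E$); in the first case $a$'s self-defense against $(T,a)$, in the second case $E$'s defense of $h$, produces a counter-attack that I trace — using that $a$ defends itself against every attack on it — down to an attack whose tail and head both sit in $E$, contradicting $E\in\cf(SF)$. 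Granted this lemma, a $\subseteq$-maximal admissible set must already contain every acceptable argument of $C$ (else it could be enlarged), hence is complete; combined with $\com(SF,C,M)\subseteq\adm(SF,C,M)$ this shows the maximal elements of the two families coincide, giving $\pref(SF,C,M)$ as the maximal complete sets.

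The step I expect to be the main obstacle is precisely the two conflict-freeness arguments. The collective nature of attacks means a new conflict can arise from a tail spread across $E$ and the freshly added argument, so the case analysis genuinely uses both the head-position and tail-position scenarios, and one must chase the defending attacks carefully to land inside $E$ (resp.\ $G_i$). The role of $M$ is the subtle part to get right: the witnessing counter-attacks are required to live in $R\setminus M$, but for the purpose of detecting a conflict they are treated as plain elements of $R$, and it is exactly this that allows the reduction to an internal conflict to go through without the mitigation set interfering.
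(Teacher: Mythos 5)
Your proposal is correct, and its skeleton matches the paper's: monotonicity straight from the definition, the grounded set as the limit of iterating $F^{M}_{SF,C}$ from $\emptyset$, and preferred sets via transferring $\subseteq$-maximality between $\adm(SF,C,M)$ and $\com(SF,C,M)$. Where you genuinely diverge is in how conflict-freeness is secured. The paper makes one observation---since witnesses for acceptability considering $M$ lie in $R\setminus M\subseteq R$ and have tails in $E$, defense in $C$ considering $M$ implies ordinary defense---and then imports the fundamental lemma of Nielsen and Parsons as a black box, which delivers conflict-freeness of $E\cup\{a\}$ (and hence of all iterates) for free. You instead re-prove everything in the generalized setting: an inductive conflict-freeness argument along the iterates $G_i$ for part (2), and a hand-proved $(C,M)$-reinstatement lemma with the two-step conflict chase for part (3); your chase (counter-attack lands in $E$ unless its head is $a$, in which case countering the attack on $a$ lands both tail and head in $E$, resp.\ $G_i$) is exactly a re-derivation of the fundamental lemma, and your closing remark correctly isolates the one delicate point, namely that counter-attacks are drawn from $R\setminus M$ for defense but count as plain members of $R$ for conflicts---the very fact the paper's one-line reduction exploits implicitly. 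The paper's route is shorter; yours is self-contained and makes the role of $M$ explicit. One step you compress in (3): to see that a maximal element of $\com(SF,C,M)$ is maximal in $\adm(SF,C,M)$, you must extend a hypothetical admissible proper superset $E'$ to a complete proper superset---the paper does this by closing $E'$ under $F^{M}_{SF,C}$ (using monotonicity and the fundamental lemma), while your reinstatement lemma plus finiteness yields it by passing to a maximal admissible extension; either way the step is available to you, so this is a presentational omission rather than a gap.
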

	
	We now redefine Definition~\ref{principle:scc1} in order to capture the admissibility-based semantics.
	\begin{definition}[SCC-recursiveness]\label{def:scc2}
		A semantics $\sigma$ satisfies \emph{SCC-recursiveness} if and only if for all SETAFs $SF=(A,R)$
		it holds $\sigma(SF)=\GF(SF, A, \emptyset)$,
		where $\GF(SF,C,M)\subseteq 2^A$ 
		is defined as:
		
		$E\subseteq A\in \GF(SF,C,M)$ if and only if
		\begin{itemize}
			\item if $|\SCCs(SF)|=1$, $E\in \BF(SF,C,M)$,
			\item otherwise, $\forall S\in \SCCs(SF)$ it holds\\
			$E\cap S \in \GF({\projIII{SF}{(E\setminus S)^+}{\UPscc{SF}{S}{E}}},\Uscc{SF}{S}{E}\cap C, \Mscc{SF}{S}{E} )$,
		\end{itemize}
		where $\BF$ 
		maps $SF=(A,R)$ with $|\SCCs(SF)|=1$ and sets $C\subseteq A$, $M\subseteq R$ to a subset of $2^A$.
	\end{definition}
	
	Towards an SCC-recursive characterization of admissible sets
	we discuss the following auxiliary results.
	Lemma~\ref{lemma:scc_adm1} shows that global acceptability implies local acceptability, Lemma~\ref{lemma:scc_adm2} shows the converse direction.
	
	\begin{lemma}\label{lemma:scc_adm1}
		Let $SF=(A,R)$ be a SETAF, let $E\in \adm(SF)$ be an admissible set of arguments,
		and let $a$ be acceptable w.r.t.\ $E$ in $SF$, where $a$ is in the SCC $S$.
		Then it holds $a\in \Uscc{SF}{S}{E}$ and $a$ is acceptable w.r.t.\
		$(E\cap S)$ in $\projIII{SF}{(E\setminus S)^+}{\UPscc{SF}{S}{E}}$ considering $\Mscc{SF}{S}{E}$.
		Moreover, $(E\cap S)$ is conflict-free in $\projIII{SF}{(E\setminus S)^+}{\UPscc{SF}{S}{E}}$.
		
	\end{lemma}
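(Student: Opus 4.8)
The plan is to verify the three assertions in the natural order, relying throughout on two structural facts: that $E$ is conflict-free, and that in $\primal(SF)$ every tail element of an attack whose head lies in $S$ is an ancestor of $S$. Write $SF' = \projIII{SF}{(E\setminus S)^+}{\UPscc{SF}{S}{E}}$ and $M=\Mscc{SF}{S}{E}$.

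First I would show $a\in\Uscc{SF}{S}{E}$. If $a\in\Dscc{SF}{S}{E}$, then $E\setminus S\att_R a$, so some attack $(T,a)\in R$ has $T\subseteq E$; since $a$ is acceptable w.r.t.\ $E$, the set $E$ must counter-attack $T$, yielding $E\att_R E$ and contradicting conflict-freeness. If $a\in\Pscc{SF}{S}{E}$, then some $(T,a)\in R$ has $T\cap(S\cup E^+_R)=\emptyset$; acceptability gives a counter-attack $(X,t)\in R$ with $X\subseteq E$ and $t\in T$, whence $t\in E^+_R$, contradicting $T\cap E^+_R=\emptyset$. Hence $a\in\Uscc{SF}{S}{E}\subseteq A(SF')$.

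Next, for conflict-freeness of $E\cap S$ in $SF'$, suppose $(T,h)\in R(SF')$ with $T\cup\{h\}\subseteq E\cap S$, arising from $(T_0,h)\in R$ with $T=T_0\cap\UPscc{SF}{S}{E}$ and $T_0\cap(E\setminus S)^+=\emptyset$. Acceptability of $h\in E$ yields $(X,t)\in R$ with $X\subseteq E$ and $t\in T_0$. The key step is to locate $t$: since $t\to h$ is an edge of $\primal(SF)$ and $h\in S$, the argument $t$ is an ancestor of $S$; if any tail element of $X$ lay in $E\cap S$, then $t$ would also be a descendant of $S$, forcing $t\in S$. I would use the dichotomy on whether $t\in S$: if $t\in S$ then $t\notin\Dscc{SF}{S}{E}$ (as $T_0\cap(E\setminus S)^+=\emptyset$ and $\Dscc{SF}{S}{E}\subseteq(E\setminus S)^+$), so $t\in T\subseteq E$, giving $E\att_R E$; if $t\notin S$ then $X\subseteq E\setminus S$, so $t\in(E\setminus S)^+$, contradicting $T_0\cap(E\setminus S)^+=\emptyset$. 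Either way conflict-freeness follows.

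The main work is the acceptability claim. Fix $(T,a)\in R(SF')$ coming from $(T_0,a)\in R$ with $T=T_0\cap\UPscc{SF}{S}{E}$ and $T_0\cap(E\setminus S)^+=\emptyset$. Acceptability of $a$ w.r.t.\ $E$ gives $(X_0,t)\in R$ with $X_0\subseteq E$ and $t\in T_0$. The same ancestor argument shows $t\in S$ (otherwise $X_0\subseteq E\setminus S$ and $t\in(E\setminus S)^+$, impossible), and since $t\notin\Dscc{SF}{S}{E}$ we get $t\in\UPscc{SF}{S}{E}$, hence $t\in T$. I then project the counter-attack: because $E$ is conflict-free, $X_0\cap(E\setminus S)^+=\emptyset$; because $E\setminus S$ cannot attack $t\in S\setminus\Dscc{SF}{S}{E}$, the tail $X_0$ must meet $S$, and every element of $X_0\cap S$ lies in $\UPscc{SF}{S}{E}$. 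Thus $(X',t)\in R(SF')$ with $X'=X_0\cap\UPscc{SF}{S}{E}\subseteq E\cap S$ and $t\in T$. Finally $(X',t)\notin M$, because the original attack $(X_0,t)$ satisfies $X_0\supseteq X'$ and $X_0\setminus X'\subseteq E$, so the defining condition of a mitigated attack fails. This exhibits the required non-mitigated counter-attack, proving acceptability considering $M$.

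The principal obstacle is the interaction of collective attacks with the restriction in the acceptability step: I must guarantee that the global counter-attack survives projection to $SF'$ as a genuine, non-mitigated attack whose tail lands inside $E\cap S$. This hinges on the SCC-reachability dichotomy to pin $t$ inside $S$, on conflict-freeness of $E$ to discharge the restriction's $T_0\cap(E\setminus S)^+=\emptyset$ side-condition, and on reading the mitigation definition correctly so that the larger original tail $X_0$ certifies non-mitigation.
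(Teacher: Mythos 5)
Your proof is correct and takes essentially the same route as the paper's: you locate the head $t$ of the global counter-attack inside $S$ via the path argument in $\primal(SF)$, use conflict-freeness of $E$ to discharge the $X_0\cap(E\setminus S)^+=\emptyset$ side-condition so that the counter-attack survives the restriction with tail in $E\cap S$, and let the original tail $X_0$ witness that the projected counter-attack is not in $\Mscc{SF}{S}{E}$. The only cosmetic difference is that you verify $a\in\Uscc{SF}{S}{E}$ (and $E\cap S\subseteq\Uscc{SF}{S}{E}$) directly from acceptability and conflict-freeness, whereas the paper first invokes the fundamental lemma to obtain $E\cup\{a\}\in\adm(SF)$.
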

	\begin{proof}
		By the fundamental lemma~\citep{NielsenP06}
		we get $E\cup\{a\}\in \adm(SF)$, 
		and in particular $a\in \Uscc{SF}{S}{E}$ and $(E\cap S)\subseteq \Uscc{SF}{S}{E}$ for all $S\in \SCCs(SF)$.
		The key idea is that if (parts of) attacks towards an argument $x\in E$
		appear on the local level, (parts of) a global counter-attack also appear in $x$'s SCC.
		This applies both to $(E\cap S)$ and $a$.
		Regarding local conflict-freeness, this follows from the global admissibility
		of $E$: there would be a defending attack $(T,h)\in R$ against any attack violating local conflict-freeness with $T\cup \{h\}\subseteq E$, a contradiction.
	\end{proof}
	
	\begin{lemma}\label{lemma:scc_adm2}
		Let $SF=(A,R)$ be a SETAF, let $E\subseteq A$ 
		such that $(E\cap S)\in \adm(\projIII{SF}{(E\setminus S)^+}{\UPscc{SF}{S}{E}},\Uscc{SF}{S}{E},$ $\Mscc{SF}{S}{E})$ for all $S\in \SCCs(SF)$.
		Moreover, let $S'\!\in\! \SCCs(SF)$
		and let $a\!\in\! \Uscc{SF}{S'}{E}$
		be
		acceptable w.r.t.\ $(E\cap S')$ in 
		$\projIII{SF}{(E\setminus S')^+}{\UPscc{SF}{S'}{E}}$ considering $\Mscc{SF}{S'}{E}$.
		Then $a$ is acceptable w.r.t.\ $E$ in $SF$.
	\end{lemma}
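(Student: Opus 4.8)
The plan is to fix an arbitrary global attack $(T,a)\in R$ and show that $E$ attacks its tail, i.e.\ that some $t\in T$ satisfies $E\att_R t$; since this is precisely acceptability of $a$ w.r.t.\ $E$ in $SF$ (the instance $M=\emptyset$, $C=A$ of Definition~\ref{def:semantics2}), establishing it for every such $(T,a)$ proves the claim. The starting point is that $a\in\Uscc{SF}{S'}{E}$ means $a$ is neither defeated nor provisionally defeated. Unfolding $\Dscc{SF}{S'}{E}$ and $\Pscc{SF}{S'}{E}$, this says that no attack on $a$ has its entire tail inside $A\setminus(S'\cup E^+_R)$; hence every attacker set $T$ of $a$ satisfies $T\cap S'\neq\emptyset$ or $T\cap E^+_R\neq\emptyset$. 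I would record this dichotomy first, as it drives the case split.

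First I would dispatch the easy case $T\cap E^+_R\neq\emptyset$: any $t\in T\cap E^+_R$ is attacked by $E$, so $E$ attacks the tail and $a$ is defended against $(T,a)$. The main case is $T\cap E^+_R=\emptyset$. Here I would first argue that the attack survives in the local SETAF $\projIII{SF}{(E\setminus S')^+}{\UPscc{SF}{S'}{E}}$. Since $(E\setminus S')^+\subseteq E^+_R$, the assumption gives $T\cap(E\setminus S')^+=\emptyset$, and the dichotomy above then forces $T\cap S'\neq\emptyset$. Moreover, no $t\in T\cap S'$ can lie in $\Dscc{SF}{S'}{E}$, as such a $t$ would be in $(E\setminus S')^+\subseteq E^+_R$; therefore $T\cap S'\subseteq\UPscc{SF}{S'}{E}\setminus(E\setminus S')^+$, which is exactly the tail the restriction keeps (Definition~\ref{def:restriction}). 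Consequently $(T\cap S',a)$ is an attack in the local SETAF.

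Next I would invoke the hypothesis that $a$ is acceptable w.r.t.\ $(E\cap S')$ in the local SETAF considering $\Mscc{SF}{S'}{E}$: applied to the local attack $(T\cap S',a)$ it yields a local counter-attack $(\tilde X,\tilde t)\notin\Mscc{SF}{S'}{E}$ with $\tilde X\subseteq E\cap S'$ and $\tilde t\in T\cap S'\subseteq T$. The crux is to lift this to a global counter-attack. By the definition of $\Mscc{SF}{S'}{E}$, the fact that $(\tilde X,\tilde t)$ is \emph{not} mitigated means there exists a global attack $(T',\tilde t)\in R$ with $T'\supseteq\tilde X$ and $T'\setminus\tilde X\subseteq E$; combined with $\tilde X\subseteq E$ this gives $T'\subseteq E$, hence $E\att_R\tilde t$ with $\tilde t\in T$, as required.

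The delicate step---and the one that genuinely uses the SETAF-specific machinery---is this final lift, whose correctness hinges on the precise definition of mitigated attacks: a local counter-attack is declared mitigated exactly when no global attack extending it has its additional tail contained in $E$, so discarding the mitigated ones guarantees that every surviving local counter-attack used for defence lifts to a global attack whose tail is entirely accepted. I expect the main obstacle to be the careful bookkeeping here: verifying that the indices $(E\setminus S')^+$ and $\UPscc{SF}{S'}{E}$ of the restriction line up with the index set over which $\Mscc{SF}{S'}{E}$ is defined, and that $\tilde t\in T$ (not merely $\tilde t\in T\cap S'$) so that the lifted attack truly targets the original tail. The remaining routine facts---the inclusion $(E\setminus S')^+\subseteq E^+_R$ and the identity $T\cap\bigl(\UPscc{SF}{S'}{E}\setminus(E\setminus S')^+\bigr)=T\cap S'$ used above---I would verify by direct unfolding of the definitions.
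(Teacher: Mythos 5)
Your proposal is correct and follows essentially the same route as the paper's proof: the attack survives into the restriction $\projIII{SF}{(E\setminus S')^+}{\UPscc{SF}{S'}{E}}$, local acceptability yields a non-mitigated counter-attack $(\tilde X,\tilde t)$ with $\tilde X\subseteq E\cap S'$, and the definition of $\Mscc{SF}{S'}{E}$ lifts it to a global attack $(T',\tilde t)$ with $T'\subseteq E$. The only difference is organizational---you merge the paper's three-case split on the position of $T$ relative to $S'$ into two cases via the dichotomy $T\cap S'\neq\emptyset$ or $T\cap E^+_R\neq\emptyset$, and you state the existential content of non-mitigation slightly more carefully than the paper does---which changes nothing of substance.
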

	\begin{proof}
		We distinguish in 3 cases the relationship of an attack $(T,a)\in R$ to $S'$:
		(1)~$T\subseteq S'$. Either $T\cap \Dscc{SF}{S'}{E}\not=\emptyset$
		or there is a non-mitigated local counter-attack that corresponds to a global counter-attack.
		(2)~$T\subseteq A\setminus S'$. But then $T\cap E^+_R\not=\emptyset$.
		(3)~$T\cap S'\not=\emptyset$ and $T\cap (A\setminus S')\not=\emptyset$. Here we proceed as in (1).
		In all cases $a$ is defended.
	\end{proof}
	Combining these two results we obtain the SCC-recursive characterization of admissible sets.
	\begin{proposition}\label{prop:adm_key}
		Let $SF=(A,R)$
		be a SETAF
		and let $E\subseteq A$ be a set of arguments.
		Then $\forall C\subseteq A$ it holds $E\in \adm(SF,C)$ if and only if $\forall S\in \SCCs(SF)$ it holds
		$(E\cap S)\in  \adm(\projIII{SF}{(E\setminus S)^+}{\UPscc{SF}{S}{E}},\Uscc{SF}{S}{E}\cap C,\Mscc{SF}{S}{E})$. 
	\end{proposition}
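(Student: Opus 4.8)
The plan is to prove the two implications separately, in each case reducing the acceptability requirement to the transfer lemmas already at hand -- Lemma~\ref{lemma:scc_adm1} for the direction ``$\Rightarrow$'' and Lemma~\ref{lemma:scc_adm2} for ``$\Leftarrow$'' -- and treating conflict-freeness and the candidate-set inclusions as bookkeeping. Throughout I abbreviate the local framework of an SCC $S$ by $SF_S=\projIII{SF}{(E\setminus S)^+}{\UPscc{SF}{S}{E}}$. Two structural facts will be used repeatedly: the sets in $\SCCs(SF)$ partition $A$, and for any attack $(T,h)\in R$ with $h\in S$ every $t\in T$ influences $h$ in $\primal(SF)$, so $T$ is contained in $S$ together with the ancestor SCCs of $S$.

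For the forward direction I would assume $E\in\adm(SF,C)$, i.e.\ $E\subseteq C$, $E\in\cf(SF)$, and each $a\in E$ is acceptable w.r.t.\ $E$ in $SF$, and fix an SCC $S$. Since every $a\in E\cap S$ is acceptable w.r.t.\ $E$, Lemma~\ref{lemma:scc_adm1} gives $a\in\Uscc{SF}{S}{E}$, that $a$ is acceptable w.r.t.\ $(E\cap S)$ in $SF_S$ considering $\Mscc{SF}{S}{E}$, and (its ``Moreover'' part) that $(E\cap S)\in\cf(SF_S)$. Combining $E\cap S\subseteq\Uscc{SF}{S}{E}$ with $E\cap S\subseteq E\subseteq C$ yields $E\cap S\subseteq\Uscc{SF}{S}{E}\cap C$, so all three defining conditions of $(E\cap S)\in\adm(SF_S,\Uscc{SF}{S}{E}\cap C,\Mscc{SF}{S}{E})$ hold. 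This direction is therefore essentially Lemma~\ref{lemma:scc_adm1} applied argument by argument.

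For the backward direction I would assume the local condition for every SCC. The inclusion $E\subseteq C$ is immediate from $E\cap S\subseteq\Uscc{SF}{S}{E}\cap C\subseteq C$ and the fact that the SCCs cover $A$; the same observation gives $E\cap S\subseteq\Uscc{SF}{S}{E}$, which is exactly the hypothesis required to invoke Lemma~\ref{lemma:scc_adm2}. That lemma then lifts local acceptability of each $a\in E$ (in its own SCC) to acceptability of $a$ w.r.t.\ $E$ in $SF$. What remains -- and this is the step I expect to be the main obstacle -- is promoting local to global conflict-freeness, since local conflict-freeness is now an assumption rather than a consequence of global admissibility. I would argue by contradiction: take an attack $(T,h)\in R$ with $T\cup\{h\}\subseteq E$ and choose it so that the SCC $S$ of $h$ is minimal in the ancestor order. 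Using $E\cap S\subseteq\Uscc{SF}{S}{E}$ one checks that $h\notin(E\setminus S)^+$ and $T\cap(E\setminus S)^+=\emptyset$: any $t\in T\cap(E\setminus S)^+$ would either sit in $S$ and hence in $\Dscc{SF}{S}{E}$ (contradicting $t\in\Uscc{SF}{S}{E}$), or lie in a strictly earlier SCC and produce a conflict headed there, contradicting minimality. Moreover $T\cap S\neq\emptyset$, for otherwise $T\subseteq E\setminus S$ and $E\setminus S\att_R h$ would place $h\in\Dscc{SF}{S}{E}$. Hence the attack $(T\cap S',h)$ survives into $SF_S$ with $T\cap S'=T\cap S\subseteq E\cap S$ and $h\in E\cap S$, contradicting $(E\cap S)\in\cf(SF_S)$; this mirrors the conflict-freeness argument already used in Proposition~\ref{lemma:sccStableKey}. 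Thus $E\in\cf(SF)$, and together with the two inclusions and Lemma~\ref{lemma:scc_adm2} we conclude $E\in\adm(SF,C)$.
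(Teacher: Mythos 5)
Your proposal is correct and takes essentially the same route as the paper: the forward direction is Lemma~\ref{lemma:scc_adm1} applied argument by argument with the $C$-bookkeeping, and the backward direction combines Lemma~\ref{lemma:scc_adm2} for defense with a separate argument that local conflict-freeness lifts globally. Your minimal-counterexample choice (a violating attack whose head-SCC is minimal in the ancestor order, then showing $T\cap(E\setminus S)^+=\emptyset$ so the attack survives into the restriction) is simply a cleaner packaging of the paper's own finite-descent recursion through ancestor SCCs via counter-attacks, so no gap remains.
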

	\begin{proof}
		From Lemma~\ref{lemma:scc_adm1} we get the ``$\Rightarrow$'' direction.
		For the ``$\Leftarrow$'' direction we first establish global conflict-freeness.
		Assume towards contradiction there is an attack $(T,h)\in R$ with $T\cup \{h\}\subseteq E$.
		We distinguish in 3 cases the relationship of $T$ to $h$'s SCC $S$:
		(1)~$T\subseteq S$ contradicts local conflict-freeness.
		(2)~$T\subseteq A\setminus S$ contradicts our assumption $h\in \Uscc{SF}{S}{E}$.
		It has to be $E\att_R T$, as otherwise part of the attack appears locally,
		violating local conflict-freeness.
		This attack $(X,t)\in E$ with $X\subseteq E$, where again we can handle the relationship of $X$ to $t$'s SCC $S$.
		Again, case (1) and (2) lead to contradictions,
		and due to finiteness we can only apply (3) until we encounter an initial SCC.
		Inductively, we get back to the initial case, a contradiction.
		Hence, global conflict-freeness holds.
		Global defense follows from Lemma~\ref{lemma:scc_adm2}.
	\end{proof}
	The base function for admissible sets is $\adm(SF,C,M)$.
	We will utilize this result to obtain the characterizations of the other (admissibility-based) semantics.
	\begin{theorem}
		Admissible semantics is SCC-recursive.
	\end{theorem}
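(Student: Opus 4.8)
The plan is to establish that $\sigma = \adm$ satisfies SCC-recursiveness in the sense of Definition~\ref{def:scc2}, \ie to show $\adm(SF) = \GF(SF, A, \emptyset)$ for every SETAF $SF=(A,R)$. The natural strategy is an induction on the number of SCCs, with the crucial content being supplied by Proposition~\ref{prop:adm_key}. First I would observe that the base case, where $|\SCCs(SF)|=1$, is immediate: by Definition~\ref{def:scc2} we have $\GF(SF,A,\emptyset) = \BF(SF,A,\emptyset) = \adm(SF,A,\emptyset) = \adm(SF)$, using the remark that setting $C=A$, $M=\emptyset$ recovers the original semantics.

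For the inductive step, suppose $|\SCCs(SF)|>1$ and fix $E\subseteq A$. By the definition of $\GF$, membership $E\in\GF(SF,A,\emptyset)$ holds iff for every $S\in\SCCs(SF)$ we have $(E\cap S)\in\GF(\projIII{SF}{(E\setminus S)^+}{\UPscc{SF}{S}{E}}, \Uscc{SF}{S}{E}\cap A, \Mscc{SF}{S}{E})$. Here I would apply the induction hypothesis to each local SETAF $\projIII{SF}{(E\setminus S)^+}{\UPscc{SF}{S}{E}}$, which has strictly fewer SCCs than $SF$ (since at least $S$ itself is removed as a full SCC and the restriction only retains $\UPscc{SF}{S}{E}\subseteq S$ together with its ancestors' influence). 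This rewrites the local $\GF$-membership as local $\adm$-membership considering the candidate set $\Uscc{SF}{S}{E}\cap A = \Uscc{SF}{S}{E}$ and the mitigated attacks $\Mscc{SF}{S}{E}$. Thus the whole condition becomes exactly the right-hand side of Proposition~\ref{prop:adm_key} taken with $C=A$, and Proposition~\ref{prop:adm_key} then yields the equivalence with $E\in\adm(SF,A)=\adm(SF)$.

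The main subtlety, and the step I would treat most carefully, is verifying that the induction hypothesis genuinely applies to the local frameworks---namely that $|\SCCs(\projIII{SF}{(E\setminus S)^+}{\UPscc{SF}{S}{E}})| < |\SCCs(SF)|$ so the recursion is well-founded, and that the parameter $C$ in Proposition~\ref{prop:adm_key} must be carried through the recursion in the stronger form ``$\forall C\subseteq A$'' rather than fixed at $A$. The proposition is deliberately stated for all $C$ precisely so that the nested invocations, which pass down $\Uscc{SF}{S}{E}\cap C$ as the new candidate set, remain within its scope; I would make sure the induction is phrased to match this universally quantified $C$. The remaining work is purely bookkeeping: confirming that $\GF(\cdot,C,M)$ on a single-SCC framework unfolds to $\BF(\cdot,C,M)=\adm(\cdot,C,M)$, and that the identification of local candidate sets and mitigated-attack sets agrees verbatim with the parameters appearing in Proposition~\ref{prop:adm_key}. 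Once these identifications are in place, the theorem follows by combining the base case, the inductive rewriting via the definition of $\GF$, and Proposition~\ref{prop:adm_key}.
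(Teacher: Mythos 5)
Your overall route is exactly the paper's: unfold the recursion of Definition~\ref{def:scc2}, use $\BF(\cdot,C,M)=\adm(\cdot,C,M)$ in the base case, and let Proposition~\ref{prop:adm_key} do the work in the composite case (the paper states this even more tersely, simply naming the base function after the proposition). However, the step you single out as the one to treat most carefully fails as you state it: it is \emph{not} true that $\projIII{SF}{(E\setminus S)^+}{\UPscc{SF}{S}{E}}$ has strictly fewer SCCs than $SF$. The restriction lives entirely inside the single SCC $S$, and deleting the defeated arguments can shatter $S$ into arbitrarily many components. Concretely, take $A=\{a,b,c,d,e,f\}$ with attacks $(a,b)$ and the cycle $(b,c),(c,d),(d,e),(e,f),(f,b)$: this SETAF has two SCCs, yet for $E=\{a\}$ and $S=\{b,c,d,e,f\}$ we get $\Dscc{SF}{S}{E}=\{b\}$ and the restriction $\projIII{SF}{\{b\}}{\{c,d,e,f\}}$ is the chain $c\to d\to e\to f$, which has \emph{four} SCCs. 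Your parenthetical justification also misreads the construction: the restriction retains only $\UPscc{SF}{S}{E}\subseteq S$ and no ancestor arguments whatsoever. The repair is easy and standard: induct on $|A(SF)|$ instead. Whenever $|\SCCs(SF)|>1$, every SCC $S$ is a proper subset of $A$, so $|\UPscc{SF}{S}{E}|\leq |S|<|A|$ and the recursion is well-founded on the number of arguments.

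A secondary point on your quantifier bookkeeping: you rightly insist that $C$ be carried through the induction universally quantified, but the same is needed for the third parameter, and there Proposition~\ref{prop:adm_key} does not quite deliver. The recursive calls of $\GF$ pass down nonempty mitigated sets $\Mscc{SF}{S}{E}$, so the induction hypothesis you actually invoke is $\GF(SF',C',M')=\adm(SF',C',M')$ for \emph{arbitrary} $M'$; the proposition, however, characterizes only $\adm(SF',C')$, i.e.\ the case $M'=\emptyset$, on the global side. Pushing the induction below the first level of recursion therefore requires either a version of the proposition with an arbitrary $M$ on the left-hand side, or an argument that the $\Mscc$ sets recomputed at deeper levels capture exactly the mitigation information that matters. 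The paper leaves this equally implicit, so you are no worse off than the published proof here---but since you explicitly flagged the parameter-threading as the delicate step, this is the part of that bookkeeping your write-up omits, whereas the SCC-counting claim above is an outright false statement that must be replaced.
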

	
	\subsection{Further Semantics}
	Utilizing the characterization of admissible sets,
	we can formalize the SCC-recursive scheme grounded, complete, and preferred extensions in a similar manner.
	The key idea is that local admissibility implies global admissibility and vice versa.
	The respective minimality/maximality restrictions also carry over in both directions.
	\begin{proposition}
		Let $SF=(A,R)$
		be a SETAF, let $E\subseteq A$, 
		and let $\sigma\in \{\grd,\com,\pref\}$.
		Then $\forall C\subseteq A$ it holds $E\in \sigma(SF,C)$ if and only if $\forall S\in \SCCs(SF)$ it holds
		$(E\cap S)\in  \sigma(\projIII{SF}{(E\setminus S)^+}{\UPscc{SF}{S}{E}},\Uscc{SF}{S}{E}\cap C,\Mscc{SF}{S}{E})$. 
	\end{proposition}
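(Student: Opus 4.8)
The plan is to leverage the already-established SCC-recursive characterization of admissible sets (Proposition~\ref{prop:adm_key}) as the workhorse and then transfer the minimality/maximality and fixed-point conditions that distinguish $\grd$, $\com$, and $\pref$ from $\adm$. The key observation is that by Proposition~\ref{prop:adm_key}, for every $S\in\SCCs(SF)$ the global admissibility of $E$ in $C$ is equivalent to the conjunction of local admissibility conditions $(E\cap S)\in\adm(\projIII{SF}{(E\setminus S)^+}{\UPscc{SF}{S}{E}},\Uscc{SF}{S}{E}\cap C,\Mscc{SF}{S}{E})$. Since all three target semantics are defined as admissible sets satisfying an additional extremal condition (least fixed point for $\grd$, completeness for $\com$, $\subseteq$-maximality for $\pref$), the task reduces to showing that these extremal conditions respect the SCC decomposition in both directions.

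First I would handle $\pref$. Using Theorem~\ref{theorem:new_semantics}, preferred-in-$C$ sets are exactly the $\subseteq$-maximal admissible-in-$C$ sets. The ``$\Rightarrow$'' direction: if $E$ is globally maximal admissible and some local $(E\cap S)$ could be extended to a strictly larger locally admissible set, I would argue that re-globalizing via Proposition~\ref{prop:adm_key} yields a strictly larger globally admissible set, contradicting maximality of $E$. The ``$\Leftarrow$'' direction is the symmetric argument: if every $(E\cap S)$ is locally maximal but $E$ is not globally maximal, then a strictly larger global admissible $E^*\supset E$ would, when projected, make some $(E^*\cap S)$ strictly larger than $(E\cap S)$ while remaining locally admissible, contradicting local maximality. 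The subtlety is ensuring that enlarging one SCC does not disturb the $C$-parameters ($\Uscc{SF}{S}{E}\cap C$) and mitigated-attack sets ($\Mscc{SF}{S}{E}$) of the other SCCs; these depend on $E$, so I would exploit that these parameters only depend on ancestor SCCs and that additions in $E$ propagate consistently with the inductive SCC ordering.

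For $\com$ I would use that $E$ is complete in $C$ considering $M$ iff $E$ is admissible in $C$ and contains every $a\in C$ acceptable w.r.t.\ $E$. Here Lemmas~\ref{lemma:scc_adm1} and~\ref{lemma:scc_adm2} are decisive: they already establish that an argument is globally acceptable w.r.t.\ $E$ exactly when it is locally acceptable (considering the appropriate $\Mscc{SF}{S}{E}$) in its SCC. Thus the completeness closure condition transfers argument-by-argument across the SCC decomposition, and the characterization follows by combining this with Proposition~\ref{prop:adm_key}. For $\grd$, by Theorem~\ref{theorem:new_semantics} the grounded-in-$C$ set is the least complete-in-$C$ set, so I would derive the grounded characterization as the $\subseteq$-minimal instance of the complete case, again invoking that the least fixed point of $F^{M}_{SF,C}$ localizes to the SCC structure via the monotonicity guaranteed in Theorem~\ref{theorem:new_semantics}.

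The main obstacle I anticipate is the bidirectional bookkeeping of the $C$ and $M$ parameters under the recursion, particularly for $\pref$ and $\grd$. For admissibility alone, Proposition~\ref{prop:adm_key} is a clean equivalence, but the extremal conditions quantify over \emph{alternative} admissible sets, and these alternatives may induce \emph{different} defeated sets $\Dscc{SF}{S}{E}$, undefeated sets $\Uscc{SF}{S}{E}$, and mitigated attacks $\Mscc{SF}{S}{E}$ in the downstream SCCs. The careful point is to verify that when we modify $E$ on one SCC to witness non-maximality or to run the fixed-point iteration, the restrictions $\projIII{SF}{(E\setminus S)^+}{\UPscc{SF}{S}{E}}$ and their parameters for \emph{descendant} SCCs change coherently, so that local extremality genuinely matches global extremality. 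I expect to resolve this by an induction along a topological ordering of the SCCs, processing initial SCCs first and showing that the parameters for later SCCs are determined by the already-fixed choices on ancestors.
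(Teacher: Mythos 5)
Your overall architecture coincides with the paper's: Proposition~\ref{prop:adm_key} as the workhorse; Lemmas~\ref{lemma:scc_adm1} and~\ref{lemma:scc_adm2} to transfer the completeness closure argument-by-argument (this is literally the paper's proof of the complete case, Proposition~\ref{prop:com_key}); Theorem~\ref{theorem:new_semantics} to recast $\pref$ as maximal admissible-in-$C$ and $\grd$ as least complete-in-$C$; and the ``earliest differing SCC in topological order'' device for the $\Leftarrow$ directions, where agreement of $E$ and the competitor on all ancestor SCCs forces $\Uscc{SF}{S}{E}=\Uscc{SF}{S}{E'}$ and $\Pscc{SF}{S}{E}=\Pscc{SF}{S}{E'}$, exactly as in Propositions~\ref{prop:pref_key} and~\ref{prop:grd_key}. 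You also correctly identify the central obstacle, namely that the extremal conditions quantify over alternative sets whose induced parameters differ downstream.

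The one place where your plan diverges from the paper and is currently underpowered is the $\Rightarrow$ direction for $\pref$. You propose, given a locally admissible $E'\supset E\cap S$, to ``re-globalize via Proposition~\ref{prop:adm_key}.'' But applying the $\Leftarrow$ direction of that proposition to the candidate $E\cup E'$ requires local admissibility of $(E\cup E')\cap S''$ in restrictions and parameter sets computed \emph{relative to $E\cup E'$}; for SCCs $S''$ downstream of $S$ these differ from those for $E$ (the defeated sets grow, the mitigated attacks change), so the needed hypotheses are precisely what must be proven --- for instance, one must rule out that $E'$ defeats some member of $E\cap S''$, which amounts to conflict-freeness of $E\cup E'$. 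Your proposed resolution (``additions propagate consistently along the SCC ordering'') can presumably be made to work by induction, but the paper short-circuits this with a dedicated combination lemma (Lemma~\ref{lem:scc_pref_combine}) proving \emph{directly}, by case analysis on attacks, that $E\cup E'\in\adm(SF)$; since preferred-in-$C$ is just maximal admissible-in-$C$, global admissibility of the union already contradicts maximality, and no re-verification of descendant SCCs is needed at all. The same lemma-shaped work resurfaces in the grounded $\Rightarrow$ direction, where the paper explicitly constructs a competing complete extension $E''$ by keeping $E$ on ancestor SCCs, taking the local grounded extension on the offending SCC, and recomputing grounded on descendants with the updated parameters; your appeal to ``localization of the least fixed point'' leaves this construction implicit. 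So: right skeleton throughout, but the union/extension steps your plan defers to consistent propagation are exactly the nontrivial content the paper isolates in Lemma~\ref{lem:scc_pref_combine} and the $E''$-construction, and as literally stated your re-globalization is circular without them.
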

	As it is the case with AFs, the respective base functions 
	$\grd(SF,C,M)$, $\com(SF,C,M)$, and $\pref(SF,C,M)$ can be obtained in the same way as for admissible semantics.
	\begin{theorem}
		Grounded, complete, and preferred semantics are SCC-recursive.
	\end{theorem}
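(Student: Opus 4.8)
The plan is to derive the theorem from the Proposition that immediately precedes it, exactly as the stable and admissible cases were closed off by their respective key propositions. Concretely, I would instantiate the base function as the parameterized semantics itself, setting $\BF(SF,C,M) := \sigma(SF,C,M)$ for $\sigma\in\{\grd,\com,\pref\}$, and recall from the remark following Definition~\ref{def:semantics2} that $\sigma(SF)=\sigma(SF,A,\emptyset)$. Since $\GF(SF,A,\emptyset)$ is precisely what Definition~\ref{def:scc2} unfolds, it then suffices to establish the slightly more general identity $\GF(SF,C,M)=\sigma(SF,C,M)$ for all $C\subseteq A$ and $M\subseteq R$; the theorem is the special case $C=A$, $M=\emptyset$.

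I would prove this identity by induction on $|A(SF)|$. In the base case $\primal(SF)$ has a single SCC, so $\GF(SF,C,M)=\BF(SF,C,M)=\sigma(SF,C,M)$ holds by the choice of base function and nothing remains to show. For the inductive step assume $|\SCCs(SF)|>1$. For each $S\in\SCCs(SF)$ the restriction $\projIII{SF}{(E\setminus S)^+}{\UPscc{SF}{S}{E}}$ is supported on $\UPscc{SF}{S}{E}\subseteq S$, a proper subset of $A$, hence carries strictly fewer arguments and the induction hypothesis applies to it. Unfolding the recursive clause of $\GF$ and rewriting each local $\GF$-membership as $\sigma$-membership via the induction hypothesis, the statement $E\in\GF(SF,C,M)$ becomes the requirement that for all $S\in\SCCs(SF)$ one has $(E\cap S)\in\sigma(\projIII{SF}{(E\setminus S)^+}{\UPscc{SF}{S}{E}},\Uscc{SF}{S}{E}\cap C,\Mscc{SF}{S}{E})$, which by the preceding Proposition is equivalent to $E\in\sigma(SF,C)$, closing the induction.

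The step I expect to be the crux is making the preceding Proposition interact correctly with the data threaded through the recursion, namely the candidate set $C$ and, above all, the mitigated attacks. Two points deserve care. First, for $\sigma\in\{\grd,\pref\}$ membership is not purely local: grounded is the least and preferred the $\subseteq$-maximal element of $\com(SF,C,M)$ (Theorem~\ref{theorem:new_semantics}, items~2 and~3), so I must verify that forming the least, respectively maximal, solution commutes with the SCC gluing $E=\bigcup_{S}(E\cap S)$; this is where the scheme genuinely relies on defense never flowing from a later SCC back into an earlier one, so that minimality and maximality can be resolved component by component. Second, and this is the genuinely SETAF-specific difficulty, I must reconcile the mitigated-attack bookkeeping across recursion levels: the recursive clause recomputes $\Mscc{SF}{S}{E}$ afresh rather than inheriting the ambient $M$, so I would isolate as a key lemma the claim that $\Mscc{SF}{S}{E}$ captures exactly those surviving attacks whose missing tail members lie in already-decided ancestor SCCs, ensuring that ``acceptable considering $M$'' at the local level coincides with global acceptability. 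Once the Proposition is established with this mitigation discipline in place, the theorem reduces to the routine induction above, mirroring the stable and admissible proofs.
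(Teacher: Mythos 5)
Your proposal takes essentially the same route as the paper: the paper likewise instantiates the base functions as $\grd(SF,C,M)$, $\com(SF,C,M)$ and $\pref(SF,C,M)$ and derives the theorem by unfolding the recursive scheme against the preceding Proposition, which is proved in the appendix as Propositions~\ref{prop:com_key}, \ref{prop:grd_key} and \ref{prop:pref_key} (the latter via Lemma~\ref{lem:scc_pref_combine}). The two cruxes you flag are exactly where the appendix does the work---the commutation of $\subseteq$-minimality/maximality with the SCC gluing is handled by the $E''$-construction in Proposition~\ref{prop:grd_key} and by Lemma~\ref{lem:scc_pref_combine}, respectively, while the discipline for the recomputed mitigated sets $\Mscc{SF}{S}{E}$ across recursion levels is left as implicit in the paper as it is in your sketch.
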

	
	\noindent Concerning the base function for $\grd$ we can use that each SCC is either cyclic or contains exactly one argument and no attack. We can thus alternatively characterize the base function as $\BF(SF,C)=\{\{a\}\}$ if $A(SF)=C=\{a\}, R(SF)=\emptyset$ and 
	$\BF(SF,C)=\{\emptyset\}$ otherwise.
	
	\subsection{Connection to Directionality}
	As it is the case in AFs, we can obtain results regarding directionality using SCC-recursiveness if the base function always admits at least one extension~\citep{BaroniG07}.
	First note that for an uninfluenced set $U$ any SCC $S$ with $S\cap U\not=\emptyset$ has to be contained in $U$, as well as all ancestor SCCs of $S$.
	Then, by the SCC-recursive characterization we get the following general result, subsuming the semantics under our consideration.
	\begin{proposition}\label{prop:directionality}
		Let $\sigma$ be a semantics such that for all SETAFs $SF$ and all $C\subseteq A(SF)$, $M\subseteq R(SF)$ it holds $\BF(SF,C,M)\not=\emptyset$.
		If $\sigma$ satisfies SCC-recursiveness then it satisfies directionality.
	\end{proposition}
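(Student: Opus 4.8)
The plan is to fix a SETAF $SF=(A,R)$ and an uninfluenced set $S\in\USets(SF)$ and to prove the equality $\sigma(\proj{SF}{S})=\{E\cap S\mid E\in\sigma(SF)\}$ via its two inclusions, which I will call the \emph{restrict} direction (every global extension restricted to $S$ is a local one) and the \emph{extend} direction (every local extension is the $S$-trace of some global one). Everything rests on the recursive identity $\sigma(SF)=\GF(SF,A,\emptyset)$ from SCC-recursiveness (Definition~\ref{def:scc2}) and on the structural remark preceding the proposition: every SCC of $SF$ sits entirely inside $S$ or entirely inside $A\setminus S$, and all ancestor SCCs of an SCC contained in $S$ are again contained in $S$. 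I call the former SCCs \emph{inner} and the latter \emph{outer}, and I first record that the SCCs of $\proj{SF}{S}$ are exactly the inner SCCs of $SF$, since the projection retains precisely the attacks with head in $S$ together with the $S$-part of their tails.

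The crux is a \emph{matching claim}: for every inner SCC $S'$ and every $E\subseteq A$, the data that $\GF$ consults at $S'$ is the same whether computed in $SF$ with respect to $E$ or in $\proj{SF}{S}$ with respect to $E\cap S$. Concretely, I would show that $\Dscc{SF}{S'}{E}$, $\Pscc{SF}{S'}{E}$ (hence $\Uscc{SF}{S'}{E}$ and $\UPscc{SF}{S'}{E}$), the defeated set $(E\setminus S')^+$ insofar as it affects heads in $S'$, and the mitigated attacks $\Mscc{SF}{S'}{E}$ depend only on attacks whose heads lie in $S'$ and whose tails (including the parts stripped off by partial evaluation) lie in $S'$ and its ancestor SCCs. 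As these ancestors are inner, hence contained in $S$, every such attack survives the projection with an unchanged intersected tail, while no attack originating outside $S$ can reach $S'$; this gives the identity $\projIII{SF}{(E\setminus S')^+}{\UPscc{SF}{S'}{E}}=\projIII{\proj{SF}{S}}{((E\cap S)\setminus S')^+}{\UPscc{\proj{SF}{S}}{S'}{E\cap S}}$ together with coinciding candidate-set and mitigated-attack parameters. Consequently the local $\GF$-condition at an inner $S'$ with respect to $E$ in $SF$ and with respect to $E\cap S$ in $\proj{SF}{S}$ are literally the same statement.

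For the restrict direction I take $E\in\sigma(SF)$ and unfold $\GF(SF,A,\emptyset)$ at the top level: it imposes a local condition at every SCC, in particular at every inner one. By the matching claim these inner local conditions are exactly the conditions that $\GF(\proj{SF}{S},S,\emptyset)$ imposes at its SCCs with respect to $E\cap S$, so $E\cap S\in\sigma(\proj{SF}{S})$; the degenerate case $|\SCCs(SF)|=1$ forces $S\in\{\emptyset,A\}$ and is immediate (with nonemptiness of $\BF$ handling $S=\emptyset$). For the extend direction I take $E_S\in\sigma(\proj{SF}{S})$ and build a witness $E$ with $E\cap S=E_S$ by fixing $E$ to agree with $E_S$ on all inner SCCs and then deciding the outer SCCs in a topological order. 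When an outer SCC $S'$ is reached, all its ancestors are already fixed, so the restriction $\projIII{SF}{(E\setminus S')^+}{\UPscc{SF}{S'}{E}}$ and its parameters $C',M'$ are determined; the hypothesis $\BF(\cdot,C',M')\neq\emptyset$ lets me select a local extension there. An induction on the number of SCCs, mirroring the standard proof that a nonempty base makes $\GF$ nonempty, shows that the assembled $E$ satisfies every local condition and hence lies in $\GF(SF,A,\emptyset)=\sigma(SF)$: the inner conditions hold by the matching claim and $E_S\in\sigma(\proj{SF}{S})$, while the outer ones hold by the choices just made. By construction $E\cap S=E_S$.

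I expect the matching claim to be the main obstacle. One must verify that both the restriction operator $\projIII{\cdot}{\cdot}{\cdot}$ and all four SCC-sets behave identically under $SF$ and $\proj{SF}{S}$ at inner SCCs, and especially that the tail-stripping built into $\projIII{\cdot}{\cdot}{\cdot}$ and into $\Mscc{SF}{S'}{E}$ never reaches an argument outside $S$. This is precisely where the ancestor-closure of uninfluenced sets is indispensable, and it is the only point at which the SETAF-specific handling of collective tails, rather than the plain graph argument of~\citep{BaroniG07}, demands care; the nonemptiness of $\BF$ enters solely in the extend direction to guarantee that the outer SCCs can always be completed.
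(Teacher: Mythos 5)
Your proposal is correct and takes essentially the same route as the paper: the paper also uses the ancestor-closure of uninfluenced sets to identify the SCCs of $\proj{SF}{S}$ with the ``inner'' SCCs, states your matching claim in the compressed form $\Uscc{SF}{S'}{E}=\Uscc{SF}{S'}{E\cap S}$ and $\Pscc{SF}{S'}{E}=\Pscc{SF}{S'}{E\cap S}$ for inner $S'$, and settles the extend direction by completing $E\cap S$ along the SCC-recursive scheme using $\BF(SF,C,M)\not=\emptyset$. Your version simply spells out details the paper leaves as a sketch, such as the topological-order construction on the outer SCCs and the single-SCC edge case.
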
	
	
	\section{Incremental Computation}\label{sec:incremental}
	In this section we briefly discuss the computational implications of a semantics
	satisfying directionality, modularization, or SCC-recursiveness.
	First, for a semantics $\sigma$ satisfying directionality an argument $a$ is in some extension (in all extensions) if and only if it is in some extension (in all extensions) of the framework that is restricted to the arguments that influence $a$. That is, when deciding credulous or skeptical acceptance of an argument, in a preprocessing step, we can shrink the framework to the relevant part.
	The property of modularization is closely related to CEGAR style algorithms for preferred semantics that can be implemented via iterative SAT-solving~\citep{DvorakJWW14}. In order to compute a preferred extension 
	we can iteratively compute a non-empty admissible set of the current framework,
	build the reduct w.r.t.\ this admissible set, and repeat this procedure on the reduct
	until the empty set is the only admissible set. The preferred extension is then given by the union of the admissible sets.
	
	Finally, for SCC-recursive semantics we can iteratively compute extensions along the SCCs of a given framework (see \citep{Baumann11,LiaoJK11,BaroniGL14,CeruttiGVZ14} for such approaches for AFs). 
	That is, in the initial SCCs we simply compute the extensions and then for each of them proceed on the remaining SCCs. We then iteratively proceed on SCCs in their order. To evaluate an SCC that is attacked by other ones we have to take the attacks from earlier SCCs into account and, as we have already fixed our extension there, we can simply follow the SCC-recursive schema. We next illustrate this for stable semantics.
	
	\begin{example}
		Consider the following SETAF $SF$. 
		\begin{center}
			\begin{tikzpicture}[scale=0.8,>=stealth]
			\path (-1,-1) node[draw,thick,dashed,color=black!60,fill=white!0,minimum size=0.9cm,circle](s1){}
			(-1,-2) node[color=black!60] (){\small $S_1$}
			(1,-1.5) node[color=black!60] (){\small $S_2$}
			(5.,-1) node[color=black!60] (){\small $S_3$}
			;
			\path 
			(-1,-1)node[arg] (a){$a$}
			(0,0)node[arg] (b){$b$}
			(2,0)node[arg] (d){$d$}
			(2,-2)node[arg] (e){$e$}
			(4,0)node[arg] (f){$f$}
			(4,-2)node[arg] (h){$h$}
			;
			\path[thick,->]
			(a)edge(b)
			(a)edge[color=cyan,out=0,in=120,-](h)
			(e)edge[color=cyan,out=45,in=120](h)
			(d)edge[color=violet,out=-45,in=90](h)
			(f)edge[color=violet,out=-90,in=90,-](h)
			;
			\path[thick,->,bend left=20]
			(d)edge(b)
			(b)edge(d)
			(d)edge(e)
			(e)edge(d)
			;
			\draw [thick,dashed,color=black!60] plot [mark=none, smooth cycle] coordinates {(-0.4,0.4) (2.4,0.4) (2.4,-2.4) (1.6,-2.4) (1.2,-0.8) (-0.4,-0.4)};
			\draw [thick,dashed,color=black!60] plot [mark=none, smooth cycle] coordinates {(3.6,0.4) (4.4,0.4) (4.5,-1.4) (4.4,-2.4) (3.6,-2.4) };
			
			\end{tikzpicture}
		\end{center}
		We can  iteratively compute the stable extensions of SF as follows: 
		in the first SCC $S_1=\{a\}$ we simple compute all the stable extensions, i.e., $\stb(\projIII{SF}{\emptyset}{S_1})=\{\{a\}\}$.
		We then proceed with $\{a\}$ as extension $E$ for the part of the SETAF considered so far.
		Next we consider $S_2$ and adapt it to take $E$ into account.
		As $(E\setminus S_2)^+=\{b\}$ we only have to delete the argument $b$ from $S_2$ before evaluating the SCC and thus we obtain $\projIII{SF}{\{b\}}{S_2} = (\{d,e\},\{(d,e),(e,d)\})$.
		Combining these with $E$ we obtain two stable extensions $E_1=\{a,d\}$, $E_2=\{a,e\}$ for $\projIII{SF}{\emptyset}{S_1\cup S_2}$.
		We proceed with $S_3$ and first consider $E_1$.
		As $(E_1\setminus S_3)^+=\{b,e\}$ we do not remove arguments from $S_3$.
		However, as $d \in E_1$ we cannot delete the attack $(\{d,f\},h)$ but have to replace it by the attack $(f,h)$. We then have $\stb(\projIII{SF}{\{b,e\}}{S_3})=\{\{f\}\}$ and thus obtain the first stable extensions of $SF$ $\{a,d,f\}$.
		Now consider $E_2$. We have that $E_2$ attacks $h$, i.e., $(E_2\setminus S_3)^+=\{b,d,h\}$,
		and thus we have to remove $h$ before evaluating $S_3$ and thus obtain 
		$\projIII{SF}{\{b,d,h\}}{S_3}=(\{f\},\emptyset)$. 
		We end up with $\{a,e,f\}$ as the second stable extension of $SF$.        
	\end{example}
	The computational advantage of the incremental computation is that certain computations are performed over single SCCs instead of the whole framework. This is in particular significant for preferred semantics where the $\subseteq$-maximality check can be done within the SCCs. 
	Notice that verifying a preferred extension is in general $\coNP$-complete \citep{DvorakD18,DvorakGW18}. 
	However, given our results regarding the SCC-recursive scheme, the following parameterized tractability result is easy to obtain.
	\begin{theorem}
		Let $SF$ be a SETAF where $|S|\leq k$ for all $S\in \SCCs(SF)$.
		Then we can verify a given preferred extensions in $O\!\left(2^k \cdot poly(|SF|)\right)$ for some polynomial $poly$.
	\end{theorem}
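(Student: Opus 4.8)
The plan is to leverage the SCC-recursive characterization of preferred semantics established in the previous section. The key observation is that verifying whether a given set $E$ is a preferred extension decomposes along the SCCs of $SF$: by the SCC-recursive characterization, $E\in\pref(SF)$ if and only if for every $S\in\SCCs(SF)$ we have $(E\cap S)\in\pref(\projIII{SF}{(E\setminus S)^+}{\UPscc{SF}{S}{E}},\Uscc{SF}{S}{E},\Mscc{SF}{S}{E})$. The crucial point is that each such local check is performed on a SETAF whose argument set is contained in a single SCC $S$, hence has size at most $k$. This is precisely what makes the costly $\subseteq$-maximality test tractable once $k$ is bounded.

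First I would sketch the verification algorithm. Compute $\SCCs(SF)$ together with a topological order of the SCCs in $\primal(SF)$; this is polynomial in $|SF|$. Then, processing the SCCs in this order, for each $S$ compute the defeated set $\Dscc{SF}{S}{E}$, the provisionally defeated set $\Pscc{SF}{S}{E}$, the undefeated set $\Uscc{SF}{S}{E}$, the mitigated attacks $\Mscc{SF}{S}{E}$, and finally the local SETAF $\projIII{SF}{(E\setminus S)^+}{\UPscc{SF}{S}{E}}$. Each of these quantities is defined via $E$ and the ancestor SCCs and is computable in polynomial time. It then remains to check the local preferred condition for the set $E\cap S$.

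The core of the running-time analysis is the local preferred check. Since the local SETAF has at most $k$ arguments, its conflict-free subsets number at most $2^k$, so we can enumerate all candidate admissible sets in $C$ considering $M$ and confirm that $E\cap S$ is admissible and that no strictly larger admissible set exists. By Theorem~\ref{theorem:new_semantics}, preferred sets in $C$ considering $M$ are exactly the $\subseteq$-maximal admissible ones, so the maximality test reduces to checking that no admissible superset of $E\cap S$ within $\Uscc{SF}{S}{E}$ exists---a search over at most $2^k$ sets, each verifiable in $poly(|SF|)$ time. Multiplying the per-SCC cost $O(2^k\cdot poly(|SF|))$ by the number of SCCs (at most $|A|$) and folding the polynomial factor yields the claimed bound $O(2^k\cdot poly(|SF|))$.

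The main obstacle I anticipate is establishing the \emph{correctness} of the local maximality check, namely that $\subseteq$-maximality of $E\cap S$ in each local SETAF with its parameters $C$ and $M$ is equivalent to the global maximality underlying $E\in\pref(SF)$. This is exactly the content of the SCC-recursive characterization of preferred semantics (the proposition preceding the statement), so I would invoke it directly rather than reprove it. The remaining subtlety is ensuring that the local parameters---particularly the mitigated attacks $M$, which encode attacks that must be defended against but cannot themselves serve as defense---are handled correctly by the base-case enumeration; but since the semantics considering $C,M$ are defined combinatorially in Definition~\ref{def:semantics2}, each local test is a finite, polynomially-verifiable condition over the at most $2^k$ subsets, and no further difficulty arises.
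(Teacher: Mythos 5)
Your proposal is correct and matches the argument the paper intends: the paper gives no explicit proof but derives the theorem directly from the SCC-recursive characterization of preferred semantics (Proposition~\ref{prop:pref_key} with $C=A$, $M=\emptyset$), checking each local condition $(E\cap S)\in\pref(\projIII{SF}{(E\setminus S)^+}{\UPscc{SF}{S}{E}},\Uscc{SF}{S}{E},\Mscc{SF}{S}{E})$ by brute force over the at most $2^k$ subsets of an SCC, with all auxiliary sets ($D$, $P$, $U$, $M$ and the restriction) computable in polynomial time along a topological order of the SCCs. Your appeal to Theorem~\ref{theorem:new_semantics} for the maximality test is not even needed, since $\pref(SF,C,M)$ is defined directly as $\subseteq$-maximal in $\adm(SF,C,M)$, but this does no harm.
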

	\section{Conclusion}\label{section:conclusion}
	In this work, we systematically analyzed semantics for SETAFs
	using a principles-based approach (see Table~\ref{table:principles} for an overview of the investigated
	properties).
	We pointed out interesting concepts that help us to
	understand the principles more deeply:
	edge cases that for AFs are hidden behind simple syntactic notions
	have to be considered explicitly for SETAFs,
	revealing semantic peculiarities that are already there in the special case.
	To this end, we highlight the usefulness of the \emph{reduct} in this context---many
	seemingly unrelated notions from various concepts boil down to formalizations
	closely related to the reduct.
	We particularly focused on computational properties like modularization and SCC-recursiveness.
	The latter concept has recently been investigated for Abstract Dialectical Frameworks
	in a different context by~\citeauthor{GagglRS21}~(\citeyear{GagglRS21}).
	However, it is not immediately clear how their results apply in the context of SETAFs.
	Future work hence includes the investigation of this connection.
	Moreover, the reduct for SETAFs and the generalization of the recursive scheme for SETAFs allow for the definition of new semantics that have not yet been studied in the context of collective attacks.
	An interesting direction for future works is investigating semantics \textit{cf2}~\citep{BaroniGG05a} and \textit{stage2}~\citep{DvorakG16}, as well as the family of semantics based on weak admissibility~\citep{BaumannBU2020}. 
	
	\subsubsection*{Acknowledgments}
	This research has been supported by the
	Vienna Science and Technology Fund (WWTF) through project ICT19-065,
	the Austrian Science Fund (FWF) through
	project P32830, 
	and by the German Federal Ministry of Education and Research (BMBF, 01/S18026A-F) by funding the competence center for Big Data and AI ``ScaDS.AI'' Dresden/Leipzig.
	
	\bibliographystyle{kr}
	\bibliography{references}
	
	\cleardoublepage
	\appendix
	\section{Proofs of Section~\ref{section:reduct}}
	\begin{paragraph}{Theorem~\ref{theorem:modularization}}
		Let $\F$ be a SETAF,
		$\sigma\in\{\adm,\com\}$ and $E\in\sigma(\F)$.
		\begin{enumerate}
			\item If $E'\in\sigma( \F^E )$, then $E\cup E'\in\sigma(\F)$. 
			\item If $E\cap E' = \emptyset$ and $E\cup E'\in\sigma(\F)$, then $E'\in\sigma(\F^E)$. 
		\end{enumerate}
	\end{paragraph}
	\begin{proof}
		For $\com$ semantics we utilize the results for $\adm$:
		
		1) We have $E\cup E'\in\adm(\F)$. 
		Moreover, $E'$ is complete, i.e.\ $(\F^E)^{E'}$ does not contain unattacked arguments in the reduct $\F^E$ (see Proposition~\ref{prop:classical semantics and reduct SETAFs}).
		Lemma~\ref{prop:reductbasics setafs}, item 5, implies that $\F^{E\cup E'}$ does not contain unattacked arguments, either.
		Hence $E\cup E'\in\com(\F)$. 
		
		2) Given $E\cup E'\in\com(\F)$ we have $E'\in\adm\big( \F^E \big)$, as established. 
		Regarding completeness, we again use the fact that 
		$\F^{E\cup E'} = (\F^E)^{E'}$
		does not contain unattacked arguments. 
	\end{proof}
	
	\section{Proofs of Section~\ref{section:sccs}}
	
	\begin{paragraph}{Lemma~\ref{lemma:reduct_restriction}}
		Let $SF=(A,R)$ be a SETAF and let $E,S\subseteq A$.
		Then\\
		$\projIII{SF}{(E\setminus S)^+}{S} = \projI{ SF^{(E\setminus S)}}{S}
		=(A',R')$.
	\end{paragraph}
	\begin{proof}
		We first show $A(\projIII{SF}{(E\setminus S)^+}{S})=A(\projI{ SF^{(E\setminus S)}}{S})$.
		We have
		$(A\cap S)\setminus (E\setminus S)^+ =
		(A\setminus (E\setminus S)^+)\cap S =
		(A\setminus ((E\setminus S)^+\cup (E\setminus S)))\cap S=(A\setminus (E\setminus S)^\oplus)\cap S$.
		Moreover, then it holds $R(\projIII{SF}{(E\setminus S)^+}{S})=R(\projI{ SF^{(E\setminus S)}}{S})$, as
		for some $(T,h)\in R(\projIII{SF}{(E\setminus S)^+}{S})$
		with $h\in A'$ and $T\cap (E\setminus S)^+=\emptyset$
		it holds $T\cap A'=\emptyset$ if and only if $T\not \subseteq (E\setminus S)$
		and $(T\cap A',h)$=$(T\setminus (E\setminus S),h)$.	
	\end{proof}
	
	\subsection{Stable Extensions}
	\begin{paragraph}{Lemma~\ref{lem:noPinStable}}
		
		Let $SF$ be a SETAF and
		$E\!\in\! \stb(SF)$, then
		for all $S\!\in\! \SCCs(SF)$ it holds $\Pscc{SF}{S}{E}\!=\!\emptyset$.
	\end{paragraph}
	\begin{proof}
		Assume towards contradiction that for some SCC $S$ there is an argument $a\in \Pscc{SF}{S}{E}$.
		Then, by definition there is an attack $(T,a)\in R(SF)$ such that $T\subseteq A(SF)\setminus S$ and $T\cap E^+ = \emptyset$. Moreover, $a\notin \Dscc{SF}{S}{E}$ by definition, i.e.\
		$T\not\subseteq E$. 
		But then there is some $t\in T$ such that neither $t\in E^+$ nor $t\in E$,
		which is a contradiction to the assumption that $E$ is stable.
	\end{proof}
	\begin{paragraph}{Proposition~\ref{lemma:sccStableKey}}
		Let $SF=(A,R)$ be a SETAF and let $E \subseteq A$, then
		$E\in \stb(SF)$ if and only if $\forall S\in \SCCs(SF) $ it holds
		$(E\cap S) \in \stb(\projIII{SF}{(E\setminus S)^+}{\UPscc{SF}{S}{E}})$.
	\end{paragraph}
	\begin{proof}
		Let $SF'=\projIII{SF}{(E\setminus S)^+}{\UPscc{SF}{S}{E}}$ for an arbitrary SCC $S$.
		We start by assuming $E\in \stb(SF)$.
		We need to show that $(E\cap S) \in \stb(SF')$, i.e.:
		\begin{enumerate}
			\item $(E\cap S)\subseteq \UPscc{SF}{S}{E}$,
			\item $(E\cap S)$ is conflict-free in $SF'$, and
			\item $\forall a \in \UPscc{SF}{S}{E}$ if $a\notin (E\cap S)$ then $(E\cap S)$ attacks $a$ in $SF'$.
		\end{enumerate}
		For condition 1.\ note that $(E\cap X)\cap \Dscc{SF}{X}{E}=\emptyset$ holds for any $X\subseteq A$, as otherwise $E$ would not be conflict-free in $SF$.
		For condition 2., assume towards contradiction that there is some $(T,h)\in R(SF')$ such that
		$T\cup \{h\}\subseteq (E\cap S)$.
		This means there is some $(T',h)\in R$ with $T'\supseteq T$.
		But by construction we would have $T'\setminus T \subseteq E$,
		and therefore $T'\cup \{h\}\subseteq E$, a contradiction to conflict-freeness of $E$.
		For condition 3.\ 
		we consider an arbitrary argument $a\in \UPscc{SF}{S}{E}\setminus (E\cap S)$.
		Since $a\notin E$ and $E$ is stable, there is an attack $(T,a)\in R$ with $T\subseteq E$.
		Moreover, as $a\in \UPscc{SF}{S}{E}$, it holds
		$a\notin \Dscc{SF}{S}{E}$, 
		i.e.\ in particular $T\not\subseteq (E\setminus S)$,
		or in other words $T\cap S\not= \emptyset$.
		This means by the definition of the restriction and since $T\cap E^+_R=\emptyset$ (otherwise $E$ would not be conflict-free in $SF$) there is an attack $(T\cap S, a)\in R(SF')$ with $(T\cap S)\subseteq E$.
		
		Now assume $\forall S\in \SCCs(SF) $ it holds
		$(E\cap S) \in \stb(\projIII{SF}{(E\setminus S)^+}{\UPscc{SF}{S}{E}})$.
		We need to show $E\in \stb(SF)$, i.e.\
		\begin{enumerate}
			\item $E$ is conflict-free in $SF$, and
			\item $E$ attacks all $a\in A\setminus E$ in $SF$.
		\end{enumerate}
		For 1., assume towards contradiction there is some $(T,a)\in R$ such that
		$T\cup \{a\} \subseteq E$.
		Let $S$ be the SCC containing $a$.
		Clearly $T\cup \{a\} \not\subseteq S$, as this violates our assumed conflict-freeness in $\UPscc{SF}{S}{E}$.
		Moreover, we do not have $T\subseteq (A\setminus S)$, as this would mean
		$a\in \Dscc{SF}{S}{E}$.
		Hence, there is an attack $(T\cap S,a) \in R(\projIII{SF}{(E\setminus S)^+}{\UPscc{SF}{S}{E}})$
		such that $(T\cap S)\cup \{a\}\subseteq E\cap S$, a contradiction.
		For condition 2., let us consider an arbitrary argument $a\in A\setminus E$ and let $S$ be the SCC containing $a$.
		Then either (i) $a\in  \Dscc{SF}{S}{E}$ or (ii) $a\in  \UPscc{SF}{S}{E}$.
		In case (i) we immediately get $E$ attacks $a$.
		For case (ii), we have $a\notin (E\cap S)$, and by assumption $a$ is attacked
		in $S$, i.e.\ there is an attack $(T,a)\in R(\projIII{SF}{(E\setminus S)^+}{\UPscc{SF}{S}{E}})$.
		By construction of the restriction, this means there is an attack
		$(T',a)\in R$ s.t.\ $T'\supseteq T$ and $T'\setminus T\subseteq E$.
		Hence, $T\subseteq E$, i.e.\ $E$ attacks $a$ in $SF$.
	\end{proof}
	\subsection{Admissible Sets}
	\begin{paragraph}{Theorem~\ref{theorem:new_semantics}}
		Let $SF$ be a SETAF, and let $C\subseteq A$ and $M\subseteq R$.
		Then,
		\begin{enumerate}
			\item $F^{M}_{SF,C}$ is monotonic,
			\item $E\in \grd(SF,C,M)$ is the least set in $\com(SF,C,M)$ w.r.t.\ $\subseteq$, and
			\item $E\in \pref(SF,C,M)$ are the maximal sets in $\com(SF,C,M)$ w.r.t.\ $\subseteq$.
		\end{enumerate}
	\end{paragraph}
	\begin{proof}
		(i) 
		Monotonicity of the mapping\\
		$F^{M}_{SF,C}(E)= \{a\in C\mid a \text{ is acceptable considering }M\text{ w.r.t.~} E \} $
		holds by definition of defense. 
		
		(ii)
		Setting 
		$G = \bigcup_{i\geq 1} (F^{M}_{SF,C})^i(\emptyset)$ 
		we claim that $G$ is the least set in\\
		$\com(SF,C,M)$. 
		Observe that we can apply the fundamental lemma since defense in $C$ w.r.t.\ $M$ implies the usual notion of defense. Hence admissibility of $\emptyset$ implies
		$\bigcup_{n\geq i\geq 1} (F^{M}_{SF,C})^i(\emptyset) \in \adm(SF,C,M)$ 
		for each $n$. Since $SF$ is finite and by monotonicity, 
		$\bigcup_{n\geq i\geq 1} (F^{M}_{SF,C})^i(\emptyset) = \bigcup_{i\geq 1} (F^{M}_{SF,C})^i(\emptyset) = G$
		for some $n$.  
		Thus, $G$ is complete. 
		Now let $E\in \com(SF,C,M)$. 
		By monotonicity of 
		$F^{M}_{SF,C}$ 
		we get 
		$F^{M}_{SF,C} (\emptyset) \subseteq F^{M}_{SF,C}(E)$. 
		By induction, 
		$(F^{M}_{SF,C})^i (\emptyset) \subseteq (F^{M}_{SF,C})^i(E)$
		therefore also holds for any integer $i\geq 1$. 
		Since $E$ is complete, 
		$E= (F^{M}_{SF,C})^i(E)$ 
		holds for each integer $i$, \ie the RHS is actually constant ($E$ is a fixed point of $F^{M}_{SF,C}$). 
		We conclude for each $n$
		\begin{align*}
		G =  \bigcup_{i\geq 1} (F^{M}_{SF,C})^i(\emptyset)  &=  \bigcup_{n\geq i\geq 1} (F^{M}_{SF,C})^i(\emptyset)\\
		& \subseteq  \bigcup_{n\geq i\geq 1} (F^{M}_{SF,C})^i(E)\\
		&= E,
		\end{align*}
		thus it follows that $G\subseteq E$. 
		
		(iii) 
		By definition $E\in\prf(SF,C,M)$ is maximal in $\adm(SF,C,M)$. 
		So we show $E$ is maximal in $\adm(SF,C,M)$ iff $E$ is maximal in $\com(SF,C,M)$
		
		($\Rightarrow$)
		Suppose $E\in\pref(SF,C,M)$ is not maximal in $\com(SF,C,M)$. 
		Then there is a proper complete superset $E'$ of $E$; since $E'$ is in particular admissible, $E$ is not maximal in $\adm(SF,C,M)$.
		
		($\Leftarrow$)
		Now suppose $E$ is not maximal in $\adm(SF,C,M)$.\\
		Take $E'\in\adm(SF,C,M)$ with $E\subset E'$.
		By the fundamental lemma and monotonicity of $F^{M}_{SF,C}$, we find that
		$\bigcup_{i\geq 1} (F^{M}_{SF,C})^i(E')$
		is a complete proper superset of $E$ (analogous to (ii)).
		Hence $E$ is not maximal in $\com(SF,C,M)$.
	\end{proof}
	\begin{paragraph}{Lemma~\ref{lemma:scc_adm1}}
		Let $SF=(A,R)$ be a SETAF, let $E\in \adm(SF)$ be an admissible set of arguments,
		and let $a$ be acceptable w.r.t.\ $E$ in $SF$, where $a$ is in the SCC $S$.
		Then it holds $a\in \Uscc{SF}{S}{E}$ and $a$ is acceptable w.r.t.\
		$(E\cap S)$ in $\projIII{SF}{(E\setminus S)^+}{\UPscc{SF}{S}{E}}$ considering $\Mscc{SF}{S}{E}$.
		Moreover, $(E\cap S)$ is conflict-free in $\projIII{SF}{(E\setminus S)^+}{\UPscc{SF}{S}{E}}$.
		
	\end{paragraph}
	\begin{proof}
		We will denote by $SF'$ the SETAF $\projIII{SF}{(E\setminus S)^+}{\UPscc{SF}{S}{E}}$.
		By the fundamental lemma~\citep{NielsenP06}
		we get that $E\cup\{a\}\in \adm(SF)$, i.e.\ $a\notin \Dscc{SF}{S}{E}$ by conflict-freeness and $a\notin \Pscc{SF}{S}{E}$ by defense.
		Hence, $a\in \Uscc{SF}{S}{E}$.
		Likewise, we get $(E\cap S)\subseteq \Uscc{SF}{E}{S}$, and therefore
		$(E\cap S)\subseteq A(SF')$.
		To show that $a$ is acceptable in this context we have to consider attacks $(T,a)\in R(SF')$ towards $a$ and establish
		$T\not\subseteq E$ (conflict-freeness)
		and $(E\cap A(SF'))$ attacks $T$ in $SF'$ (defense).
		As $E$ is admissible in $SF$, there is a counter-attack $(X,t)\in R$ with $t\in T$ and $X\subseteq E$.
		In particular, this means that $t\notin E$, as $(X,t)$ would otherwise contradict the conflict-freeness of $E$.
		Hence, $T\not\subseteq E$.
		Moreover, because $(T,a)\in R(SF')$, it must be that $X\cap S\not=\emptyset$, as otherwise the attack $(T,a)$ would be deleted when we consider the SCC $S$.
		We know $t\in S$ because $S$ is an SCC and there is a path from $(X\cap S)$ to $t$ and from $t$ to $a$.
		Let $X'=X\cap S$, i.e.\ there is an attack $(X',t)\in R(SF')$.
		In other words, $E\cap S$ defends $a$ in $SF'$.
		Finally, $X\subseteq E$ implies $(X',t)\notin\Mscc{SF}{S}{E}$.
		It remains to show that $(E\cap S)$ is conflict-free in $SF'$.
		Towards contradiction assume otherwise, i.e.\ there is an attack $(T,h)\in R(SF')$ with $T\cup \{h\}\subseteq (E\cap S)$.
		This means there is an attack $(T',h)\in R$ with $T'\supseteq T$,
		and as $E$ is admissible in $SF$ there is a counter-attack $(X,t)$ with $X\subseteq E$ for some $t\in T'$.
		If $t\notin S$ then $(T,h)\notin R(SF')$, a contradiction,
		therefore $t\in S$ and by assumption $t\in E$.
		But this means $X\cup\{t\}\subseteq E$, a contradiction to the conflict-freeness of $E$ in $SF$.
	\end{proof}
	
	\begin{paragraph}{Lemma~\ref{lemma:scc_adm2}}
		Let $SF=(A,R)$ be a SETAF, let $E\subseteq A$ 
		such that $(E\cap S)\in \adm(\projIII{SF}{(E\setminus S)^+}{\UPscc{SF}{S}{E}},\Uscc{SF}{S}{E},$ $\Mscc{SF}{S}{E})$ for all $S\in \SCCs(SF)$.
		Moreover, let $S'\!\in\! \SCCs(SF)$
		and let $a\!\in\! \Uscc{SF}{S'}{E}$
		be
		acceptable w.r.t.\ $(E\cap S')$ in 
		$\projIII{SF}{(E\setminus S')^+}{\UPscc{SF}{S'}{E}}$ considering $\Mscc{SF}{S'}{E}$.
		Then $a$ is acceptable w.r.t.\ $E$ in $SF$.
	\end{paragraph}
	\begin{proof}
		We have to show for each $(T,a)\in R$ 
		that $E$ attacks $T$ in $SF$.
		Let $SF'=\projIII{SF}{(E\setminus S')^+}{\UPscc{SF}{S'}{E}}$.
		We distinguish the following three cases:
		(1)~$T\subseteq S'$. If $T\cap \Dscc{SF}{S'}{E}\not=\emptyset$ we are done.
		Otherwise, all $t\in T$ are in $\UPscc{SF}{S'}{E}$.
		Then, $(T,a)\in R(SF')$ and there must be a (not mitigated) counter-attack $(X,t)$ with $t\in T$ and $X\subseteq E\cap S'$ within $SF'$,
		as we assumed $a$ is acceptable w.r.t.\ $E\cap S'$ in $SF'$ considering $\Mscc{SF}{S'}{E}$.
		This means there is an attack $(X',t)\in R$ with $X'\supseteq X$, and as $(X,t)$ is not mitigated in $SF'$ we know $X'\subseteq E$.
		In summary, $a$ is acceptable w.r.t.\ $E$ in $SF$.
		(2)~$T\subseteq A\setminus S'$. But then $T\cap E^+\not=\emptyset$ by $a\in \Uscc{SF}{S'}{E}$.
		(3)~$T\cap S'\not=\emptyset$ and $T\cap (A\setminus S')\not=\emptyset$.
		If $T\cap E^+\not=\emptyset$ we are done.
		Otherwise there is an attack $(T',a)\in R(SF')$ with $T'\subseteq T$.
		Now the reasoning proceeds as in case (1).
		As we established that there are counter-attacks in all cases (1)-(3),
		the desired property holds.%
	\end{proof}
	\begin{paragraph}{Proposition~\ref{prop:adm_key}}
		Let $SF=(A,R)$
		be a SETAF
		and let $E\subseteq A$ be a set of arguments.
		Then $\forall C\subseteq A$ it holds $E\in \adm(SF,C)$ if and only if $\forall S\in \SCCs(SF)$ it holds
		$(E\cap S)\in  \adm(\projIII{SF}{(E\setminus S)^+}{\UPscc{SF}{S}{E}},\Uscc{SF}{S}{E}\cap C,\Mscc{SF}{S}{E})$. 
	\end{paragraph}
	\begin{proof}
		Let $SF'=\projIII{SF}{(E\setminus S)^+}{\UPscc{SF}{S}{E}}$.
		We start with the ``$\Rightarrow$'' direction.
		Since $E\subseteq C$ and all $a\in E$ are acceptable w.r.t.\ $E$ in $SF$,
		we can apply Lemma~\ref{lemma:scc_adm1} and get that every $a\in (E\cap S)$ are in $\Uscc{SF}{S}{E}\cap C$ for any given SCC $S$.
		Moreover, we get that $a$ is acceptable w.r.t.\ $(E\cap S)$ in $SF'$ considering $\Mscc{SF}{S}{E}$ and that $(E\cap S)$ is conflict-free in $SF'$.
		Hence, $(E\cap S)$ is admissible in $SF'$.
		
		We continue with the ``$\Leftarrow$'' direction.
		As for all SCCs $S$ we assume $(E\cap S)\subseteq (S\cap C)$ we know $E\subseteq C$, i.e.\ we only need to show admissibility in $SF$.
		Towards contradiction assume $E$ is not conflict-free in $SF$, i.e.\ there is an attack $(T,h)\in R$ with $T\cup \{h\}\subseteq E$.
		Let $S'$ be the SCC $h$ is in.
		We cannot have (1) $T\subseteq S'$, as this would contradict the assumption that $E\cap S'$ is conflict-free in $\projIII{SF}{(E\setminus S')^+}{\UPscc{SF}{S'}{E}}$.
		Moreover, it cannot be that (2) $T\subseteq A\setminus S'$, because then $h\in \Dscc{SF}{S'}{E}$ while we assumed $h\in \Uscc{SF}{S'}{E}$.
		Finally, consider the case (3) where $T\cap S'\not=\emptyset$ and
		$T\cap (A\setminus S')\not=\emptyset$.
		Then it holds $E^+\cap T\not=\emptyset$, as otherwise there would be
		$(T\cap S',h)\in \projIII{SF}{(E\setminus S')^+}{\UPscc{SF}{S'}{E}}$,
		contradicting our assumption of local conflict-freeness.
		This means there is an attack $(X,t)\in R$ with $X\subseteq E$ and $t\in T\setminus S'$.
		Let $S''$ be the SCC $t$ is in.
		As before, we cannot have (1) $X\subseteq S''$ or (2) $X\subseteq A\setminus S''$.
		The only remaining case is again (3) $X\cap S''\not=\emptyset$ and
		$X \cap (A\setminus S'')\not=\emptyset$---as by this step (3) always takes us to a prior SCC and we assume $SF$ finite, eventually this recursion will stop in case (1) or (2).
		Now, by induction we get a contradiction for the initial case.
		
		It remains to show that every $a\in E$ is acceptable w.r.t.\ $E$ in $SF$.
		Let $S^*$ be the SCC $a$ is in and let $SF^*=\projIII{SF}{(E\setminus S^*)^+}{\UPscc{SF}{S^*}{E}}$.
		By assumption, $(E\cap S^*)\in \adm(SF^*,\Uscc{SF}{S^*}{E},\Mscc{SF}{S^*}{E})$, i.e.\ $a$ is acceptable w.r.t.\ $E\cap S$ in $SF^*$ considering $\Mscc{SF}{S^*}{E}$.
		Since we also have $a\in \Uscc{SF}{S^*}{E}$, we can apply Lemma~\ref{lemma:scc_adm2} and get that $a$ is acceptable w.r.t.\ $E$ in $SF$.%
	\end{proof}
	
	\subsection{Complete Semantics}
	We already have the tools to characterize complete extensions.
	\begin{proposition}\label{prop:com_key}
		Let $SF=(A,R)$
		be a SETAF
		and let $E\subseteq A$ be a set of arguments.
		Then $\forall C\subseteq A$ it holds $E\in \com(SF,C)$ if and only if $\forall S\in \SCCs(SF)$ it holds
		$(E\cap S)\in  \com(\projIII{SF}{(E\setminus S)^+}{\UPscc{SF}{S}{E}},\Uscc{SF}{S}{E}\cap C,\Mscc{SF}{S}{E})$. 
	\end{proposition}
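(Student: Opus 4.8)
The plan is to reduce the whole statement to the admissibility characterization of Proposition~\ref{prop:adm_key} together with the two acceptability-transfer results, Lemma~\ref{lemma:scc_adm1} and Lemma~\ref{lemma:scc_adm2}. Recall from Definition~\ref{def:semantics2} that $E\in\com(SF,C)$ unfolds into two parts: (i) $E\in\adm(SF,C)$, and (ii) the \emph{closure} condition that $E$ contains every $a\in C$ that is acceptable w.r.t.\ $E$ in $SF$ (here $M=\emptyset$, so ``acceptable'' is the ordinary notion). Likewise, each local condition $(E\cap S)\in\com(\projIII{SF}{(E\setminus S)^+}{\UPscc{SF}{S}{E}},\Uscc{SF}{S}{E}\cap C,\Mscc{SF}{S}{E})$ splits into a local admissibility part and a local closure part. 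Since Proposition~\ref{prop:adm_key} already equates $E\in\adm(SF,C)$ with the conjunction over all $S\in\SCCs(SF)$ of the local admissibility conditions, it suffices to show, under the standing assumption of admissibility, that the global closure condition holds if and only if all the local closure conditions do.

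For the forward direction I would assume $E\in\com(SF,C)$, fix an SCC $S$, and take any $a\in\Uscc{SF}{S}{E}\cap C$ that is acceptable w.r.t.\ $(E\cap S)$ in the local framework considering $\Mscc{SF}{S}{E}$. Because $E$ is in particular admissible, Proposition~\ref{prop:adm_key} gives that the local admissibility conditions hold for \emph{every} SCC, so the hypotheses of Lemma~\ref{lemma:scc_adm2} are met; hence $a$ is acceptable w.r.t.\ $E$ in $SF$. As $a\in C$ and $E$ is complete in $C$, the global closure condition forces $a\in E$, so $a\in E\cap S$, which is exactly the local closure condition.

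For the converse I would assume all the local $\com$ conditions. In particular each $(E\cap S)$ is admissible in its local framework, so Proposition~\ref{prop:adm_key} yields $E\in\adm(SF,C)$, and thus $E\in\adm(SF)$. Now take any $a\in C$ acceptable w.r.t.\ $E$ in $SF$ and let $S$ be its SCC. Applying Lemma~\ref{lemma:scc_adm1} (with the admissible set $E$) gives $a\in\Uscc{SF}{S}{E}$ together with the fact that $a$ is acceptable w.r.t.\ $(E\cap S)$ in $\projIII{SF}{(E\setminus S)^+}{\UPscc{SF}{S}{E}}$ considering $\Mscc{SF}{S}{E}$. Since $a\in C$ we have $a\in\Uscc{SF}{S}{E}\cap C$, so the local closure condition of $(E\cap S)$ forces $a\in E\cap S\subseteq E$. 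This establishes the global closure condition, and hence $E\in\com(SF,C)$.

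The work here is not so much a single hard obstacle as the bookkeeping of keeping the parameters of the generalized semantics aligned: one must be sure that the candidate set $\Uscc{SF}{S}{E}\cap C$ and the mitigated-attack set $\Mscc{SF}{S}{E}$ are precisely the objects produced by Lemma~\ref{lemma:scc_adm1} and consumed by Lemma~\ref{lemma:scc_adm2}, and that ordinary global acceptability (i.e.\ considering $M=\emptyset$) corresponds exactly to local acceptability considering $\Mscc{SF}{S}{E}$. Since those two transfer lemmas already encapsulate this correspondence, the completeness case contributes essentially no new combinatorial content beyond invoking them; the proof is therefore a clean assembly of Proposition~\ref{prop:adm_key} with Lemmas~\ref{lemma:scc_adm1} and~\ref{lemma:scc_adm2}.
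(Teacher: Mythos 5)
Your proposal is correct and takes essentially the same route as the paper: both proofs reduce the admissibility part to Proposition~\ref{prop:adm_key}, then transfer the closure condition via Lemma~\ref{lemma:scc_adm2} in the forward direction (local acceptability implies global acceptability, then global completeness forces $a\in E\cap S$) and via Lemma~\ref{lemma:scc_adm1} in the converse (global acceptability implies local acceptability, then local completeness forces $a\in E$). Your explicit remark that Lemma~\ref{lemma:scc_adm1} also supplies $a\in\Uscc{SF}{S}{E}$, so that $a$ lies in the local candidate set $\Uscc{SF}{S}{E}\cap C$, is a bookkeeping detail the paper leaves implicit.
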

	\begin{proof}
		We start with the ``$\Rightarrow$'' direction.
		From $E\in \com(SF,C)$ by Proposition~\ref{prop:adm_key} we get
		$\forall S\in \SCCs(SF)$ it holds\\
		$(E\cap S)\in \adm(\pte{S},\UPscc{SF}{S}{E}\cap C,\Mscc{SF}{S}{E})$.
		For an arbitrary SCC $S'\in \SCCs(SF)$,
		let $a\in \Uscc{SF}{S'}{E}$ be an argument acceptable w.r.t.\ $(E\cap S')$
		in $\pte{S'}$
		considering
		$\Mscc{SF}{S'}{E}$.
		By Lemma~\ref{lemma:scc_adm2}, $a$ is acceptable w.r.t.\ $E$ in $SF$, and, hence, $a\in E$ and $a\in (E\cap S')$ by completeness.
		For the ``$\Leftarrow$'' direction
		we get $E\in \adm(SF,C)$ by Proposition~\ref{prop:adm_key}.
		For an arbitrary $a\in C$, let $S'$ be the SCC $a$ is in.
		If $a$ is acceptable w.r.t.\ $E$ in $SF$,
		by Lemma~\ref{lemma:scc_adm1} we get that $a$ is
		acceptable w.r.t.\ $(E\cap S')$ in $\pte{S'}$ considering $\Mscc{SF}{S'}{E}$.
		As $(E\cap S')$ is locally complete, we get $a\in E$.%
	\end{proof}
	From this we get the desired result regarding complete extensions.
	The base function is $\com(SF,C,M)$.
	\begin{theorem}
		Complete semantics is SCC-recursive.	
	\end{theorem}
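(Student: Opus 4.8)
The plan is to unfold Definition~\ref{def:scc2} with the base function $\BF(SF,C,M)=\com(SF,C,M)$ and verify the resulting identity $\com(SF)=\GF(SF,A,\emptyset)$. Since the composite clause of $\GF$ only ever reduces a SETAF to the restrictions $\projIII{SF}{(E\setminus S)^+}{\UPscc{SF}{S}{E}}$, each of which has argument set $\UPscc{SF}{S}{E}\subseteq S$, the natural induction is on $|A(SF)|$: whenever $|\SCCs(SF)|\geq 2$ every SCC $S$ is a proper subset of $A$, so $|A(\projIII{SF}{(E\setminus S)^+}{\UPscc{SF}{S}{E}})|<|A(SF)|$ and the recursion is well-founded, with the inductive hypothesis available on each restriction.

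For the base case $|\SCCs(SF)|=1$ there is nothing to do: by definition $\GF(SF,C,M)=\BF(SF,C,M)=\com(SF,C,M)$. For the composite case I would first rewrite $E\in\GF(SF,A,\emptyset)$ via the recursive clause as the statement that, for every $S\in\SCCs(SF)$, $(E\cap S)\in\GF(\projIII{SF}{(E\setminus S)^+}{\UPscc{SF}{S}{E}},\Uscc{SF}{S}{E},\Mscc{SF}{S}{E})$. Applying the inductive hypothesis to each (smaller) restriction turns every $\GF(\cdot)$ on the right into the corresponding $\com(\cdot)$. What then remains is exactly the equivalence furnished by Proposition~\ref{prop:com_key}, namely that $E\in\com(SF)$ iff for all $S$ the local set $(E\cap S)$ is complete in $\projIII{SF}{(E\setminus S)^+}{\UPscc{SF}{S}{E}}$ with candidate set $\Uscc{SF}{S}{E}$ and mitigated attacks $\Mscc{SF}{S}{E}$. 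Chaining the two equivalences yields $\com(SF)=\GF(SF,A,\emptyset)$, mirroring the stable case built on Proposition~\ref{lemma:sccStableKey} and the admissible case built on Proposition~\ref{prop:adm_key}.

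The delicate point, and the step I expect to be the main obstacle, is that the induction descends through restrictions that carry a \emph{non-empty} mitigation set $M=\Mscc{SF}{S}{E}$, whereas Proposition~\ref{prop:com_key} is phrased with an empty global $M$. To make the recursion compose I would first establish the generalized form of the key equivalence: for an arbitrary restriction $SF'$ with parameters $C'$ and $M'$, one has $E'\in\com(SF',C',M')$ iff $(E'\cap S')\in\com(\projIII{SF'}{(E'\setminus S')^+}{\UPscc{SF'}{S'}{E'}},\Uscc{SF'}{S'}{E'}\cap C',\Mscc{SF'}{S'}{E'})$ for every $S'\in\SCCs(SF')$. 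The crux is to check that the mitigated attacks recomputed locally on $SF'$ are consistent with the inherited $M'$, so that an attack already rendered unusable for defence stays unusable after a further restriction; this is what justifies that the recursive clause of $\GF$ may recompute $\Mscc{\cdot}{\cdot}{\cdot}$ afresh at each level rather than accumulating it. I would obtain this by re-running Lemmas~\ref{lemma:scc_adm1} and~\ref{lemma:scc_adm2} with the extra mitigation parameter in place, tracking how $\Mscc{\cdot}{\cdot}{\cdot}$ behaves under composition of restrictions, and then lifting the admissible equivalence to complete sets exactly as in Proposition~\ref{prop:com_key} by appealing to the least-fixed-point and monotonicity characterization of Theorem~\ref{theorem:new_semantics}.
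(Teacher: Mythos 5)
Your proposal follows the paper's own route: the paper likewise takes $\com(SF,C,M)$ as the base function and obtains the theorem directly from Proposition~\ref{prop:com_key} (itself built on Proposition~\ref{prop:adm_key} and Lemmas~\ref{lemma:scc_adm1} and~\ref{lemma:scc_adm2}), leaving the well-founded unfolding of $\GF$ implicit. The delicate point you flag---that the key equivalence must be available for restrictions carrying a non-empty inherited mitigation set $M$ so that the recursion composes---is precisely the detail the paper glosses over with ``from this we get the desired result,'' so your explicit induction and $M$-parameterized generalization are a faithful, if more careful, rendering of the same argument.
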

	
	\subsection{Preferred Semantics}
	The next lemma illustrates that if we already found a globally admissible set $E$ and find a (larger) locally admissible set $E'\supset E\cap S$ in an SCC $S$,
	then we can find a globally admissible set incorporating this set $E'$.
	This idea underlies the incremental computation of extensions (see Section~\ref{sec:incremental}).
	\begin{lemma}\label{lem:scc_pref_combine}
		Let $SF=(A,R)$ and let $E\in \adm(SF)$, let $S\in \SCCs(SF)$ be an SCC.
		Moreover, let $E'\subseteq A$ be a set of arguments such that
		$(E\cap S)\subseteq E'\subseteq \Uscc{SF}{S}{E}$,
		and $E'\in \adm(\projIII{SF}{(E\setminus S)^+}{\UPscc{SF}{S}{E}},\Uscc{SF}{S}{E}, \Mscc{SF}{S}{E})$.
		Then $(E\cup E')$ is admissible in $SF$. 
	\end{lemma}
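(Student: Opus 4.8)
The plan is to verify directly that $E\cup E'$ meets the two defining conditions of admissibility in $SF$, namely conflict-freeness and defense. Throughout I write $SF'=\pte{S}$ for the local framework and exploit the two structural facts that $(E\cup E')\cap S=E'$ and $(E\cup E')\setminus S=E\setminus S$ (both immediate from $E\cap S\subseteq E'\subseteq S$). A recurring tool is the identity $A(SF')=\UPscc{SF}{S}{E}$ (which holds because $\UPscc{SF}{S}{E}\cap(E\setminus S)^+=\emptyset$), together with the explicit description of which global attacks survive the restriction: an attack $(T,h)\in R$ reappears as $(T\cap S,h)$ in $SF'$ precisely when $h\in\UPscc{SF}{S}{E}$, $T\cap(E\setminus S)^+=\emptyset$, and $T\cap S\neq\emptyset$; note also $\Dscc{SF}{S}{E}=S\cap(E\setminus S)^+$.

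For conflict-freeness I would take a putative conflict $(T_0,h)\in R$ with $T_0\cup\{h\}\subseteq E\cup E'$ and split on the location of the head. If $h\in S$, then $h\in E'$, and every tail element lying in $S$ is in $E'\subseteq\Uscc{SF}{S}{E}$, hence avoids $\Dscc{SF}{S}{E}$; this lets me show the attack restricted to $S$ survives into $SF'$, and since no tail element can sit outside $S$ without forcing $h\in\Dscc{SF}{S}{E}$, I obtain a conflict inside $E'$ in $SF'$, contradicting the local conflict-freeness of $E'$. If instead $h\notin S$, then $h\in E\setminus S\subseteq E$, and here I would invoke that $E$ defends $h$: admissibility of $E$ yields a counterattack $(X,s)\in R$ with $X\subseteq E$ and $s\in T_0\subseteq E\cup E'$. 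If $s\in E$ this contradicts conflict-freeness of $E$; if $s\in E'\setminus E\subseteq S$ then $E$ attacks the undefeated argument $s\in\Uscc{SF}{S}{E}$, and since $s\notin\Dscc{SF}{S}{E}$ forces $X$ to meet $S$ (with $X\cap S\subseteq E\cap S\subseteq E'$), this again materializes as a conflict within $E'$ in $SF'$, a contradiction.

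For defense, note first that every $a\in E$ is defended by $E\subseteq E\cup E'$ since $E$ is admissible, so only the arguments $a\in E'$ remain. For these I would follow the three-case scheme of Lemma~\ref{lemma:scc_adm2} applied to an arbitrary attack $(T_0,a)\in R$: when $T_0\subseteq A\setminus S$, undefeatedness $a\in\Uscc{SF}{S}{E}$ (i.e.\ $a\notin\Pscc{SF}{S}{E}$) forces $T_0\cap E^+\neq\emptyset$, so $E$ already hits $T_0$; when $T_0\subseteq S$ or $T_0$ straddles $S$, either $T_0$ meets $\Dscc{SF}{S}{E}$ (and $E\setminus S$ hits it) or $(T_0\cap S,a)$ survives into $SF'$, where local acceptability of $a$ w.r.t.\ $E'$ considering $\Mscc{SF}{S}{E}$ supplies a non-mitigated counterattack $(X',u)\in R(SF')$ with $X'\subseteq E'$ and $u\in T_0$. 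By definition of $\Mscc{SF}{S}{E}$, non-mitigation of $(X',u)$ means some global attack $(X'',u)\in R$ with $X''\supseteq X'$ satisfies $X''\setminus X'\subseteq E$; hence $X''\subseteq E\cup E'$ and $E\cup E'$ attacks $T_0$, as required.

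The main obstacle is the conflict-freeness argument when the head $h$ lies outside $S$ (in a descendant SCC): a newly accepted argument of $E'$ could a priori participate in an attack on some $h\in E$, and local admissibility of $E'$ alone says nothing about attacks leaving $S$. The crux is to combine the global defense of $h$ by $E$ with the local conflict-freeness of $E'$, turning a hypothetical downstream conflict into an impossible conflict inside the SCC $S$. All remaining steps are the standard bookkeeping with the restriction and the mitigated-attacks definition, mirroring Lemmas~\ref{lemma:scc_adm1} and~\ref{lemma:scc_adm2}.
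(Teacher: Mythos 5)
Your proposal is correct and takes essentially the same route as the paper's proof: a direct verification of conflict-freeness and defense for $E\cup E'$, with the same case analysis on the location of heads and tails relative to $S$, the same use of the survival conditions for restricted attacks in $\pte{S}$, and the same lifting of local non-mitigated counterattacks to global ones via the definition of $\Mscc{SF}{S}{E}$. If anything, your treatment of a conflict with head $h\in E\setminus S$ is slightly more explicit than the paper's, which compresses the subcase where the counterattack on the tail meets $S$ into a single $\Dscc{SF}{S}{E}$-contradiction, whereas you correctly split on whether the counterattacking set meets $S$ and derive a local conflict in $E'$ in that event.
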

	\begin{proof}
		We first show that $(E\cup E')$ is conflict-free in $SF$.
		Assume towards contradiction there is $(T,h)\in R$ with $T\cup \{h\}\subseteq (E\cup E')$.
		Either (1) $h\in E$ or (2) $h\in E'\setminus E$.
		Since $E\in \adm(SF)$ in (1) we have $E\att_R T$.
		$E\in \cf(SF)$, this means $E\att_R T'$ where $T'=T\setminus E = T\cap E'\not=\emptyset$.
		But this means $E'\cap \Dscc{SF}{S}{E}\not=\emptyset$,
		contradicting our assumption $E'\subseteq \Uscc{SF}{S}{E}$.
		Regarding (2), if $T\subseteq E$, then $h\in \Dscc{SF}{S}{E}$, a contradiction.
		Hence, $T\cap (E'\setminus E)\not=\emptyset$.
		It follows there is $(T',h)\in R(SF')$ with $T'\subseteq T$.
		However, since we assume $E'$ is conflict-free in $SF'$ it holds $T'\cup \{h\}\not\subseteq E'$, a contradiction because $(E\cap S)\subseteq E'$.
		As both possibilities lead to contradictions, we conclude $(E\cup E')\in \cf(SF)$. 
		It remains to show defense in $SF$,
		i.e.\ for all $(T,h)\in R$ with $h\in E\cup E'$,
		$E\cup E'\att_R T$.
		If $h \in E$, this follows from $E\in \adm(SF)$.
		For $h\in E'\setminus E$, we distinguish 4 cases:
		(1)~$T\cap E^+\not=\emptyset$, then we are done.
		(2)~$T\subseteq A\setminus S$. But then either $E\att_R T$---see (1)---or $E\not\att_R T$ and therefore $h\in \Pscc{SF}{S}{E}$, a contradiction to $h\in E'\subseteq \Uscc{SF}{S}{E}$.
		(3)~$T\subseteq A(SF')$. But this means $(T,h)\in R(SF')$ and therefore
		since $E'$ is admissible in this context there is $(X,t)\in R(SF')$ with $X\subseteq E'$ and $t\in T$.
		As $(X,t)$ cannot be mitigated, there is an attack $(X',t)\in R$ with $X'\supseteq X$ and $X'\setminus X\subseteq E$, i.e.\ $X'\subseteq (E\cup E')$, contradicting the earlier established conflict-freeness.
		(4)~$T\cap A(SF')\not=\emptyset$ and $T\cap (A\setminus A(SF'))\not=\emptyset$.
		If we assume $E\not\att_R T$---see (1)---there is an attack $(T',h)\in R(SF')$ that is not mitigated, and we proceed as in (3).
		In any case, there is a defense against the attack $(T,h)$, therefore,
		$(E\cup E')\in \adm(SF)$.%
	\end{proof}
	\begin{proposition}\label{prop:pref_key}
		Let $SF=(A,R)$ be a SETAF and let $E\subseteq A$ be a set of arguments.
		Then $\forall C\subseteq A$ it holds $E\in \pref(SF,C)$ if and only if $\forall S\in \SCCs(SF)$ it holds $(E\cap S)\in\pref(\projIII{SF}{(E\setminus S)^+}{\UPscc{SF}{S}{E}},C\cap \Uscc{SF}{S}{E},\Mscc{SF}{S}{E})$.		
	\end{proposition}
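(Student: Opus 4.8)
The plan is to leverage that, by Definition~\ref{def:semantics2}, $\pref(SF,C,M)$ consists exactly of the $\subseteq$-maximal members of $\adm(SF,C,M)$, so that once Proposition~\ref{prop:adm_key} supplies the admissibility characterization the whole task reduces to transporting $\subseteq$-maximality between the global and the local levels. First I would record the two facts that make the comparison possible: Proposition~\ref{prop:adm_key} gives that $E\in\adm(SF,C)$ iff $(E\cap S)$ is locally admissible in $\adm(\projIII{SF}{(E\setminus S)^+}{\UPscc{SF}{S}{E}},\, C\cap\Uscc{SF}{S}{E},\, \Mscc{SF}{S}{E})$ for every $S\in\SCCs(SF)$, and the same applies verbatim to any competing set $E^*\in\adm(SF,C)$.

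For the ``$\Rightarrow$'' direction I would argue contrapositively on local maximality. Suppose some $S$ admits a locally admissible $E'\supsetneq E\cap S$ in the above context. Since the candidate set forces $E'\subseteq C\cap\Uscc{SF}{S}{E}\subseteq\Uscc{SF}{S}{E}$, Lemma~\ref{lem:scc_pref_combine} applies (using $\adm(SF,C)\subseteq\adm(SF)$) and yields $E\cup E'\in\adm(SF)$; as $E\subseteq C$ and $E'\subseteq C$ we in fact obtain $E\cup E'\in\adm(SF,C)$, and since $E'$ contains an argument of $S\setminus E$ this is a proper superset of $E$, contradicting global maximality. Hence each $(E\cap S)$ is locally $\subseteq$-maximal, i.e.\ locally preferred.

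The ``$\Leftarrow$'' direction is the main obstacle: the local context at $S$ is built from $E$, so a rival global set $E^*$ with $E^*\supsetneq E$ a priori lives in different local frameworks and cannot be compared with $E$ directly. The key preparatory step I would establish is a \emph{locality lemma}: each of $\Dscc{SF}{S}{E}$, $\UPscc{SF}{S}{E}$, $\Uscc{SF}{S}{E}$, the attack relation of $\projIII{SF}{(E\setminus S)^+}{\UPscc{SF}{S}{E}}$, and $\Mscc{SF}{S}{E}$ is determined solely by $E$ restricted to the ancestor SCCs of $S$. This holds because every argument outside $S$ that attacks into $S$, or whose membership in $E^+$ is queried, influences $S$ and thus lies in an ancestor, and for every surviving local attack the residual tail $T'\setminus T$ sits in ancestors as well. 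Granting this, I would assume for contradiction that $E$ is admissible but not maximal, fix $E^*\in\adm(SF,C)$ with $E^*\supsetneq E$, and choose an $S$ with $E\cap S\neq E^*\cap S$ that has no strict ancestor SCC sharing this property; such an $S$ exists because the SCCs form a DAG. By this choice $E$ and $E^*$ agree on all strict ancestors of $S$, so the locality lemma makes the two local contexts at $S$ coincide (the parameter $C$ being fixed). Then $(E^*\cap S)$ is locally admissible in precisely the context in which, by hypothesis, $(E\cap S)$ is locally preferred; since $E\subseteq E^*$ forces $E\cap S\subsetneq E^*\cap S$, this contradicts local $\subseteq$-maximality, so no such $E^*$ exists and $E\in\pref(SF,C)$.

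Two routine points need care but no creativity and I would only flag them: dropping the candidate constraint gives $\adm(SF,C)\subseteq\adm(SF)$, which is what lets Lemma~\ref{lem:scc_pref_combine} fire in the ``$\Rightarrow$'' step; and shrinking a candidate set from $\Uscc{SF}{S}{E}$ to $C\cap\Uscc{SF}{S}{E}$ only tightens the membership requirement, so local admissibility w.r.t.\ the smaller set entails it w.r.t.\ the larger. The genuinely delicate content is the locality lemma combined with the minimal-differing-SCC selection; everything else is bookkeeping layered on top of Proposition~\ref{prop:adm_key} and Lemma~\ref{lem:scc_pref_combine}.
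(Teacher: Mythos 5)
Your proposal is correct and follows essentially the same route as the paper's proof: the ``$\Rightarrow$'' direction lifts a local strict improvement to a global one via Lemma~\ref{lem:scc_pref_combine} (the paper likewise glosses over the candidate-set adjustments you flag), and the ``$\Leftarrow$'' direction picks a differing SCC with no differing ancestor and uses agreement of the local contexts together with Proposition~\ref{prop:adm_key} to contradict local maximality. Your explicit ``locality lemma'' is just a more carefully spelled-out version of the paper's assertion that $\Uscc{SF}{S}{E}=\Uscc{SF}{S}{E'}$ and $\Pscc{SF}{S}{E}=\Pscc{SF}{S}{E'}$ hold for $S$ and all its ancestor SCCs, which tacitly includes equality of the restricted attack relation and the mitigated attacks.
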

	\begin{proof}
		We start with the ``$\Rightarrow$'' direction.
		We assume $E\in \pref(SF,C)$, and can apply Proposition~\ref{prop:adm_key}
		and obtain that $\forall S\in \SCCs(SF)$ we have $(E\cap S)\in \adm(\pte{S},C\cap \Uscc{SF}{S}{E},\Mscc{SF}{S}{E})$.
		Assume towards contradiction that for some $S'\in \SCCs(SF)$ there is a set $E'\in \adm(\pte{S'},C\cap \Uscc{SF}{S'}{E},\Mscc{SF}{S'}{E})$ with $E\cap S'\subset E'$. 
		However, by Lemma~\ref{lem:scc_pref_combine}
		this means the set $E\cup E'$
		is in $\adm(SF,C)$, but since $E\subset E\cup E'$ this contradicts our assumption $E\in \pref(SF,C)$.
		
		For the ``$\Leftarrow$'' direction, from the assumption
		and Proposition~\ref{prop:adm_key}
		we get
		$E\in \adm(SF,C)$.
		Towards contradiction assume there is an $E'\in \adm(SF,C)$ with $E'\supset E$.
		This means there is some SCC $S\in \SCCs(SF)$
		such that $(E\cap S)\subset (E'\cap S)$.
		W.l.o.g.\ we choose $S$ such that no ancestor SCC of $S$ has this property.
		This means that $\Uscc{SF}{S}{E}=\Uscc{SF}{S}{E'}$ and $\Pscc{SF}{S}{E}=\Pscc{SF}{S}{E'}$
		for $S$ and all of its ancestor SCCs.
		Consequently, $(E'\cap S)\subseteq \Uscc{SF}{S}{E}$, and by another application of Proposition~\ref{prop:adm_key}
		we get
		$E'\in \adm(\pte{S},C\cap \Uscc{SF}{S}{E},\Mscc{SF}{S}{E})$.
		However, this contradicts our assumption\\
		$E\in \pref(\pte{S},C\cap \Uscc{SF}{S}{E},\Mscc{SF}{S}{E})$.
	\end{proof}
	From this we get the desired result regarding preferred extensions.
	The base function is $\pref(SF,C,M)$.
	\begin{theorem}
		Preferred semantics is SCC-recursive.
	\end{theorem}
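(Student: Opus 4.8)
The plan is to take $\BF(SF,C,M)=\pref(SF,C,M)$ as the base function and to prove, by induction on the number of arguments $|A(SF)|$, that $\pref(SF,C,M)=\GF(SF,C,M)$ holds for every SETAF $SF$ and all $C\subseteq A(SF)$, $M\subseteq R(SF)$; specialising to $C=A$ and $M=\emptyset$ then yields the identity $\pref(SF)=\GF(SF,A,\emptyset)$ required by Definition~\ref{def:scc2}. In the base case $|\SCCs(SF)|=1$ the claim is immediate, since the first clause of Definition~\ref{def:scc2} gives $\GF(SF,C,M)=\BF(SF,C,M)=\pref(SF,C,M)$.

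For the inductive step I would assume $|\SCCs(SF)|>1$. Then every $S\in\SCCs(SF)$ is a proper subset of $A(SF)$, so the restriction $\projIII{SF}{(E\setminus S)^+}{\UPscc{SF}{S}{E}}$ has strictly fewer arguments than $SF$ and the induction hypothesis applies to it. By Proposition~\ref{prop:pref_key}, $E\in\pref(SF,C,M)$ holds if and only if $(E\cap S)\in\pref(\projIII{SF}{(E\setminus S)^+}{\UPscc{SF}{S}{E}},\,C\cap\Uscc{SF}{S}{E},\,\Mscc{SF}{S}{E})$ for every $S$. Applying the induction hypothesis to each restriction replaces every local $\pref$ by the corresponding $\GF$, and the resulting condition is exactly the second clause of Definition~\ref{def:scc2}. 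Hence $E\in\pref(SF,C,M)$ iff $E\in\GF(SF,C,M)$, closing the induction.

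I expect the main obstacle to be the bookkeeping around the parameter $M$: Proposition~\ref{prop:pref_key} is phrased for the top-level case $M=\emptyset$, whereas the recursion in Definition~\ref{def:scc2} feeds the mitigated-attack set $\Mscc{SF}{S}{E}$ into the deeper calls, so the induction genuinely needs the proposition (and the underlying Lemmas~\ref{lemma:scc_adm1}, \ref{lemma:scc_adm2} and~\ref{lem:scc_pref_combine}) in their general form with an arbitrary $M\subseteq R$. I would first verify that these auxiliary results carry over verbatim when the ambient semantics already carries a mitigated set, relativising every notion of acceptability to $R\setminus M$ as in Definition~\ref{def:semantics2}. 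A second, more technical point is the choice of induction measure: inducting on $|\SCCs(SF)|$ does not work directly, since restricting to a subset of a single SCC can split it into several smaller SCCs and thereby \emph{increase} the SCC count; inducting on $|A(SF)|$ avoids this, because in the multi-SCC case each restriction lives over a strictly smaller argument set.
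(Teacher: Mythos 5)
Your high-level route is the same as the paper's: the theorem rests on Proposition~\ref{prop:pref_key} with base function $\BF(SF,C,M)=\pref(SF,C,M)$, and the paper indeed states the theorem as an immediate consequence of that proposition, leaving the unfolding of the recursion implicit. Your two methodological remarks are also on point: inducting on $|A(SF)|$ rather than on $|\SCCs(SF)|$ is the right measure (a restriction can split one SCC into several), and the bookkeeping around $M$ is indeed the crux.

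However, the step you defer---``verify that these auxiliary results carry over verbatim'' for an arbitrary ambient $M$---is precisely where the plan breaks: the invariant $\pref(SF,C,M)=\GF(SF,C,M)$ for \emph{all} $M\subseteq R$, which your inductive step invokes on the restrictions, is false. Look at the second clause of Definition~\ref{def:scc2}: when $|\SCCs(SF)|>1$, the third parameter of each recursive call is the freshly computed set $\Mscc{SF}{S}{E}$, and the incoming $M$ is not consulted at all. Hence $\GF(SF,C,M)=\GF(SF,C,\emptyset)$ on every multi-SCC input, whereas $\pref(SF,C,M)$ genuinely depends on $M$. Concretely, take the AF-like SETAF $SF=(\{a,b,c\},\{(\{a\},b),(\{b\},c)\})$ with $C=A$ and $M=\{(\{a\},b)\}$. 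Then $\pref(SF,A,M)=\{\{a\}\}$, since by Definition~\ref{def:semantics2} the mitigated attack on $b$ may not be used to defend $c$; but every set $\Mscc{SF}{S}{E}$ is empty (this always holds when all tails are singletons, because the parent attack $T'=T$ witnesses $(T'\setminus T)\subseteq E$), so the recursion behaves exactly as in the plain case and yields $\GF(SF,A,M)=\{\{a,c\}\}$. So the generalized proposition you plan to prove by ``relativising every notion of acceptability to $R\setminus M$'' does not hold, and the inductive step would rest on a false hypothesis. To close the induction one must instead weaken the invariant to the triples $(SF,C,M)$ actually reachable from a top-level call $(SF,A,\emptyset)$---where each $(T,h)\in M$ stems from a parent-level $\Mscc{\cdot}{\cdot}{\cdot}$, so the missing tail portions $T'\setminus T$ lie in already-evaluated ancestor SCCs---and show that on such triples either $|\SCCs(SF)|=1$ (so $\BF$ consumes $M$) or the discarded $M$ is harmless for sets respecting the $D$/$P$/$U$ constraints; alternatively, one can prove a strengthened one-step decomposition of $\pref(SF,C,M)$ in which the inner mitigated sets suitably combine $M$ with $\Mscc{SF}{S}{E}$. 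That invariant engineering, not a verbatim relativization of Lemmas~\ref{lemma:scc_adm1}, \ref{lemma:scc_adm2} and~\ref{lem:scc_pref_combine}, is the genuinely missing content in your proposal.
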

	\subsection{Grounded Semantics}
	For the characterization of grounded semantics we exploit the fact that also in our setting the grounded is the unique minimal complete extension (see Theorem~\ref{theorem:new_semantics}).
	\begin{proposition}\label{prop:grd_key}
		Let $SF=(A,R)$ be a SETAF and let $E\subseteq A$ be a set of arguments.
		Then $\forall C\subseteq A$ it holds $E\in \grd(SF,C)$ if and only if $\forall S\in \SCCs(SF)$ it holds $(E\cap S)\in\grd(\projIII{SF}{(E\setminus S)^+}{\UPscc{SF}{S}{E}},C\cap\Uscc{SF}{S}{E},\Mscc{SF}{S}{E})$.
	\end{proposition}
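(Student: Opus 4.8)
The plan is to reduce the statement to the building blocks already established for complete semantics. By Theorem~\ref{theorem:new_semantics}(2) the grounded set $\grd(SF,C)$ is exactly the least element of $\com(SF,C)$ w.r.t.\ $\subseteq$, and the same characterization holds locally, i.e.\ the local grounded set is the least local complete set of $\pte{S}$ on the candidate set $C\cap\Uscc{SF}{S}{E}$ considering $\Mscc{SF}{S}{E}$. Since Proposition~\ref{prop:com_key} already decomposes completeness along the SCCs, it suffices to show that the \emph{least-complete} property transfers between $SF$ and the family $\{\pte{S}\mid S\in\SCCs(SF)\}$. The technical device I would use for this transfer is the ancestor-locality observation already exploited in the proof of Proposition~\ref{prop:pref_key}: the restriction $\pte{S}$ together with $\Uscc{SF}{S}{E}$, $\Pscc{SF}{S}{E}$, $\Dscc{SF}{S}{E}$ and $\Mscc{SF}{S}{E}$ depends only on the intersection of $E$ with the ancestor-closure of $S$. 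This makes an induction along a topological ordering of $\SCCs(SF)$ possible.

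For the ``$\Rightarrow$'' direction I would assume $E\in\grd(SF,C)$, so $E$ is least in $\com(SF,C)$. Proposition~\ref{prop:com_key} already yields $(E\cap S)\in\com(\pte{S},C\cap\Uscc{SF}{S}{E},\Mscc{SF}{S}{E})$ for every $S$, so by Theorem~\ref{theorem:new_semantics}(2) it only remains to verify that each $(E\cap S)$ is the \emph{least} such local complete set. Suppose not, and pick a topologically minimal SCC $S'$ that violates this, with local grounded $G'\subsetneq E\cap S'$. Because $S'$ is minimal, $E$ and the modified assignment agree on every ancestor of $S'$, so by ancestor-locality all parameters at $S'$ are unchanged; replacing $E\cap S'$ by $G'$ and re-grounding the descendant SCCs, I can reassemble, via the ``$\Leftarrow$'' reading of Proposition~\ref{prop:com_key}, a complete set that coincides with $E$ on the ancestors and is strictly smaller on $S'$, hence a complete set properly contained in $E$, contradicting minimality of $E$.

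For the ``$\Leftarrow$'' direction I would assume every $(E\cap S)$ is locally grounded; Proposition~\ref{prop:com_key} then gives $E\in\com(SF,C)$, and I must show $E$ is least. Taking an arbitrary $E'\in\com(SF,C)$, I would prove $E\subseteq E'$ by induction on the topological order of SCCs. In an initial SCC $S$ the parameters are trivial, $\Dscc{SF}{S}{E}=\Pscc{SF}{S}{E}=\emptyset$, so the two local frameworks coincide and leastness of the local grounded $(E\cap S)$ among local complete sets forces $(E\cap S)\subseteq(E'\cap S)$. For the inductive step, $E\subseteq E'$ on the ancestors of $S$ together with monotonicity of defeat/undefeat under enlarging the ancestor part, an additionally defeated ancestor argument can only shrink the attacks reaching $S$, shows that any local complete set of $E'$ at $S$ is still complete for the $E$-parameters; leastness of $(E\cap S)$ then yields $(E\cap S)\subseteq(E'\cap S)$, completing the induction.

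The main obstacle I anticipate is exactly this minimality transfer: unlike Proposition~\ref{prop:com_key}, which is a clean pointwise equivalence, here the local frameworks themselves move with $E$, so I cannot simply combine the local grounded sets and read off the global grounded set. I would therefore isolate two auxiliary facts, namely (i) ancestor-locality of $\pte{S}$, $\Uscc{SF}{S}{E}$, $\Pscc{SF}{S}{E}$ and $\Mscc{SF}{S}{E}$, and (ii) monotonicity of these parameters when the ancestor part of $E$ is enlarged, and then derive both directions by the topological induction above. The subtlety to watch is that the mitigated-attack set $\Mscc{SF}{S}{E}$ must be tracked carefully, so that a non-mitigated local defence always lifts to a genuine global defence.
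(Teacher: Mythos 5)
Your ``$\Rightarrow$'' direction follows the paper's proof essentially verbatim: pick a topologically minimal SCC $S'$ where $(E\cap S')$ fails to be the local grounded set, splice in the local grounded $G'$, keep $E$ on all non-descendants, re-ground the descendants along a topological order, and reassemble a complete set $E''$ via Proposition~\ref{prop:com_key}. One slip: $E''$ is \emph{not} ``properly contained in $E$''---the re-grounded descendant components may pick up arguments outside $E$, since their parameters have changed. What the construction actually gives is that $E''$ coincides with $E$ on the ancestors of $S'$ and satisfies $E''\cap S' = G'\subsetneq E\cap S'$, hence $E\not\subseteq E''$; this already contradicts Theorem~\ref{theorem:new_semantics}(2), by which the grounded set is the \emph{least} complete set (not merely a minimal one). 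The paper states the contradiction in exactly this weaker but correct form.

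The ``$\Leftarrow$'' direction, however, contains a genuine gap---precisely at the ``minimality transfer'' you flagged. You compare $E$ with an \emph{arbitrary} complete $E'$ and, knowing only $E\subseteq E'$ on the ancestors of $S$, invoke the claim that any local complete set for the $E'$-parameters is still complete for the $E$-parameters. This claim is false. Take the AF with attacks $a\leftrightarrow b$ and $(b,c)$, let $C=A$, $E=\emptyset$ (the grounded set) and $E'=\{a,c\}$ (complete). At $S=\{c\}$ we have $\Pscc{SF}{S}{E}=\{c\}$, so $\Uscc{SF}{S}{E}=\emptyset$ and the $E$-candidate set $C\cap\Uscc{SF}{S}{E}$ is empty; but $\Uscc{SF}{S}{E'}=\{c\}$ and $E'\cap S=\{c\}$ is locally complete (indeed grounded) for the $E'$-parameters, while it is not even a subset of the $E$-candidate set, hence by Definition~\ref{def:semantics2} not complete considering the $E$-parameters. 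Enlarging the ancestor part of $E'$ does not merely ``shrink the attacks reaching $S$'': it also reinstates arguments that are provisionally defeated under $E$, i.e.\ the candidate set $\Uscc{SF}{S}{\cdot}$ is not monotone in the direction your lemma needs. The paper sidesteps this entirely: instead of an arbitrary $E'$, it compares $E$ with the global grounded extension $E'\in\grd(SF,C)$ itself, which satisfies $E'\subseteq E$ by leastness since $E$ is complete, and takes the topologically first SCC $S$ with $(E'\cap S)\subsetneq(E\cap S)$. There $E$ and $E'$ \emph{agree exactly} on all ancestors, so $\Uscc{SF}{S}{E}=\Uscc{SF}{S}{E'}$ and $\Pscc{SF}{S}{E}=\Pscc{SF}{S}{E'}$, the two local problems are literally identical, and $(E'\cap S)$ being locally complete in the same problem contradicts that $(E\cap S)$ is the local grounded, i.e.\ least local complete, set. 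Your induction can be repaired by exactly this move---exact agreement up to the first disagreement rather than one-sided containment---after which the problematic monotonicity lemma is never needed.
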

	\begin{proof}
		We start with the ``$\Rightarrow$'' direction.
		We assume $E\in \grd(SF,C)$, and can apply Proposition~\ref{prop:com_key}
		and obtain that $\forall S\in \SCCs(SF)$ we have $(E\cap S)\in \com(\pte{S},\Uscc{SF}{S}{E}\cap C,\Mscc{SF}{S}{E})$.
		Assume towards contradiction that for some SCC $S'$ $(E\cap S')$ is not minimal among the locally complete extensions.
		W.l.o.g.\ we choose $S'$ such that no ancestor SCC of $S'$ has this property.
		Let $E'\in \grd(\pte{S},\Uscc{SF}{S}{E}\cap C,\Mscc{SF}{S}{E})$.
		We can construct $E''$ such that for the ancestor SCCs of $S'$ the new set
		$E''$ coincides with $E$, for $S'$ it coincides with $E'$, and for the remaining SCCs $S$ is determined by $\grd(\projIII{SF}{(E'\setminus S)^+}{\Uscc{SF}{S}{E'}},\UPscc{SF}{S}{E'}\cap C,\Mscc{SF}{S}{E'})$ (see Section~\ref{sec:incremental} for details).
		But then $E''\in \com(SF,C)$ by Proposition~\ref{prop:com_key}
		and $E\not\subset E''$, a contradiction to our assumption $E\in \grd(SF,C)$.
		For the $\Leftarrow$ direction we get $E\in \com(SF,C)$ by Proposition~\ref{prop:com_key}.
		Towards contradiction assume there is some $E'\subset E$ with
		$E'\in \grd(SF,C)$.
		This means there is an SCC $S$ where $(E'\cap S)\subset (E\cap S)$.
		W.l.o.g., we choose $S$ such that no ancestor SCC of $S$ has this property.
		This means that $\Uscc{SF}{S}{E}=\Uscc{SF}{S}{E'}$ and $\Pscc{SF}{S}{E}=\Pscc{SF}{S}{E'}$
		for $S$ and its ancestor SCCs.
		Consequently, $(E'\cap S)\in \com(\pte{S},\UPscc{SF}{S}{E}\cap C,\Mscc{SF}{S}{E})$.
		However, this contradicts our assumption $E\in \grd(\pte{S},\UPscc{SF}{S}{E}\cap C,\Mscc{SF}{S}{E})$, since $(E'\cap S)\subset (E\cap S)$.
	\end{proof}
	\begin{theorem}
		Grounded semantics is SCC-recursive.
	\end{theorem}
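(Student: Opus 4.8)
The plan is to prove, by induction on the number of arguments $|A(SF)|$, the stronger claim that $\grd(SF,C,M)=\GF(SF,C,M)$ holds for every SETAF $SF$ and all parameters $C\subseteq A(SF)$, $M\subseteq R(SF)$. The theorem is then the special case $C=A$, $M=\emptyset$, since $\grd(SF)=\grd(SF,A,\emptyset)$ by the convention fixing the omitted parameters. I deliberately choose $|A(SF)|$ rather than $|\SCCs(SF)|$ as the induction measure, because restricting to (a subset of) a single SCC may fragment it into several smaller components, so the number of SCCs need not decrease; the number of arguments, on the other hand, strictly decreases.

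First I would dispatch the single-SCC case. If $|\SCCs(SF)|=1$, then by Definition~\ref{def:scc2} we have $\GF(SF,C,M)=\BF(SF,C,M)$, and the base function chosen for grounded semantics is exactly $\grd(SF,C,M)$, so the two sides coincide immediately; this also subsumes the smallest instances and hence grounds the induction.

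For the inductive step, suppose $|\SCCs(SF)|\geq 2$. Then every SCC $S$ satisfies $S\subsetneq A$, so each restriction $\projIII{SF}{(E\setminus S)^+}{\UPscc{SF}{S}{E}}$ lives on a subset of $S$ and thus has strictly fewer arguments than $SF$; the induction hypothesis applies to it and yields $\grd(\projIII{SF}{(E\setminus S)^+}{\UPscc{SF}{S}{E}},C',M')=\GF(\projIII{SF}{(E\setminus S)^+}{\UPscc{SF}{S}{E}},C',M')$ for all admissible $C',M'$. I would then chain equivalences for $E\subseteq A$: by Proposition~\ref{prop:grd_key}, $E\in\grd(SF,C,M)$ iff $(E\cap S)\in\grd(\projIII{SF}{(E\setminus S)^+}{\UPscc{SF}{S}{E}},C\cap\Uscc{SF}{S}{E},\Mscc{SF}{S}{E})$ for every $S\in\SCCs(SF)$; by the induction hypothesis each such local membership is equivalent to $(E\cap S)\in\GF(\projIII{SF}{(E\setminus S)^+}{\UPscc{SF}{S}{E}},\Uscc{SF}{S}{E}\cap C,\Mscc{SF}{S}{E})$; and this conjunction over all $S$ is, by Definition~\ref{def:scc2}, precisely $E\in\GF(SF,C,M)$. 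This closes the induction.

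The main obstacle I anticipate is the bookkeeping of the parameters $C$ and $M$ through the recursion: Proposition~\ref{prop:grd_key} is phrased at the top level, where $M=\emptyset$, whereas the very first unfolding of $\GF$ introduces the generally nonempty mitigated-attack set $\Mscc{SF}{S}{E}$ into the inner calls. I would therefore first verify that the proof of Proposition~\ref{prop:grd_key}---which rests on Proposition~\ref{prop:com_key} together with the characterization of $\grd$ as the unique minimal complete set from Theorem~\ref{theorem:new_semantics}---nowhere exploits $M=\emptyset$, and hence establishes the one-step characterization for arbitrary $M$. Securing this generality is exactly what licenses applying both the proposition and the induction hypothesis at every level of the recursion; once it is in place, the remaining argument is a routine two-directional unfolding with no further SETAF-specific complications beyond those already absorbed into the key propositions.
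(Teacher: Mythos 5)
Your overall skeleton---induction on $|A(SF)|$, base case via $\BF(SF,C,M)=\grd(SF,C,M)$, inductive step via the one-step decomposition of Proposition~\ref{prop:grd_key}---is exactly the route the paper intends (the paper proves Proposition~\ref{prop:grd_key} from Proposition~\ref{prop:com_key} and the least-complete characterization in Theorem~\ref{theorem:new_semantics}, and lets the theorem follow). However, your strengthened invariant $\grd(SF,C,M)=\GF(SF,C,M)$ \emph{for all} $M\subseteq R$ is false, and this is a genuine gap rather than bookkeeping. Inspect the multi-SCC branch of Definition~\ref{def:scc2}: the incoming parameter $M$ is discarded there, and the recursive calls receive the freshly computed set $\Mscc{SF}{S}{E}$, which depends only on $SF$ and $E$. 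Hence $\GF(SF,C,M)$ does not depend on $M$ at all once $|\SCCs(SF)|>1$, whereas $\grd(SF,C,M)$ certainly does. Concretely, take $A=\{a,b,c\}$, $R=\{(a,b),(b,c)\}$, $C=A$, $M=\{(a,b)\}$. Then $\grd(SF,C,M)=\{\{a\}\}$, since the mitigated attack $(a,b)$ may not be used to defend $c$, but $\GF(SF,C,M)=\GF(SF,C,\emptyset)=\{\{a,c\}\}$. Note also that the recomputed mitigated sets can never recover an inherited $M$: for any attack $(T,h)$ present in full in the current attack relation, the instance $T'=T$ gives $(T'\setminus T)=\emptyset\subseteq E$, so such an attack is always classified as non-mitigated, whatever the outer level declared.

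This also means your fallback plan---verifying that the proof of Proposition~\ref{prop:grd_key} ``nowhere exploits $M=\emptyset$'' and then applying it with arbitrary top-level $M$---cannot succeed: the statement you would need is simply not true in the form matching Definition~\ref{def:scc2}, because its right-hand side is $M$-independent while its left-hand side is not. The reliance on genuine (unparameterized) semantics at the top level is real in the underlying proofs; for instance, in Lemma~\ref{lemma:scc_adm1} the step ``$X\subseteq E$ implies $(X',t)\notin\Mscc{SF}{S}{E}$'' uses an unconstrained global counter-attack, and with a nonempty top-level $M_0$ one would additionally need $(X,t)\notin M_0$, which neither follows nor is subsumed by the locally recomputed $\Mscc{SF}{S}{E}$. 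A correct induction must therefore not quantify over arbitrary $(C,M)$ but restrict the invariant to the parameter pairs actually generated by the recursion from a top-level call with $M=\emptyset$---in effect carrying along the ambient framework from which $M$ arises, in the spirit of Lemma~\ref{lemma:reduct_restriction}, so that the mitigation information remains reconstructible at every level. Securing that relativized invariant is the substantive content missing from your proposal (and left implicit in the paper); without it, the chain of equivalences in your inductive step breaks at the very first application of the induction hypothesis.
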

	
	\subsection{Directionality}
	Following~\citep{BaroniG07}:
	\begin{paragraph}{Proposition~\ref{prop:directionality}}
		Let $\sigma$ be a semantics such that for all SETAFs $SF$ and all $C\subseteq A(SF)$, $M\subseteq R(SF)$ it holds $\BF(SF,C,M)\not=\emptyset$.
		If $\sigma$ satisfies SCC-recursiveness then it satisfies directionality.
	\end{paragraph}
	\begin{proof}
		We use the fact that for an uninfluenced set $U$ any SCC $S$ with $S\cap U\not=\emptyset$ has to be contained in $U$, as well as all ancestor SCCs of $S$.
		Let $\mathcal{S}$ be the set of SCCs $S$ with $S\subseteq U$. 
		Considering the SCC-recursive characterization, this yields
		$\sigma(\projIII{SF}{\emptyset}{U})= \{E\subseteq U \mid \forall S\in \mathcal{S}:(E\cap S)\in \GF(\projIII{SF}{\emptyset}{\UPscc{SF}{S}{E}},\!\Uscc{SF}{S}{E},\allowbreak\Mscc{SF}{S}{E}) \}$.
		We have to show that $\sigma(\projIII{SF}{\emptyset}{U})=\{E\cap U \mid E\in \sigma(SF)\}$.
		
		We get the ``$\subseteq$'' direction from
		the fact that $\Uscc{SF}{S}{E}=\Uscc{SF}{S}{E\cap U}$
		and $\Pscc{SF}{S}{E}=\Pscc{SF}{S}{E\cap U}$
		for all $S\in \mathcal{S}$.
		The ``$\supseteq$'' direction is immediate: as we assume that $\BF(SF,C,M)$ always yields at least one extension, we can extend any set $(E\cap U)$ according to the SCC-recursive scheme (see Section~\ref{sec:incremental} for details).
	\end{proof}
	
\end{document}